\documentclass{article} 
\usepackage{iclr2027_conference,times}

\usepackage{hyperref}
\usepackage{url}
\usepackage{graphicx}
\usepackage{booktabs}
\usepackage{amsmath}
\usepackage{array} 
\usepackage{wrapfig}
\newtheorem{theorem}{Theorem}
\newtheorem{proof}{Proof}
\usepackage{enumitem}
\newtheorem{lemma}{Lemma}
\newtheorem{proposition}[theorem]{Proposition}
\usepackage{algorithm}
\usepackage{algorithmic}
\usepackage{tabularx}

\usepackage{subcaption}
\usepackage{wrapfig}
\usepackage{booktabs}       
\usepackage{amsfonts}       
\usepackage{nicefrac}       
\usepackage{microtype}      
\usepackage{xcolor}         
\usepackage{booktabs}
\usepackage{multirow}
\usepackage{graphicx}
\usepackage[most]{tcolorbox}
\def\method{GeoF} 
\title{Learning Propagation Geometry from Message-Passing Feedback}

\author{Yingxu Wang\textsuperscript{1}, Kunyu Zhang\textsuperscript{2}, Xinwang Liu\textsuperscript{3}, Mengzhu Wang\textsuperscript{4},
Siyang Gao\textsuperscript{5}, \\
\textbf{Chang Tang\textsuperscript{6}, Nan Yin\textsuperscript{2}}
\\
\textsuperscript{1} The Chinese University of Hong Kong 
\textsuperscript{2} The Education University of Hong Kong \\
\textsuperscript{3} National University of Defense Technology \;\;
\textsuperscript{4} Hebei University of Technology \\
\textsuperscript{5} City University of Hong Kong \;\;
\textsuperscript{6} Huazhong University of Science and Technology \\
\texttt{\{yingxv.wang,dreamkily,yinnan8911\}@gmail.com} \\
\texttt{kuzh330@outlook.com, xinwangliu@nudt.edu.cn} \\
\texttt{siyangao@cityu.edu.hk, tangchang@hust.edu.cn} }
\begin{document}

\iclrfinalcopy
\maketitle

\begin{abstract}
Learning local geometry enables graph neural networks (GNNs) to adapt how they compare and integrate neighborhood information. However, estimating geometry from aggregated representations can overlook variation among individual messages and dependencies across feature dimensions. We propose \method{}, a recurrent framework that jointly evolves node features and propagation geometry through message-passing feedback. Each node maintains a local symmetric positive-definite geometry, initialized from a structure-aware prototype atlas and parameterized in block log-triangular coordinates. At each step, the geometry determines neighborhood weights, while triangular frame transport maps transformed source messages into the target node's local coordinates before aggregation. Weighted second-order statistics of residuals between aligned messages and the transformed target state capture directional variation and within-block dependencies, yielding a geometric update target. A shared controller learns complementary corrections through task supervision. A bounded log-triangular update combines these corrections, the target, and the previous geometric state while preserving positive definiteness. The geometry governs subsequent propagation, closing the feedback loop. With parameters shared across recurrent steps, task-specific readouts support node classification, link prediction, and graph classification. Experiments on benchmark datasets show that \method{} consistently outperforms state-of-the-art GNN baselines.
\end{abstract}

\section{Introduction}
Graph neural networks (GNNs) learn representations by repeatedly transforming and aggregating neighborhood information~\citep{gilmer2017neural,xu2019how}. Convolution, attention, and multi-hop aggregation organize this computation through different propagation operators~\citep{kipf2017semi,velickovic2018graph,abu2019mixhop}. Beyond selecting and weighting neighbors, message passing involves comparing features and transforming messages before they are combined. Geometry makes these operations explicit: local metrics determine how feature directions contribute to comparisons, while coordinate mappings specify how messages are expressed across local spaces~\citep{chami2019hyperbolic,bodnar2022neural,wang2025protomol}. Learning propagation geometry provides a means of adapting neighborhood interactions to graph structure and the prediction task.

Existing geometric GNNs adapt propagation at two levels: the choice of representation space and the local rules for comparing and transforming messages. At the first level, curvature learning and node-specific space selection address the mismatch between prescribed geometric assumptions and graph structure by adapting the curvature or geometric family to the data~\citep{chami2019hyperbolic,fu2021ace,lee2023node}. At the second level, feature-dependent approaches refine how neighboring information is compared and transformed by learning local maps for message transformation~\citep{bodnar2022neural} and node-wise metrics for feature comparison~\citep{wang2026adaptive}. Whereas the first level concerns which geometry is appropriate, the second concerns how local interactions should adapt to the representations being processed. This makes the information used to infer those local rules consequential. ARGNN estimates its metrics from target features and neighborhood means~\citep{wang2026adaptive}, compressing the neighborhood into a summary that can conceal relationships across feature dimensions. Two neighborhoods can have identical means but different patterns of joint variation. Given identical target features, the metric estimator assigns them the same geometry, even when these differences matter for prediction. Examining individual incoming messages relative to the target state under the current geometry can expose variation obscured by the mean, providing additional information for subsequent geometric updates. This motivates our central question: \textbf{\textit{Can feedback from message passing improve the learning of local propagation geometry for prediction?}}

Leveraging this feedback presents three interconnected challenges. \textit{First, message differences require a consistent geometric interpretation.} When nodes use different local coordinate frames, direct comparisons can conflate coordinate mismatch with variation in the messages themselves. \textit{Second, feedback needs to capture how individual messages vary, rather than only how their average changes.} Opposing contributions can cancel during aggregation, obscuring directional variation and dependencies across feature dimensions. \textit{Third, residual statistics describe how messages vary, but not which directions of variation matter for prediction.} Directions with large residual energy need not be those most relevant to the downstream task. Geometric updates therefore need to incorporate task supervision while remaining well-defined and bounded across repeated propagation steps.

To address these challenges, we propose \method{}, a recurrent framework that jointly evolves node features and local geometry through message-passing feedback. Each node maintains a feature state and a symmetric positive-definite geometry, initialized from a structure-aware prototype atlas and parameterized in block log-triangular coordinates~\citep{lin2019riemannian}. First, to resolve coordinate mismatch, \textit{triangular frame transport} maps transformed source messages into the target node's local coordinates, establishing a common reference for comparison and geometry-weighted aggregation. Second, to capture variation obscured by averaging, \textit{second-order residual feedback} accumulates weighted outer products of residuals between individual aligned messages and the transformed target state. These statistics capture directional variation and within-block dependencies, yielding a geometric update target. Third, \textit{task-guided geometry evolution} complements this target with corrections learned by a shared controller through end-to-end task supervision. A bounded log-triangular update combines the target, corrections, and previous geometric state, preserving positive definiteness across recurrent steps. The updated geometry determines neighborhood weights and frame transformations for subsequent propagation, closing the feedback loop. All recurrent parameters are shared across steps, and task-specific readouts support node classification, link prediction, and graph classification.

Our contributions are summarized as follows:
(1) We investigate message-passing feedback as an information source for local geometry learning, formulating propagation geometry as a persistent node-wise state that evolves jointly with node features.
(2) We develop a closed-loop mechanism in which local geometry governs neighborhood weighting and frame-aligned message propagation, while weighted second-order residual statistics and task-supervised corrections drive its evolution.
(3) We evaluate \method{} on node classification, link prediction, and graph classification, where it demonstrates consistent gains over general, manifold-based, and adaptive GNN baselines.
\section{Related Work}

\textbf{Operator and Structure Adaptation in GNNs.}
GNN adaptation has been explored through propagation operator design and graph structure learning. Higher-order neighborhood mixing integrates information across hop distances, while adaptive propagation adjusts the contributions of different propagation scales~\citep{abu2019mixhop,xu2018representation,gasteiger2018predict,liu2020towards}. To adapt connectivity to downstream tasks, graph structure learning constructs meta-path graphs or iteratively refines graph structure together with node representations~\citep{yun2019graph,chen2020iterative,wang2026sgac}. Related approaches refine noisy adjacency matrices for robust prediction~\citep{jin2020graph,yao2023improving} or learn latent edges that support message passing beyond observed connectivity~\citep{wu2022nodeformer,wang2026usbd}. These studies adapt how messages are aggregated and which nodes communicate. \method{} learns node-wise propagation geometry that explicitly governs message comparison and transport.

\textbf{Geometric Modeling and Adaptive Propagation Spaces.}
Geometric GNNs introduce inductive biases through representation spaces and local maps for message passing. Hyperbolic methods exploit negative curvature to represent hierarchical and scale-free structures~\citep{liu2019hyperbolic,chami2019hyperbolic}, while sheaf-based diffusion learns local linear maps that relate node spaces for feature comparison and propagation~\citep{bodnar2022neural}. Geometric adaptation further includes learning graph-dependent curvature~\citep{fu2021ace} and selecting between Euclidean and hyperbolic spaces at the node level~\citep{lee2023node}. Adaptive Riemannian models learn anisotropic node-wise metric tensors from node features and neighborhood means, adapting geometry to individual nodes~\citep{wang2026adaptive}. \method{} maintains local geometry as a persistent state across propagation steps and updates it through second-order statistics of aligned message residuals and task-guided corrections.
\section{Methodology}

\textbf{Problem Setup.}
Let $G=(V,E,X)$ be an undirected attributed graph with $n=|V|$ nodes, adjacency matrix $A$, and node features $X=[x_1,\ldots,x_n]^\top\in\mathbb{R}^{n\times F_0}$. Each node has a structural signature $u_i\in\mathbb{R}^{q}$, and $U=[u_1,\ldots,u_n]^\top$ collects these signatures. We construct $u_i$ from normalized degree and random-walk return probabilities and keep it fixed during propagation. Let $\mathcal{N}(i)$ denote the neighbors of node $i$ and $\widetilde{\mathcal{N}}(i)=\mathcal{N}(i)\cup\{i\}$ include the self-loop. We consider three prediction tasks: node classification predicts node labels $y_i\in\{1,\ldots,C\}$, link prediction estimates whether a candidate node pair $(u,v)$ is connected, and graph classification predicts a graph label $Y_G\in\{1,\ldots,C\}$. Here, $C$ denotes the number of classes for the corresponding classification task.


\textbf{Overview.}
\method{} couples node features and local geometry through three recurrent components. We initialize feature states and construct local geometries from a structure-aware prototype atlas in block log-triangular coordinates. The recurrent computation then comprises:
(i) \textit{Triangular Frame Transport}, which uses the current geometry to weight neighbors and align transformed messages with the target node's local frame for aggregation and feature updates;
(ii) \textit{Second-Order Residual Feedback}, which summarizes aligned message residuals to capture directional variation and within-block dependencies, producing a target metric on the node’s local coordinates;
and (iii) \textit{Task-Guided Geometry Evolution}, which combines this target with task-supervised corrections and the previous geometric state to update the geometry.
Together, these components form a closed feedback loop, with parameters shared across steps. Task-specific readouts produce node, link, and graph predictions, and the corresponding learning objectives train the framework end to end.

\subsection{Geometric States and Initialization}
\label{sec:geometric_states}

Learning propagation geometry through recurrent feedback requires a compact parameterization that preserves positive definiteness under repeated updates and an initialization informed by local structure. Inspired by log-Cholesky parameterizations~\citep{lin2019riemannian}, we use block log-triangular coordinates and a shared prototype atlas. The former models within-block interactions, while the latter uses structural signatures to initialize node-specific geometries.

At step $l$, node $i$ maintains a local feature state $\xi_i^l\in\mathbb{R}^{d}$ and geometric coordinates $Z_i^l=\{z_{i,b}^l\}_{b=1}^{B}$. We partition the feature space into $B$ blocks of size $m$, with $d=Bm$. Each block is parameterized as
$z_{i,b}^l=[a_{i,b}^l, \ell_{i,b}^l]^\top\in\mathbb{R}^{m(m+1)/2},$
where $a_{i,b}^l\in\mathbb{R}^{m(m-1)/2}$ stores strictly lower-triangular entries and $\ell_{i,b}^l\in\mathbb{R}^{m}$ stores log-diagonal scales. The corresponding triangular frame and local metric are:
\begin{equation}
    L_{i,b}^l
    =
    \operatorname{mat}_{\mathrm{sl}}(a_{i,b}^l)
    +
    \operatorname{Diag}\!\left(\exp(\ell_{i,b}^l)\right),
    \qquad
    g_{i,b}^l
    =
    (L_{i,b}^l)^\top L_{i,b}^l,
    \label{eq:block_geometry}
\end{equation}
where $\operatorname{mat}_{\mathrm{sl}}$ reconstructs a strictly lower-triangular matrix and the exponential acts elementwise.
The positive diagonal makes $L_{i,b}^l$ invertible and $g_{i,b}^l$ positive definite. $L_{i,b}^l$ maps local coordinates into a shared environment, where $g_{i,b}^l$ measures lengths with $\|L_{i,b}^l v\|_2^2=v^\top g_{i,b}^l v$.
Combining the blocks:
\begin{equation}
    L_i^l=\bigoplus_{b=1}^{B}L_{i,b}^l,
    \qquad
    g_i^l=\bigoplus_{b=1}^{B}g_{i,b}^l
    =(L_i^l)^\top L_i^l,
    \label{eq:node_geometry}
\end{equation}
where $\bigoplus$ denotes the block-diagonal direct sum. This representation uses $d(m+1)/2$ geometric coordinates per node while retaining off-diagonal interactions within each block.
To bound coordinate magnitudes during initialization and recurrent updates, we define the admissible set
    $\mathcal{Z}
    =
    [-a_{\max},a_{\max}]^{m(m-1)/2}
    \times
    [\ell_{\min},\ell_{\max}]^{m},$
where $a_{\max}>0$ and $\ell_{\min}<\ell_{\max}$ are finite bounds. We denote coordinatewise clipping to this set by $\Pi_{\mathcal{Z}}$. This constraint bounds the geometric coordinates, while positive definiteness follows from the log-triangular reconstruction.

Before propagation feedback becomes available, we initialize geometry from the structural signatures $u_i$. Let $\{z_{k,b}^{\mathrm{proto}}\}_{k=1}^{K}$ denote $K$ learnable prototype coordinates for block $b$. 
A shared gate assigns each node a distribution over the prototypes, yielding
\begin{equation}
        \xi_i^0
        =W_{\mathrm{in}}x_i,
        \qquad
        \alpha_i
        =\operatorname{softmax}(W_gu_i+b_g),\qquad
        z_{i,b}^0
        =
        \Pi_{\mathcal{Z}}
        \left(
            \sum\nolimits_{k=1}^{K}
            \alpha_{ik}z_{k,b}^{\mathrm{proto}}
        \right),
    \label{eq:prototype_initialization}
\end{equation}
where $W_{\mathrm{in}}$, $W_g$, and $b_g$ are learnable parameters. The atlas shares geometric templates across nodes, while the structural gate determines their initial mixtures. These initial states provide the starting point for frame-aligned propagation and subsequent feedback-driven geometric updates.

\tcolorboxenvironment{lemma}{
  breakable, colback=gray!5, colframe=gray!50!black, boxrule=0.4pt, arc=3pt,
  left=3pt, right=3pt, top=1pt, bottom=1pt,
  grow to left by=0.4pt, grow to right by=0.4pt, before skip=4pt, after skip=4pt
}
\tcolorboxenvironment{proposition}{
  breakable, colback=blue!5, colframe=blue!50!black, boxrule=0.4pt, arc=3pt,
  left=3pt, right=3pt, top=1pt, bottom=1pt,
  grow to left by=0.4pt, grow to right by=0.4pt, before skip=4pt, after skip=4pt
}
\begin{lemma}[Well-Posed Geometric Parameterization]\label{lem:wellposed}
Let $m\ge 1$ and $z,z'\in\mathcal Z$. The frame $L(z)$ is lower triangular with
$\det L(z)=\exp(\sum_{t}\ell_t)>0$, so it is invertible and $g(z)\in\mathrm{SPD}(m)$. There are
constants $\sigma_+\ge\sigma_->0$ and $\kappa_L>0$, depending only on $m$, $a_{\max}$,
$\ell_{\min}$, and $\ell_{\max}$, such that
\[
\|L(z)\|_2\le\sigma_+,\qquad \|L(z)^{-1}\|_2\le\sigma_-^{-1},\qquad
\sigma_-^{2}\,I_m\ \preceq\ g(z)\ \preceq\ \sigma_+^{2}\,I_m ,
\]
and the reconstruction is Lipschitz: $\|L(z)-L(z')\|_F\le\kappa_L\|z-z'\|_2$ and
$\|L(z)^{-1}-L(z')^{-1}\|_2\le\sigma_-^{-2}\kappa_L\|z-z'\|_2$. All statements carry over to
the block-diagonal frames $L_i=\bigoplus_{b}L(z_{i,b})$ and $g_i=L_i^{\top}L_i$, with
$\|z-z'\|_2$ replaced by $\|Z_i-Z_i'\|_F:=(\sum_{b}\|z_{i,b}-z'_{i,b}\|_2^{2})^{1/2}$.
\end{lemma}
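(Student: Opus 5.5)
The argument is elementary linear algebra on a compact coordinate box, and I will treat a single block first. By construction $L(z)=\operatorname{mat}_{\mathrm{sl}}(a)+\operatorname{Diag}(\exp(\ell))$ is lower triangular. Its determinant is therefore the product of the diagonal entries, $\prod_t e^{\ell_t}=\exp(\sum_t\ell_t)>0$, so $L(z)$ is invertible. For $v\neq 0$ we have $v^\top g(z)v=\|L(z)v\|_2^2>0$, which gives $g(z)\in\mathrm{SPD}(m)$.

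\textbf{Upper frame bound.} I bound the spectral norm by the Frobenius norm. Every diagonal entry is at most $e^{\ell_{\max}}$ and every off-diagonal entry is at most $a_{\max}$, so
\[
\|L(z)\|_2\le\|L(z)\|_F\le\bigl(m e^{2\ell_{\max}}+\tfrac{m(m-1)}{2}a_{\max}^2\bigr)^{1/2}=:\sigma_+ .
\]

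\textbf{Inverse bound.} This is the only step needing care. I write $L=D(I+N)$ with $D=\operatorname{Diag}(e^{\ell})$ and $N=D^{-1}\operatorname{mat}_{\mathrm{sl}}(a)$. The matrix $N$ is strictly lower triangular, hence nilpotent with $N^m=0$, and its entries are bounded by $c:=a_{\max}e^{-\ell_{\min}}$. The Neumann series then terminates:
\[
L^{-1}=\Bigl(\sum_{k=0}^{m-1}(-N)^k\Bigr)D^{-1},\qquad \|L^{-1}\|_2\le e^{-\ell_{\min}}\sum_{k=0}^{m-1}\|N\|_2^k .
\]
Here $\|N\|_2\le\|N\|_F\le c\sqrt{m(m-1)/2}$. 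I define $\sigma_-^{-1}$ to be this right-hand side. This yields $\sigma_->0$ depending only on $m,a_{\max},\ell_{\min},\ell_{\max}$.

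A cleaner but non-explicit alternative is available. The map $z\mapsto\|L(z)^{-1}\|_2$ is continuous on the compact set $\mathcal Z$ because every $L(z)$ is invertible, so it attains a finite maximum. I would state the explicit constant and mention the compactness argument as a check.

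\textbf{Metric sandwich.} Since $v^\top g v=\|Lv\|_2^2$, we get $\|Lv\|_2\le\sigma_+\|v\|_2$ and $\|v\|_2=\|L^{-1}Lv\|_2\le\sigma_-^{-1}\|Lv\|_2$. Together these give $\sigma_-^2 I\preceq g\preceq\sigma_+^2 I$. We also need $\sigma_+\ge\sigma_-$. This follows from $1=\|I\|_2\le\|L\|_2\|L^{-1}\|_2\le\sigma_+\sigma_-^{-1}$.

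\textbf{Lipschitz estimates.} The off-diagonal coordinates enter $L$ linearly. The diagonal entries change through $|e^{\ell_t}-e^{\ell'_t}|\le e^{\ell_{\max}}|\ell_t-\ell'_t|$, by the mean value theorem applied on $[\ell_{\min},\ell_{\max}]$. Summing squares gives
\[
\|L(z)-L(z')\|_F\le\kappa_L\|z-z'\|_2,\qquad \kappa_L:=\max(1,e^{\ell_{\max}}).
\]
For the inverse I use the resolvent identity $L^{-1}-L'^{-1}=L^{-1}(L'-L)L'^{-1}$. Combined with the inverse bound and $\|\cdot\|_2\le\|\cdot\|_F$, this yields the constant $\sigma_-^{-2}\kappa_L$.

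\textbf{Block-diagonal case.} The block-diagonal extension follows from three facts:
\begin{itemize}
\item the determinant of a block-diagonal matrix is the product of the block determinants;
\item the spectral norm of a block-diagonal matrix, and likewise its extreme eigenvalues, is the maximum over blocks, so the same constants $\sigma_\pm$ apply;
\item the Frobenius norm squared is additive over blocks, giving $\|L_i-L_i'\|_F^2\le\kappa_L^2\sum_b\|z_{i,b}-z'_{i,b}\|_2^2$.
\end{itemize}
For the inverse, $L_i^{-1}$ is block diagonal with blocks $L(z_{i,b})^{-1}$. The spectral norm of the difference is the maximum over blocks of the blockwise bounds, and this is at most $\sigma_-^{-2}\kappa_L\|Z_i-Z_i'\|_F$.
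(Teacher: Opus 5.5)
Your proof is correct and follows the paper's route. It uses the same factorization $L=D(I+\tilde N)$ with a terminating Neumann series for $\|L^{-1}\|_2$, the mean-value bound on $e^{\ell}$ for the Lipschitz constant, the resolvent identity for the inverses, and blockwise norm facts for the direct sum.

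The differences are cosmetic:
\begin{itemize}
\item You bound $\|L\|_2$ by the full Frobenius norm, whereas the paper uses $\|D\|_2+\|N\|_F$.
\item You use only the Neumann bound; the paper takes its minimum with a forward-substitution bound.
\item Summing squares gives you the slightly sharper $\kappa_L=\max(1,e^{\ell_{\max}})$, against the paper's $1+e^{\ell_{\max}}$.
\item You deduce $\sigma_+\ge\sigma_-$ from $\|L\|_2\|L^{-1}\|_2\ge 1$, whereas the paper reads it off the explicit formulas.
\end{itemize}
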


Lemma~\ref{lem:wellposed} makes the recurrence well-posed: the triangular solves of Eq.~\eqref{eq:cholesky_frame_transport}
and the projected update of Eq.~\eqref{eq:task_guided_geometry_update} are defined at every admissible state, and the bounds
supply the constants for Propositions~\ref{prop:gauge} and~\ref{prop:stability}. The proof, with
explicit expressions for $\sigma_\pm$ and $\kappa_L$, is in Appendix~\ref{proof_1}.

\vspace{-0.1cm}
\subsection{Triangular Frame Transport}\label{sec:frame_transport}
\vspace{-0.1cm}

Message passing across node-specific frames requires a common coordinate reference for comparison and aggregation. GCN~\citep{kipf2017semi} and GAT~\citep{velickovic2018graph} transform neighboring features in shared coordinates and weight them through degree normalization or learned attention. However, scalar reweighting alone cannot align distinct frames. We introduce \textit{triangular frame transport} to map transformed source messages through a shared environment into the target node's local coordinates before aggregation. The same geometric state determines neighborhood weights, coupling weighting with alignment and providing a consistent reference for residual computation.

Specifically, at step $l$, we construct environment representations from the current feature and geometric states and compute neighborhood weights from their pairwise distances:
\begin{equation}
    h_i^l=L_i^l\xi_i^l,
    \qquad
    e_{ij}^l=\|h_i^l-h_j^l\|_2^2,
    \qquad
    \omega_{ij}^l
    =
    \frac{\exp(-e_{ij}^l/\tau)}
    {\sum_{k\in\widetilde{\mathcal{N}}(i)}
    \exp(-e_{ik}^l/\tau)},
    \label{eq:weights}
\end{equation}
where $j\in\widetilde{\mathcal{N}}(i)$, $\|\cdot\|_2$ denotes the Euclidean norm, and $\tau>0$ controls the concentration of the weights. The weights satisfy $\omega_{ij}^l\geq 0$ and $\sum_{j\in\widetilde{\mathcal{N}}(i)}\omega_{ij}^l=1$.
To align messages before aggregation, we apply a shared feature transformation $\phi(\xi)=W_\phi\xi$. For each block $b$, the transformed source message is mapped through the source frame and pulled back into the target frame:\vspace{-3pt}
\begin{equation}
    m_{j\rightarrow i,b}^l
    =
    \operatorname{TriSolve}
    \left(
        L_{i,b}^l,\,
        L_{j,b}^l\big(\phi(\xi_j^l)\big)_b
    \right),
    \label{eq:cholesky_frame_transport}
\end{equation}
where $(\cdot)_b$ extracts the $b$-th feature block and
$\operatorname{TriSolve}(L,y)$ solves $Lw=y$ without explicitly forming $L^{-1}$. 
Lemma~\ref{lem:wellposed} ensures this solution is unique for every admissible geometric state. The transported message satisfies $L_{i,b}^l m_{j\rightarrow i,b}^l=L_{j,b}^l(\phi(\xi_j^l))_b$: both sides represent the same transformed message in the shared environment, while $m_{j\rightarrow i,b}^l$ expresses it in the target node's coordinates.
We concatenate the blockwise messages and aggregate them using the geometry-conditioned weights:
\begin{equation}
    \begin{aligned}
        m_{j\rightarrow i}^l
        =
        \operatorname{concat}
        \left(
            m_{j\rightarrow i,1}^l,\ldots,
            m_{j\rightarrow i,B}^l
        \right),\qquad
        \bar{\xi}_i^l
        =
        \sum\nolimits_{j\in\widetilde{\mathcal{N}}(i)}
        \omega_{ij}^l m_{j\rightarrow i}^l.
    \end{aligned}
    \label{eq:aligned_aggregation}
\end{equation}
All terms in $\bar{\xi}_i^l$ are expressed in the target frame. A feature gate balances the state against the aggregated message:
\begin{equation}
    \begin{aligned}
        r_i^l
        =
        \sigma\left(
            W_r[\xi_i^l\|\bar{\xi}_i^l]+b_r
        \right),\qquad
        \xi_i^{l+1}
        =
        (\mathbf{1}-r_i^l)\odot\xi_i^l
        +
        r_i^l\odot\bar{\xi}_i^l,
    \end{aligned}
    \label{eq:feature_gate}
\end{equation}
where $W_r\in\mathbb{R}^{d\times 2d}$ and $b_r\in\mathbb{R}^{d}$ are learnable parameters, $\sigma$ is the elementwise sigmoid, $\|$ denotes concatenation, $\odot$ denotes elementwise multiplication, and $\mathbf{1}\in\mathbb{R}^{d}$ is the all-ones vector. 
The parameters $W_\phi$, $W_r$, and $b_r$ are shared across nodes and recurrent steps.

\begin{proposition}[Frame-Aligned Transport]\label{prop:gauge}
Fix a recurrent step $l$ with admissible geometric states, and let
$T^{l}_{j\to i,b}=(L^{l}_{i,b})^{-1}L^{l}_{j,b}$ and $M^{l}_j=L^{l}_jW_\phi(L^{l}_j)^{-1}$.
Then $m^{l}_{j\to i,b}=T^{l}_{j\to i,b}(\phi(\xi^{l}_j))_b$ with
$\|T^{l}_{j\to i,b}\|_2\le\sigma_+/\sigma_-$, and the frame maps satisfy
$T^{l}_{i\to i,b}=I_m$ and $T^{l}_{j\to i,b}T^{l}_{k\to j,b}=T^{l}_{k\to i,b}$, so their
product along any walk depends only on its endpoints. In environment coordinates, the
aggregate and the residuals $\delta^{l}_{ij}:=m^{l}_{j\to i}-\phi(\xi^{l}_i)$ read
\[
L^{l}_i\bar\xi^{l}_i=\sum\nolimits_{j\in\tilde{\mathcal N}(i)}\omega^{l}_{ij}\,M^{l}_jh^{l}_j,
\qquad
L^{l}_i\delta^{l}_{ij}=M^{l}_jh^{l}_j-M^{l}_ih^{l}_i,
\]
where each $M^{l}_j$ is similar to $W_\phi$ with
$\|M^{l}_j\|_2\le(\sigma_+/\sigma_-)\|W_\phi\|_2$, and $M^{l}_j=cI_d$ when $W_\phi=cI_d$.
\end{proposition}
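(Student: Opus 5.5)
The plan is to reduce everything to the explicit matrix identity behind $\operatorname{TriSolve}$ and then push it through the block-diagonal structure of Eq.~\eqref{eq:node_geometry}, using Lemma~\ref{lem:wellposed} for invertibility and the norm constants.

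\textbf{Transport maps and the cocycle.} By Lemma~\ref{lem:wellposed}, every $L^{l}_{i,b}$ with admissible coordinates is invertible. Hence the unique solution of $L^{l}_{i,b}w=L^{l}_{j,b}(\phi(\xi^l_j))_b$ in Eq.~\eqref{eq:cholesky_frame_transport} is $w=(L^{l}_{i,b})^{-1}L^{l}_{j,b}(\phi(\xi^l_j))_b=T^{l}_{j\to i,b}(\phi(\xi^l_j))_b$. Submultiplicativity together with $\|L\|_2\le\sigma_+$ and $\|L^{-1}\|_2\le\sigma_-^{-1}$ gives $\|T^{l}_{j\to i,b}\|_2\le\sigma_+/\sigma_-$. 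The identity $T_{i\to i,b}=I_m$ and the cocycle relation $(L_{i,b})^{-1}L_{j,b}(L_{j,b})^{-1}L_{k,b}=(L_{i,b})^{-1}L_{k,b}$ follow by direct cancellation. Along a walk $k_0\to k_1\to\cdots\to k_r$, induction on $r$ collapses the product to $T_{k_0\to k_r,b}$, so it depends only on the endpoints.

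\textbf{Environment-coordinate identities.} Next I pass from blocks to the full vector. Since $L_i=\bigoplus_b L_{i,b}$, the inverse is $L_i^{-1}=\bigoplus_b L_{i,b}^{-1}$. The concatenated message is therefore $m_{j\to i}=L_i^{-1}L_j\,\phi(\xi_j)$, even though $W_\phi$ itself mixes blocks: the transport acts on the already-transformed vector $W_\phi\xi_j$, and block extraction commutes with block-diagonal maps. This bookkeeping step, in particular keeping $W_\phi$ (full) separate from the frames (block-diagonal), is where I expect the only real care to be needed.

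With this in hand, $M_jh_j=L_jW_\phi L_j^{-1}L_j\xi_j=L_j\phi(\xi_j)$, so $L_i m_{j\to i}=M_jh_j$. Multiplying Eq.~\eqref{eq:aligned_aggregation} by $L_i$ and using linearity gives the stated aggregate formula. For the residual, $L_i\delta_{ij}=L_im_{j\to i}-L_iW_\phi\xi_i=M_jh_j-M_ih_i$, since $L_iW_\phi\xi_i=M_ih_i$ by the same computation with $j=i$.

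\textbf{Properties of $M_j$.} The matrix $M_j$ is similar to $W_\phi$ by definition, with conjugator $L_j$, which is invertible by Lemma~\ref{lem:wellposed}. The norm bound $\|M_j\|_2\le\|L_j\|_2\|W_\phi\|_2\|L_j^{-1}\|_2\le(\sigma_+/\sigma_-)\|W_\phi\|_2$ uses the block-diagonal version of Lemma~\ref{lem:wellposed}: the spectral norm of a block-diagonal matrix is the maximum over its blocks. Finally, if $W_\phi=cI_d$, then $M_j=cL_jL_j^{-1}=cI_d$.
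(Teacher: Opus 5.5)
Your proposal is correct and follows the paper's proof essentially step for step. It uses invertibility and submultiplicativity from Lemma~\ref{lem:wellposed} for $T^{l}_{j\to i,b}$, direct cancellation plus induction for the cocycle and walk property, and the identity $L^{l}_i m^{l}_{j\to i}=L^{l}_jW_\phi\xi^{l}_j=M^{l}_jh^{l}_j$ for the aggregate and residuals; the only difference is cosmetic, since you assemble the full message first via $(L^{l}_i)^{-1}=\bigoplus_b(L^{l}_{i,b})^{-1}$, whereas the paper multiplies each block by $L^{l}_{i,b}$ and then stacks, which is the same bookkeeping.
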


In environment coordinates, one recurrent step is thus Gaussian-kernel attention with temperature $\tau$, where the frames enter through the energies $e^{l}_{ij}$, the transforms $M^{l}_j$, and the pull-back $(L^{l}_i)^{-1}$ that makes the residuals of Sec.~\ref{sec:residual_feedback} comparable across neighbors. The proof is in Appendix~\ref{app:proof_gauge}.

\vspace{-0.1cm}
\subsection{Second-Order Residual Feedback}
\label{sec:residual_feedback}

The transport step aligns messages and updates node features under the current geometry, and the aligned messages give a common reference for local comparison. ARGNN estimates local metrics from target features and neighborhood means~\citep{wang2026adaptive}, but identical summaries can conceal directional variation and cross-feature dependence. We therefore examine the residuals of individual aligned messages from the transformed target state, not their aggregate, and accumulate their weighted outer products into a geometric target that captures directional variation and within-block dependence. Interactions under the current geometry thereby become feedback for its evolution.

Specifically, for node $i$ and block $b$, we compute the residual between each aligned
message and the transformed target state,
$\delta_{ij,b}^l=m_{j\rightarrow i,b}^l-\big(\phi(\xi_i^l)\big)_b$, which by
Proposition~\ref{prop:gauge} are expressed in the same frame. To capture directional variation
beyond the mean, we form a second-moment matrix whose quadratic form measures residual
energy along any direction, reusing $\omega_{ij}^l$ to preserve aggregation weights:
\begin{equation}
    C_{i,b}^l
    =
    \sum\nolimits_{j\in\widetilde{\mathcal{N}}(i)}
    \omega_{ij}^l
    \delta_{ij,b}^l(\delta_{ij,b}^l)^\top
    +
    \epsilon_s I_m,
    \label{eq:second_moment}
\end{equation}
where $\epsilon_s>0$ provides a positive spectral lower bound. The diagonal entries combine weighted squared residuals with $\epsilon_s$, while the off-diagonal entries capture weighted cross-products within each block. Since residuals are measured relative to the transformed target state rather than centered at their neighborhood mean, $C_{i,b}^l$ is a regularized second-moment matrix. 
Writing $\bar{\delta}_{i,b}^l=\sum_{j}\omega_{ij}^l\delta_{ij,b}^l$ for the weighted mean residual, the classical mean--covariance decomposition gives, for every $v\in\mathbb{R}^m$,
\begin{equation}
    v^\top C_{i,b}^l v
    =
    \epsilon_s\|v\|_2^2
    +
    \big(v^\top\bar{\delta}_{i,b}^l\big)^2
    +
    \sum\nolimits_{j\in\widetilde{\mathcal{N}}(i)}
    \omega_{ij}^l
    \left[
        v^\top
        \big(\delta_{ij,b}^l-\bar{\delta}_{i,b}^l\big)
    \right]^2.
    \label{eq:directional_residual_decomposition}
\end{equation}
The last term is the weighted residual covariance: residuals of opposite sign cancel in the
mean yet still contribute to $C_{i,b}^l$, which is exactly what mean-based estimators
discard. Since the geometric state evolves in log-triangular coordinates, we convert this
statistic into a coordinate target for updates:
\begin{equation}
    R_{i,b}^l
    =
    \operatorname{SChol}(C_{i,b}^l),
    \qquad
    \widehat z_{i,b}^l
    =
    \begin{bmatrix}
        \operatorname{Svec}_{\mathrm{sl}}(R_{i,b}^l)^\top, &
        \log\!\left(\operatorname{diag}(R_{i,b}^l)\right)^\top
    \end{bmatrix}^{\top},
    \label{eq:coordinate_target}
\end{equation}
where $\operatorname{SChol}(C)$ returns the positive-diagonal lower-triangular factor $R$ with $RR^\top=C+\eta I_m$, $\operatorname{Svec}_{\mathrm{sl}}$ stacks the strictly lower-triangular entries in fixed order, and $\log$ is elementwise. Since Cholesky factorization is bijective on $\operatorname{SPD}(m)$, $\widehat z_{i,b}^l$ encodes $C_{i,b}^l+\eta I_m$ losslessly, and $R_{i,b}^l$ maps the local unit ball onto the residual second-moment ellipsoid, stretching high-energy residual directions.

\begin{proposition}[Second-Order Refinement]\label{prop:separation}
Fix a step $l$. Write $\mathcal R^{l}_{i,b}=\{(\omega^{l}_{ij},\delta^{l}_{ij,b})\}_{j\in\tilde{\mathcal N}(i)}$
for the residual configuration of node $i$ in block $b$, with weighted mean
$\bar\delta^{l}_{i,b}$, and call an update \emph{first-order} if it depends on
$\mathcal R^{l}_{i,b}$ only through the weights and $\bar\delta^{l}_{i,b}$. Let $i,i'$ be nodes
with $\xi^{l}_i=\xi^{l}_{i'}$, $Z^{l}_i=Z^{l}_{i'}$, $u_i=u_{i'}$, and
$\bar\delta^{l}_{i,b}=\bar\delta^{l}_{i',b}$ for all $b$.
\begin{enumerate}[label=(\roman*),leftmargin=1.6em,itemsep=1pt,topsep=2pt]
\item Every first-order update, including the mean-only target
$C^{l,\mathrm{mean}}_{i,b}=\bar\delta^{l}_{i,b}(\bar\delta^{l}_{i,b})^{\top}+\epsilon_sI_m$,
the controller of Eq.~\eqref{eq:correction}, and the feature update of
Eq.~\eqref{eq:feature_gate}, returns identical outputs for $i$ and $i'$; in particular
$\xi^{l+1}_i=\xi^{l+1}_{i'}$ and $\Delta z^{l}_{i,b}=\Delta z^{l}_{i',b}$.
\item The second-order target is injective in the statistic:
$C^{l}_{i,b}\ne C^{l}_{i',b}$ implies $\hat z^{l}_{i,b}\ne\hat z^{l}_{i',b}$. If
$\lambda\in(0,1]$ and Eq.~\eqref{eq:task_guided_geometry_update} is not clipped in block
$b$, then $Z^{l+1}_i\ne Z^{l+1}_{i'}$, and $h^{l+1}_i\ne h^{l+1}_{i'}$ unless
$\xi^{l+1}_i\in\ker(L^{l+1}_i-L^{l+1}_{i'})$, a proper subspace.
\item If $i$ has at least two neighbors and $\mathcal R^{l}_{i',b}$ ranges over all
configurations with the same weights and first moment, those with
$C^{l}_{i',b}=C^{l}_{i,b}$ form a Lebesgue-null set.
\item The target depends on $\mathcal R^{l}_{i,b}$ only through $C^{l}_{i,b}$ and is
therefore strictly weaker than an injective aggregator: for $m=1$ there are distinct
three-point residual sets with equal weights, first, and second moments. Conversely, for
$m\ge2$ the sets $\{0,\pm v\}$ and $\{0,\pm w\}$ with $v=(1,1,0,\dots,0)^{\top}$,
$w=(1,-1,0,\dots,0)^{\top}$ and equal weights on $\pm v$, $\pm w$ share the mean-only and
the diagonal target
$C^{l,\mathrm{diag}}_{i,b}=\mathrm{Diag}\big(\sum_j\omega^{l}_{ij}\,\delta^{l}_{ij,b}\odot\delta^{l}_{ij,b}\big)+\epsilon_sI_m$,
yet $C^{l}_{i,b}\ne C^{l}_{i',b}$.
\end{enumerate}
\end{proposition}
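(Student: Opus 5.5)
The proof takes the four items in order; each reduces to a short structural observation.

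\emph{Item (i).} I would first show that all inputs to a first-order update coincide for $i$ and $i'$. By hypothesis $\xi^l_i=\xi^l_{i'}$, $Z^l_i=Z^l_{i'}$, $u_i=u_{i'}$, the weight multisets agree, and the means agree. For the feature update, the key is the identity $\bar\xi^l_i=\phi(\xi^l_i)+\sum_j\omega^l_{ij}\delta^l_{ij}$, which holds because $\sum_j\omega^l_{ij}=1$. Blockwise this gives $\bar\xi^l_{i,b}=(\phi(\xi^l_i))_b+\bar\delta^l_{i,b}$, so $\bar\xi^l_i=\bar\xi^l_{i'}$. Eq.~\eqref{eq:feature_gate} then yields $r^l_i=r^l_{i'}$ and $\xi^{l+1}_i=\xi^{l+1}_{i'}$. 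The mean-only target is a function of $\bar\delta$ alone. For the controller of Eq.~\eqref{eq:correction} I would argue by inspection of its inputs, which are the states, signatures, and first-order summaries; hence $\Delta z^l_{i,b}=\Delta z^l_{i',b}$.

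\emph{Item (ii).} Suppose $C\ne C'$. Then $C+\eta I\ne C'+\eta I$, and uniqueness of the positive-diagonal Cholesky factor gives $R\ne R'$. The map $R\mapsto(\mathrm{Svec}_{\mathrm{sl}}R,\log\mathrm{diag}R)$ is injective on positive-diagonal triangular matrices, so $\hat z\ne\hat z'$. Now turn to the update of Eq.~\eqref{eq:task_guided_geometry_update}, which I expect to be a convex/affine combination of the form $(1-\lambda)z^l+\lambda\hat z+\Delta z$. The terms $z^l$ and $\Delta z$ agree for $i$ and $i'$, and $\lambda>0$ with no clipping, so the difference of the updated coordinates is $\lambda(\hat z-\hat z')\ne0$. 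By Lemma~\ref{lem:wellposed} the reconstruction $z\mapsto L(z)$ is injective, since its entries are read off directly. Hence $L^{l+1}_i\ne L^{l+1}_{i'}$. Combined with $\xi^{l+1}_i=\xi^{l+1}_{i'}$ from (i), we get $h_i-h_{i'}=(L_i-L_{i'})\xi^{l+1}_i$, which vanishes exactly on the kernel of a nonzero matrix, a proper subspace.

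\emph{Item (iii).} Fix the weights and the mean. I would parametrize the configurations of $i'$ by residuals $(\delta_j)$ subject to the single affine constraint $\sum_j\omega_j\delta_j=\bar\delta$; this is an affine space of dimension $(N-1)m$ with $N\ge2$ (assuming positive weights). On that space the map $\delta\mapsto\sum_j\omega_j\delta_j\delta_j^\top$ is a polynomial. The set where it equals a fixed matrix is the zero set of a polynomial map. It suffices to show one entry, e.g.\ $\sum_j\omega_j(\delta_j)_1^2-c$, is a nonconstant polynomial on the affine space. This holds because moving two residuals in opposite weighted directions preserves the mean and changes the quadratic form. The zero set of a nonzero polynomial is Lebesgue-null. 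This step, checking nonconstancy on the constrained space, is the main technical point, though it is elementary.

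\emph{Item (iv).} Dependence only through $C$ is immediate from Eq.~\eqref{eq:coordinate_target}. For $m=1$ I would exhibit three points with equal weights $1/3$ and matching first and second moments. Examples are $\{0,3,3\}$ and $\{-1,\,2+\sqrt3,\,2-\sqrt3\}$-type solutions; more simply, any two distinct points on the circle $\{x+y+z=s,\ x^2+y^2+z^2=q\}$ that are not permutations of each other will do. For $m\ge2$, compute directly: both configurations have mean $0$. Their diagonals agree because $v\odot v=w\odot w$. The off-diagonal $(1,2)$ entries are $+2\omega$ versus $-2\omega$, so the two $C$'s differ.
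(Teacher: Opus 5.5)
Your proposal is correct and follows the paper's proof essentially step for step: the identity $\bar\xi^l_i=\phi(\xi^l_i)+\sum_j\omega^l_{ij}\delta^l_{ij}$ for (i), Cholesky uniqueness plus the unclipped difference $\lambda(\hat z^l_{i,b}-\hat z^l_{i',b})$ for (ii), a mean-preserving two-residual perturbation that makes a diagonal entry of $C$ a nonconstant polynomial for (iii), and direct computation for (iv). In (iii), perturb two genuine neighbors, since the self residual $\delta^l_{ii,b}=0$ is fixed.

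There is one slip in (iv). Your concrete $m=1$ pair does not match: $\{0,3,3\}$ has mean $2$ and second moment $6$, while $\{-1,2+\sqrt{3},2-\sqrt{3}\}$ has mean $1$ and second moment $5$. The intended companion is $\{2,2+\sqrt{3},2-\sqrt{3}\}$, or $\{1,1,4\}$. Your fallback circle argument is still valid, because a nondegenerate circle is infinite while each permutation orbit has at most six points. The paper instead uses an explicit one-parameter family matched against $\{-1,0,1\}$, and shows the sets are distinct via their third moments.
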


Proposition~\ref{prop:separation} places the feedback of \method{} in a moment hierarchy:
first-order updates, including the controller and mean-based metric
estimation~\citep{wang2026adaptive}, cannot register variation that cancels in the mean,
the second-order target registers it for almost every configuration, and the mean-only and
diagonal targets sit strictly below it. The proof is in Appendix~\ref{app:proof_separation}.

\subsection{Task-Guided Geometry Evolution}\label{sec:geometry_evolution}

The residual target captures local variation but does not identify which differences matter for prediction. Rather than directly estimating local metrics from feature summaries~\citep{wang2026adaptive}, 
we learn a complementary correction through task supervision. This correction, the residual target, and the previous geometric state are combined through a bounded log-triangular update.

The geometric correction is predicted from the current local features, aligned neighborhood aggregate, and structural signature. For node $i$ and block $b$, these quantities are concatenated into
$q_{i,b}^l=[(\xi_i^l)_b\|(\bar{\xi}_i^l)_b\|u_i]$.
A shared MLP $\Gamma_\theta$ processes this input using a single hidden layer:
   $ \Gamma_\theta(q_{i,b}^l)
    =
    W_{\Gamma,2}
    \operatorname{SiLU}
    \left(
        W_{\Gamma,1}q_{i,b}^l+b_{\Gamma,1}
    \right)
    +
    b_{\Gamma,2},$
where $\theta$ collects trainable parameters and $\operatorname{SiLU}(x)=x/(1+e^{-x})$.
The output is reshaped into $U_{i,b}^l$, $V_{i,b}^l$ and a log-diagonal correction
$d_{i,b}^l$. Following Sec.~\ref{sec:geometric_states}, the strictly lower-triangular entries
of $U_{i,b}^l(V_{i,b}^l)^\top$ correct $a_{i,b}^l$ and $d_{i,b}^l$ corrects
$\ell_{i,b}^l$. Concatenating them gives
\begin{equation}
    \Delta z_{i,b}^l
    =
    \begin{bmatrix}
        \operatorname{Svec}_{\mathrm{sl}}
        \left(U_{i,b}^l(V_{i,b}^l)^\top\right)^\top
        &
        (d_{i,b}^l)^\top
    \end{bmatrix}^{\top}
    \in\mathbb{R}^{m(m+1)/2}.
    \label{eq:correction}
\end{equation}
The controller is trained using task objectives, with parameters shared across nodes, blocks, 
and recurrent steps. 
The geometric update combines the previous state, residual target, and correction:
\begin{equation}
    z_{i,b}^{l+1}
    =
    \Pi_{\mathcal{Z}}
    \left(
        (1-\lambda)z_{i,b}^l
        +
        \lambda\widehat z_{i,b}^l
        +
        \gamma\Delta z_{i,b}^l
    \right),
    \label{eq:task_guided_geometry_update}
\end{equation}
where $\lambda\in(0,1]$ and $\gamma\geq0$. The target supplies a reference derived from local interactions, while the learned correction allows the update to depart from that reference under task supervision. 

\begin{proposition}[Stability and Convergence]\label{prop:stability}
Fix $\tau>0$, $\epsilon_s>0$, $\eta\ge0$ and a feature bound $\Xi$. Let
$\mathcal D=\{(\xi,Z):\|\xi_i\|_2\le\Xi,\ z_{i,b}\in\mathcal Z\}$, let
$F:(\xi^{l},Z^{l})\mapsto(\xi^{l+1},Z^{l+1})$ denote one recurrent step, and let
$d_\infty(S,S')=\max_i\big(\|\xi_i-\xi'_i\|_2+\|Z_i-Z'_i\|_F\big)$.
\begin{enumerate}[label=(\roman*),leftmargin=1.6em,itemsep=1pt,topsep=2pt]
\item Eq.~\eqref{eq:task_guided_geometry_update} is a projected proximal step:
$\Pi_{\mathcal Z}$ is the Euclidean projection onto $\mathcal Z$, and $z^{l+1}_{i,b}$ is the
unique minimizer over $\mathcal Z$ of
\[
\tfrac{1-\lambda}{2}\|z-z^{l}_{i,b}\|_2^{2}+\tfrac{\lambda}{2}\|z-\hat z^{l}_{i,b}\|_2^{2}
-\gamma\langle\Delta z^{l}_{i,b},\,z-z^{l}_{i,b}\rangle .
\]
It differs from the uncorrected update
$z^{l+1,0}_{i,b}=\Pi_{\mathcal Z}\big((1-\lambda)z^{l}_{i,b}+\lambda\hat z^{l}_{i,b}\big)$
by at most $\gamma\|\Delta z^{l}_{i,b}\|_2$.
\item One step is Lipschitz on $\mathcal D$:
$d_\infty\big(F(S),F(S')\big)\le K\,d_\infty(S,S')$ with
$K=c_\xi+(1-\lambda)+\lambda c_{\hat Z}+\gamma c_\Delta$, where the constants depend only on
$m$, $B$, $\sigma_\pm$, $\kappa_L$, $\epsilon_s$, $\eta$, $\tau$, $\Xi$, $\lambda$, $\gamma$,
and the norms of $W_\phi$, $W_r$, and $\Gamma_\theta$, but not on node degrees.
\item For fixed $\xi$, let
$\Phi_\xi(Z)=\Pi_{\mathcal Z}\big((1-\lambda)Z+\lambda\hat Z(Z;\xi)+\gamma\Delta Z(Z;\xi)\big)$.
If $Z^{\star}$ is an unclipped fixed point of $\Phi_\xi$ whose Jacobian
$J=\partial\Phi_\xi(Z^{\star})$ has spectral radius $\rho(J)<1$, then $Z^{\star}$ is locally
attracting with rate $\rho(J)$: for every $\varepsilon>0$ there are $c>0$ and a neighborhood
$\mathcal U$ of $Z^{\star}$ such that
$\|Z^{l}-Z^{\star}\|_F\le c\,(\rho(J)+\varepsilon)^{l}\,\|Z^{0}-Z^{\star}\|_F$ for all
$Z^{0}\in\mathcal U$.
\item If moreover $\hat Z(\cdot;\xi)$ and $\Delta Z(\cdot;\xi)$ are $L_{\hat z}$- and
$L_\Delta$-Lipschitz on $\mathcal Z^{nB}$ and
$\rho:=(1-\lambda)+\lambda L_{\hat z}+\gamma L_\Delta<1$, the fixed point is unique and
$\|Z^{l+1}-Z^{l}\|_F\le\rho^{l}\|Z^{1}-Z^{0}\|_F$ from every initialization.
\end{enumerate}
\end{proposition}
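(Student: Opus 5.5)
Part (i) comes first because it is purely convex-analytic. The objective is a strongly convex quadratic in $z$ with Hessian $(1-\lambda)I+\lambda I=I$. Completing the square, it equals $\tfrac12\|z-w\|_2^2$ plus a constant, where $w=(1-\lambda)z^{l}_{i,b}+\lambda\hat z^{l}_{i,b}+\gamma\Delta z^{l}_{i,b}$. Its minimizer over the box $\mathcal Z$ is therefore the Euclidean projection of $w$, and for a box this projection is exactly coordinatewise clipping, which gives $\Pi_{\mathcal Z}$. Uniqueness follows from strong convexity. The deviation bound follows because projection onto a closed convex set is nonexpansive: the two arguments differ by $\gamma\Delta z^{l}_{i,b}$.

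For part (ii), I will decompose $F$ into its feature and geometry components and bound each in $d_\infty$, working node by node. The main tool is Lemma~\ref{lem:wellposed}, which gives the bounds $\sigma_\pm$ and the Lipschitz constants of $L$ and $L^{-1}$. The feature side proceeds in three steps.
\begin{itemize}
\item Bound $h_i=L_i\xi_i$, which is bounded by $\sigma_+\Xi$ and Lipschitz in $(\xi_i,Z_i)$.
\item Treat the energies $e_{ij}$: they are bounded by $4\sigma_+^2\Xi^2$, so the softmax weights are Lipschitz in the energies with constant of order $1/\tau$ in the $\ell_\infty\to\ell_1$ sense. This holds uniformly in the degree, because a softmax over any index set has Jacobian norm at most $2/\tau$ from $\ell_\infty$ to $\ell_1$.
\item Use Proposition~\ref{prop:gauge} to get messages bounded by $(\sigma_+/\sigma_-)\|W_\phi\|\Xi$, and Lipschitz via $\|L_i^{-1}-L_i'^{-1}\|\le\sigma_-^{-2}\kappa_L\|Z_i-Z_i'\|$.
\end{itemize}
The aggregate $\bar\xi_i$ is then a convex combination, with a perturbation of the form $\sum|\Delta\omega|\cdot\max\|m\|+\max\|\Delta m\|$, which is degree-free. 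The gate is a sigmoid of bounded arguments, so $\xi^{l+1}$ has constant $c_\xi$.

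The geometry side has two terms. First, $C_{i,b}$ is Lipschitz because the residuals are bounded and $\delta\delta^\top$ is locally Lipschitz. Second, $C\mapsto\hat z$ is Lipschitz on $\{C\succeq(\epsilon_s+\eta)I,\ \|C\|\le c\}$: the Cholesky factor is smooth on SPD, and $\log$ of a diagonal bounded below by $\sqrt{\epsilon_s+\eta}$ is Lipschitz. On this compact set the derivative bound gives $c_{\hat Z}$. The controller is Lipschitz in $q$, and $q$ is bounded, so the product $UV^\top$ is Lipschitz, giving $c_\Delta$. Finally, $\Pi_{\mathcal Z}$ is nonexpansive, so these constants combine additively into $K$.

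Two points need care and are the main obstacle. The first is degree-independence; the $\ell_\infty\to\ell_1$ softmax estimate handles it. The second is that $\xi^{l+1}$ must remain in $\mathcal D$. A convex gate keeps it within the maximum of $\|\xi_i\|$ and $\|\bar\xi_i\|$. Since $\|\bar\xi_i\|$ may exceed $\Xi$, I will state the Lipschitz bound for $F$ as a map from $\mathcal D$ into a bounded set, which suffices for the one-step claim.

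Part (iii) is the standard linearization argument. At an unclipped fixed point, $\Pi_{\mathcal Z}$ is the identity in a neighborhood, assuming strict interiority, so $\Phi_\xi$ is $C^1$ there. I choose an induced norm $\|\cdot\|_\ast$ with $\|J\|_\ast\le\rho(J)+\varepsilon/2$, which exists by the standard Householder-norm construction. Differentiability then gives $\|\Phi_\xi(Z)-Z^\star\|_\ast\le(\rho(J)+\varepsilon)\|Z-Z^\star\|_\ast$ on a small ball $\mathcal U$, and this ball is invariant. Converting back to the Frobenius norm by norm equivalence produces the constant $c$.

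Part (iv) is the Banach fixed-point theorem. By nonexpansiveness of $\Pi_{\mathcal Z}$ and the triangle inequality, $\Phi_\xi$ is $\rho$-Lipschitz on the complete set $\mathcal Z^{nB}$, which it maps into itself. This gives a unique fixed point, and iterating $\|Z^{l+1}-Z^l\|\le\rho\|Z^l-Z^{l-1}\|$ yields the stated geometric rate.
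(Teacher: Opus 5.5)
Your proposal is correct and follows the paper's proof essentially step for step. This covers completing the square and nonexpansiveness of the box projection for (i); the chain frames $\to$ energies $\to$ degree-free $\ell_\infty\!\to\!\ell_1$ softmax bound $\to$ messages $\to$ convex aggregate $\to$ sigmoid gate, together with Lipschitz bounds on $C_{i,b}$, on the Cholesky/log map over the compact convex set $\epsilon_s I\preceq C\preceq \bar C I$, and on the controller, for (ii); an induced norm with $\|J\|\le\rho(J)+\varepsilon/2$ on an invariant ball for (iii); and Banach's theorem for (iv). One harmless slip: an elementwise convex gate only gives $\|\xi^{l+1}_i\|\le(\|\xi_i\|^2+\|\bar\xi_i\|^2)^{1/2}$, not the maximum of the two norms, but since you only need the image to be bounded for the one-step Lipschitz claim, nothing depends on it.
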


Proposition~\ref{prop:stability} characterizes the update as a projected proximal step,
bounds the single-step effect of the learned correction, gives a one-step Lipschitz
constant that does not depend on node degrees, and states conditions under which the
geometry converges. The proof is in Appendix~\ref{app:proof_stability}.

\subsection{Learning Objectives}\label{sec:objectives}

The prediction losses depend on the final geometric coordinates
$\{z_{i,b}^{L}\}_{b=1}^{B}$ through the environment representation
$h_i^L=L_i^L\xi_i^L$, where $L_i^L$ is assembled from the block
factors reconstructed using Eq.~\eqref{eq:block_geometry}.
The intermediate updates
$z_{i,b}^{l}\mapsto z_{i,b}^{l+1}$ also shape subsequent
message passing by determining the neighborhood weights and frame transformations.
The task objectives below train the geometric controller $\Gamma_\theta$, propagation parameters, and task-specific readouts jointly through these dependencies.

\textbf{Node Classification.}
A shared linear classifier maps $h_i^L$ to class logits,
followed by softmax normalization. Training minimizes
cross-entropy over the labeled nodes
$V_{\mathrm{tr}}\subseteq V$:
\begin{equation}
    s_i=W_oh_i^L+b_o,
    \quad
    p_i=\operatorname{softmax}(s_i),
    \quad
    \mathcal{L}_{\mathrm{node}}
    =
    -\frac{1}{|V_{\mathrm{tr}}|}
    \sum_{i\in V_{\mathrm{tr}}}
    \sum_{c=1}^{C}
    \mathbb{I}[y_i=c]\log p_{i,c},
    \label{eq:node_objective}
\end{equation}
where $W_o$,
$b_o$ are learnable classifier parameters, $C$ is the number of node classes.
$s_i,p_i$ denote the logits
and class probabilities.
$y_i$ is the ground-truth,
and $\mathbb{I}[y_i=c]=1$ when $y_i=c$
and 0 otherwise.

\textbf{Graph Classification.}
For a graph $G$ with node set $V_G$, we mean-pool the final environment representations in their shared coordinate space and apply a shared MLP to obtain class probabilities:
\begin{equation}
    h_G=\frac{1}{|V_G|}\sum\nolimits_{i\in V_G}h_i^L,
    \;
p_G=\operatorname{softmax}\!\left(\operatorname{MLP}(h_G)\right),
    \;
    \mathcal{L}_{\mathrm{graph}}
    =
    \mathbb{E}_{G\sim\mathcal{D}_{\mathrm{tr}}^{\mathrm{graph}}}
    \left[\ell_{\mathrm{CE}}(p_G,Y_G)\right],
    \label{eq:graph_objective}
\end{equation}
where $h_G$ is the graph representation.
The vector $p_G$ contains the class probabilities, with $p_{G,c}$ denoting the probability of class $c$, and $Y_G$ is the ground-truth label.
The cross-entropy loss is
$\ell_{\mathrm{CE}}(p_G,Y_G)
=-\sum_{c=1}^{C}\mathbb{I}[Y_G=c]\log p_{G,c}$,
where $\mathbb{I}[Y_G=c]=1$ when $Y_G=c$ and 0 otherwise.

\textbf{Link Prediction.}
For a candidate pair $(u,v)$, we construct a symmetric pair representation
$f_{uv}=[h_u^L\odot h_v^L\|\,|h_u^L-h_v^L|]\in\mathbb{R}^{2d}$
from the final environment representations. A shared MLP predicts whether the nodes are connected. Let
$\mathcal{E}_{\mathrm{tr}}=E^+\cup E^-$ contain positive training edges $E^+$ and sampled negative pairs $E^-$. The edge probability and training objective are
\begin{equation}
    p_{uv}
    =
    \sigma\!\left(\operatorname{MLP}_{\mathrm{link}}(f_{uv})\right),
    \qquad
    \mathcal{L}_{\mathrm{link}}
    =
    \mathbb{E}_{(u,v)\sim\mathcal{E}_{\mathrm{tr}}}
    \left[\ell_{\mathrm{BCE}}(p_{uv},y_{uv})\right],
    \label{eq:link_objective}
\end{equation}
where $\operatorname{MLP}_{\mathrm{link}}$
is a learnable decoder,
$\sigma(t)$ is the sigmoid function, and
$p_{uv}$ is the predicted edge probability.
The label $y_{uv}=1$ for positive edges and 0 for negative pairs.


\section{Experiments}

\begin{table*}[t]
\centering
\tiny
\caption{Performance comparisons (in \%) between baselines and \method{} for Node Classification (NC) and Link Prediction (LP) on different datasets. \textbf{Bold} indicates the best performance.}
\label{tab:link_and_node}
\vspace{-0.2cm}
\setlength{\tabcolsep}{2.2pt}
\begin{tabular}{ccccccccccccc}
\toprule
\multirow{2}{*}{Model}
& \multicolumn{2}{c}{CiteSeer}
& \multicolumn{2}{c}{PubMed}
& \multicolumn{2}{c}{CS}
& \multicolumn{2}{c}{Physics}
& \multicolumn{2}{c}{Photo}
& \multicolumn{2}{c}{Computers} \\
\cline{2-13} 
& \multicolumn{1}{c}{\raisebox{-0.5ex}[0pt][0pt]{NC}}
& \multicolumn{1}{c}{\raisebox{-0.5ex}[0pt][0pt]{LP}}
& \multicolumn{1}{c}{\raisebox{-0.5ex}[0pt][0pt]{NC}}
& \multicolumn{1}{c}{\raisebox{-0.5ex}[0pt][0pt]{LP}}
& \multicolumn{1}{c}{\raisebox{-0.5ex}[0pt][0pt]{NC}}
& \multicolumn{1}{c}{\raisebox{-0.5ex}[0pt][0pt]{LP}}
& \multicolumn{1}{c}{\raisebox{-0.5ex}[0pt][0pt]{NC}}
& \multicolumn{1}{c}{\raisebox{-0.5ex}[0pt][0pt]{LP}}
& \multicolumn{1}{c}{\raisebox{-0.5ex}[0pt][0pt]{NC}}
& \multicolumn{1}{c}{\raisebox{-0.5ex}[0pt][0pt]{LP}}
& \multicolumn{1}{c}{\raisebox{-0.5ex}[0pt][0pt]{NC}}
& \multicolumn{1}{c}{\raisebox{-0.5ex}[0pt][0pt]{LP}} \\
\midrule

GCN
& 71.5$_{\pm 1.7}$ & 92.3$_{\pm 0.9}$
& 87.6$_{\pm 0.5}$ & 92.9$_{\pm 0.6}$
& 93.8$_{\pm 0.4}$ & 92.7$_{\pm 0.6}$
& 93.5$_{\pm 0.2}$ & 92.6$_{\pm 1.3}$
& 92.1$_{\pm 0.5}$ & 86.1$_{\pm 0.7}$
& 87.8$_{\pm 0.7}$ & 86.9$_{\pm 0.8}$ \\
GIN
& 70.6$_{\pm 1.2}$ & 93.0$_{\pm 1.0}$
& 86.6$_{\pm 0.6}$ & 89.5$_{\pm 0.7}$
& 91.3$_{\pm 0.6}$ & 93.3$_{\pm 0.7}$
& 94.2$_{\pm 0.5}$ & 92.1$_{\pm 0.8}$
& 92.7$_{\pm 0.5}$ & 87.7$_{\pm 0.4}$
& 87.1$_{\pm 0.8}$ & 84.1$_{\pm 1.2}$ \\
ML$^2$-GCL
& 73.7$_{\pm 2.0}$ & 93.8$_{\pm 1.0}$
& 87.9$_{\pm 0.5}$ & 95.8$_{\pm 0.3}$
& 92.1$_{\pm 0.5}$ & 97.1$_{\pm 0.1}$
& 93.4$_{\pm 1.6}$ & 97.2$_{\pm 0.1}$
& 92.8$_{\pm 0.8}$ & 96.7$_{\pm 0.1}$
& 87.4$_{\pm 0.8}$ & 95.8$_{\pm 0.3}$ \\
AMPs
& 74.4$_{\pm 1.8}$ & 93.5$_{\pm 0.9}$
& 88.9$_{\pm 0.3}$ & 96.6$_{\pm 0.3}$
& 94.6$_{\pm 0.3}$ & 97.1$_{\pm 0.2}$
& 96.1$_{\pm 0.1}$ & 96.6$_{\pm 0.6}$
& 94.6$_{\pm 0.8}$ & 96.8$_{\pm 0.2}$
& 90.2$_{\pm 0.6}$ & 96.0$_{\pm 0.1}$ \\
WaveGC
& 75.4$_{\pm 1.9}$ & 92.8$_{\pm 0.7}$
& 87.6$_{\pm 0.5}$ & 97.5$_{\pm 0.3}$
& 94.4$_{\pm 0.3}$ & 97.1$_{\pm 0.2}$
& 96.2$_{\pm 0.1}$ & 97.2$_{\pm 0.2}$
& 94.6$_{\pm 0.8}$ & 97.7$_{\pm 0.4}$
& 90.2$_{\pm 0.8}$ & 97.5$_{\pm 0.4}$ \\
SPARROW
& 73.1$_{\pm 2.1}$ & 92.9$_{\pm 0.9}$
& 85.5$_{\pm 0.6}$ & 97.0$_{\pm 0.7}$
& 92.0$_{\pm 0.3}$ & 97.0$_{\pm 0.1}$
& 94.5$_{\pm 0.3}$ & 97.2$_{\pm 0.3}$
& 93.4$_{\pm 1.0}$ & 97.8$_{\pm 0.1}$
& 88.2$_{\pm 0.5}$ & 97.3$_{\pm 0.2}$ \\
G$^2$Former
& 72.7$_{\pm 1.7}$ & 93.0$_{\pm 0.6}$
& 88.6$_{\pm 0.5}$ & 97.1$_{\pm 0.2}$
& 93.7$_{\pm 0.4}$ & 96.4$_{\pm 0.9}$
& 95.5$_{\pm 0.2}$ & 97.4$_{\pm 0.4}$
& 94.1$_{\pm 0.7}$ & 97.9$_{\pm 0.3}$
& 90.4$_{\pm 0.4}$ & 97.1$_{\pm 0.1}$ \\
\midrule
HGCN
& 74.6$_{\pm 1.8}$ & 92.1$_{\pm 1.0}$
& 84.8$_{\pm 0.7}$ & 91.3$_{\pm 0.3}$
& 91.9$_{\pm 0.4}$ & 93.2$_{\pm 0.4}$
& 92.0$_{\pm 3.1}$ & 95.7$_{\pm 0.4}$
& 89.5$_{\pm 0.8}$ & 92.6$_{\pm 0.9}$
& 85.9$_{\pm 0.7}$ & 88.1$_{\pm 0.2}$ \\
D-GCN
& 74.9$_{\pm 1.1}$ & 92.0$_{\pm 0.6}$
& 86.9$_{\pm 0.9}$ & 93.1$_{\pm 0.4}$
& 93.8$_{\pm 0.2}$ & 96.2$_{\pm 0.1}$
& 93.6$_{\pm 0.2}$ & 97.1$_{\pm 0.3}$
& 93.7$_{\pm 0.6}$ & 97.1$_{\pm 0.1}$
& 87.1$_{\pm 0.5}$ & 94.1$_{\pm 0.6}$ \\
SPDGNN
& 74.9$_{\pm 1.7}$ & 92.7$_{\pm 0.8}$
& 87.2$_{\pm 0.6}$ & 93.8$_{\pm 0.7}$
& 92.6$_{\pm 0.4}$ & 94.9$_{\pm 0.2}$
& 94.1$_{\pm 0.2}$ & 94.5$_{\pm 0.6}$
& 93.0$_{\pm 0.8}$ & 95.7$_{\pm 0.5}$
& 88.7$_{\pm 1.2}$ & 93.1$_{\pm 0.6}$ \\
\midrule
ACE-HGNN
& 74.0$_{\pm 1.5}$ & 92.9$_{\pm 0.6}$
& 86.8$_{\pm 0.5}$ & 96.9$_{\pm 0.1}$
& 92.5$_{\pm 0.3}$ & 97.0$_{\pm 0.1}$
& 93.9$_{\pm 0.2}$ & 97.3$_{\pm 0.2}$
& 94.0$_{\pm 1.0}$ & 97.2$_{\pm 0.1}$
& 89.5$_{\pm 1.1}$ & 97.3$_{\pm 0.5}$ \\
BEC-GNN
& 74.0$_{\pm 1.6}$ & 94.0$_{\pm 1.3}$
& 87.3$_{\pm 0.3}$ & 97.4$_{\pm 0.2}$
& 93.3$_{\pm 0.4}$ & 97.2$_{\pm 0.2}$
& 96.3$_{\pm 0.2}$ & 97.2$_{\pm 0.2}$
& 94.1$_{\pm 0.5}$ & 97.1$_{\pm 0.1}$
& 89.8$_{\pm 0.8}$ & 96.7$_{\pm 0.1}$ \\
GNRF
& 74.2$_{\pm 1.7}$ & 93.6$_{\pm 0.5}$
& 87.4$_{\pm 0.6}$ & 95.8$_{\pm 0.2}$
& 93.7$_{\pm 0.6}$ & 97.0$_{\pm 0.1}$
& 95.4$_{\pm 1.0}$ & 97.3$_{\pm 0.3}$
& 93.9$_{\pm 0.9}$ & 96.8$_{\pm 0.4}$
& 89.0$_{\pm 0.9}$ & 96.3$_{\pm 0.3}$ \\
ARGNN
& 75.6$_{\pm 1.2}$ & 94.1$_{\pm 0.6}$
& 88.8$_{\pm 0.2}$ & 97.6$_{\pm 0.4}$
& 94.4$_{\pm 0.4}$ & 96.2$_{\pm 0.3}$
& 96.3$_{\pm 0.2}$ & 97.4$_{\pm 0.2}$
& 94.9$_{\pm 0.5}$ & 97.0$_{\pm 0.3}$
& 90.4$_{\pm 0.4}$ & 97.4$_{\pm 0.5}$ \\
\midrule
\method{}
& \textbf{77.7$_{\pm 1.2}$} & \textbf{94.9$_{\pm 0.8}$}
& \textbf{89.6$_{\pm 0.3}$} & \textbf{98.5$_{\pm 0.2}$}
& \textbf{95.8$_{\pm 0.3}$} & \textbf{97.7$_{\pm 0.3}$}
& \textbf{97.2$_{\pm 0.1}$} & \textbf{97.9$_{\pm 0.3}$}
& \textbf{96.0$_{\pm 0.4}$} & \textbf{98.4$_{\pm 0.2}$}
& \textbf{91.2$_{\pm 0.5}$} & \textbf{98.0$_{\pm 0.4}$} \\
\bottomrule
\end{tabular}
\vspace{-0.5cm}
\end{table*}

\subsection{Experimental Settings}\label{sec:experimental_settings}

\textbf{Datasets.} We evaluate \method{} on node classification, link prediction, and graph classification. Node classification uses 60\%/20\%/20\% splits for training, validation, and testing, and link prediction uses 80\%/5\%/15\%~\citep{wang2026adaptive, pei2020geom}, both on CiteSeer, PubMed~\citep{sen2008collective}, CS, Physics, Photo, and Computers~\citep{shchur2018pitfalls}. Graph classification uses 10-fold cross-validation on PROTEINS, Mutagenicity, NCI1, FRANKENSTEIN, BBBP, and ogbg-molhiv~\citep{hong2024label, wei2023neural}. We report accuracy for node and graph classification, and ROC-AUC for link prediction and for BBBP and ogbg-molhiv. More details are provided in Appendix~\ref{sec:dataset}. 

\noindent \textbf{Baselines.}  We compare \method{} with a comprehensive set of baselines, including: (1) general graph neural networks (GNNs), such as GCN~\citep{kipf2017semi}, GIN~\citep{xu2019how}, ML$^2$-GCL~\citep{liang2025ml}, AMPs~\citep{errica2025adaptive}, WaveGC~\citep{liu2025general}, SPARROW~\citep{lin2025simplified}, and G$^2$Former~\citep{zhangrestricted}; (2) manifold-based GNNs, including HGCN~\citep{chami2019hyperbolic}, D-GCN~\citep{sun2024motif}, and SPDGNN~\citep{wang2025enhancing}; and (3) adaptive GNNs, including ACE-HGNN~\citep{fu2021ace}, GNRF~\citep{chen2025graph}, BEC-GNN~\citep{hevapathige2025depth}, and ARGNN~\citep{wang2026adaptive}. More details of baselines can be found in Appendix~\ref{sec:baselines}.

\vspace{-0.1cm}
\subsection{Performance comparisons}
\vspace{-0.1cm}

\begin{wraptable}{r}{0.55\textwidth}
\vspace{-12pt}
\centering
\small
\caption{Performance comparisons (in \%) between baselines and \method{} for Graph Classification.}
\label{tab:graph}
\vspace{-0.2cm}
\setlength{\tabcolsep}{4pt}
\renewcommand{\arraystretch}{1.2}
\resizebox{0.55\textwidth}{!}{
\begin{tabular}{>{\centering\arraybackslash}m{1.7cm}|cccccc}
\toprule
Model
& PROTEINS & Mutag & NCI1 & FRANK & BBBP & molhiv \\
\midrule
GCN
& 75.3$_{\pm 1.9}$ & 79.8$_{\pm 1.8}$ & 76.0$_{\pm 1.0}$ & 63.3$_{\pm 2.2}$ & 87.4$_{\pm 2.0}$ & 75.8$_{\pm 2.0}$ \\
GIN
& 76.7$_{\pm 1.7}$ & 80.1$_{\pm 1.9}$ & 78.0$_{\pm 1.2}$ & 68.9$_{\pm 1.7}$ & 89.5$_{\pm 2.1}$ & 77.3$_{\pm 2.0}$ \\
ML$^2$-GCL
& 77.9$_{\pm 1.8}$ & 81.9$_{\pm 1.9}$ & 80.7$_{\pm 1.3}$ & 70.7$_{\pm 1.8}$ & 90.9$_{\pm 2.2}$ & 81.3$_{\pm 1.6}$ \\
AMPs
& 78.8$_{\pm 2.0}$ & 83.4$_{\pm 1.6}$ & 82.0$_{\pm 1.1}$ & 72.4$_{\pm 1.8}$ & 92.5$_{\pm 2.3}$ & 82.0$_{\pm 2.4}$ \\
WaveGC
& 79.0$_{\pm 1.8}$ & 82.3$_{\pm 2.3}$ & 81.6$_{\pm 1.8}$ & 72.6$_{\pm 1.9}$ & 92.6$_{\pm 1.8}$ & 82.0$_{\pm 1.9}$ \\
SPARROW
& 78.7$_{\pm 1.5}$ & 83.7$_{\pm 1.9}$ & 80.9$_{\pm 1.2}$ & 72.3$_{\pm 2.2}$ & 91.8$_{\pm 1.9}$ & 81.5$_{\pm 2.3}$ \\
G$^2$Former
& 78.9$_{\pm 2.0}$ & 82.7$_{\pm 2.4}$ & 82.1$_{\pm 1.5}$ & 71.9$_{\pm 1.9}$ & 92.4$_{\pm 2.4}$ & 81.8$_{\pm 1.6}$ \\
\midrule
HGCN
& 77.3$_{\pm 1.6}$ & 80.2$_{\pm 2.1}$ & 78.3$_{\pm 1.4}$ & 69.4$_{\pm 2.3}$ & 89.4$_{\pm 2.2}$ & 76.0$_{\pm 1.6}$ \\
D-GCN
& 78.4$_{\pm 2.2}$ & 82.2$_{\pm 1.8}$ & 79.1$_{\pm 1.3}$ & 69.8$_{\pm 2.2}$ & 90.7$_{\pm 1.7}$ & 79.0$_{\pm 2.2}$ \\
SPDGNN
& 78.5$_{\pm 1.9}$ & 81.6$_{\pm 1.7}$ & 79.6$_{\pm 1.6}$ & 71.4$_{\pm 2.1}$ & 91.5$_{\pm 1.7}$ & 81.3$_{\pm 2.2}$ \\
\midrule
ACE-HGNN
& 77.5$_{\pm 1.7}$ & 80.7$_{\pm 1.8}$ & 81.6$_{\pm 1.2}$ & 71.7$_{\pm 2.2}$ & 91.6$_{\pm 1.2}$ & 79.1$_{\pm 2.2}$ \\
BEC-GNN
& 77.5$_{\pm 1.9}$ & 81.0$_{\pm 1.5}$ & 79.4$_{\pm 2.2}$ & 70.0$_{\pm 2.3}$ & 91.9$_{\pm 1.8}$ & 80.3$_{\pm 1.3}$ \\
GNRF
& 78.5$_{\pm 1.7}$ & 82.4$_{\pm 2.3}$ & 81.9$_{\pm 2.1}$ & 72.1$_{\pm 1.7}$ & 92.2$_{\pm 1.8}$ & 81.5$_{\pm 1.7}$ \\
ARGNN
& 78.0$_{\pm 1.8}$ & 83.4$_{\pm 1.5}$ & 81.7$_{\pm 1.6}$ & 72.2$_{\pm 2.0}$ & 92.3$_{\pm 1.7}$ & 81.1$_{\pm 2.2}$ \\
\midrule
\method{}
& \textbf{79.4$_{\pm 1.6}$} & \textbf{84.9$_{\pm 1.7}$}
& \textbf{83.0$_{\pm 1.5}$} & \textbf{73.3$_{\pm 1.8}$}
& \textbf{93.4$_{\pm 1.8}$} & \textbf{82.9$_{\pm 1.1}$} \\
\bottomrule
\end{tabular}
}
\vspace{-8pt}
\end{wraptable}

\begin{figure*}[t]
    \centering
    \begin{subfigure}[t]{0.235\textwidth}
        \centering
        \raisebox{-0.09cm}{\includegraphics[width=0.95\linewidth,height=0.574\linewidth]{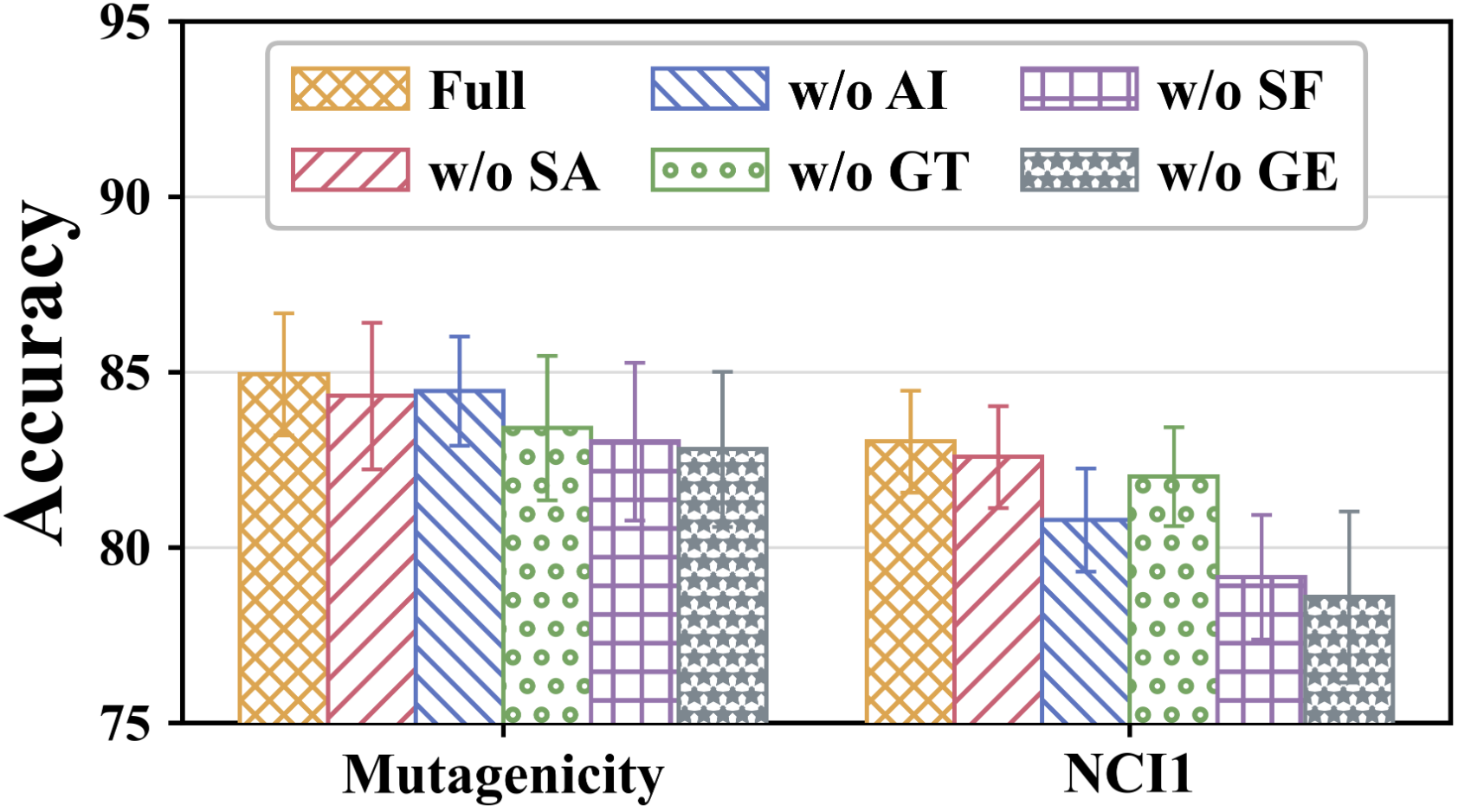}}
        \caption{Ablation Study}
        \label{fig:ablation_study}
    \end{subfigure}%
    \hspace{0.003\textwidth}%
    \begin{subfigure}[t]{0.235\textwidth}
        \centering
        \raisebox{-0.055cm}{\includegraphics[width=0.94\linewidth,height=0.5665\linewidth]{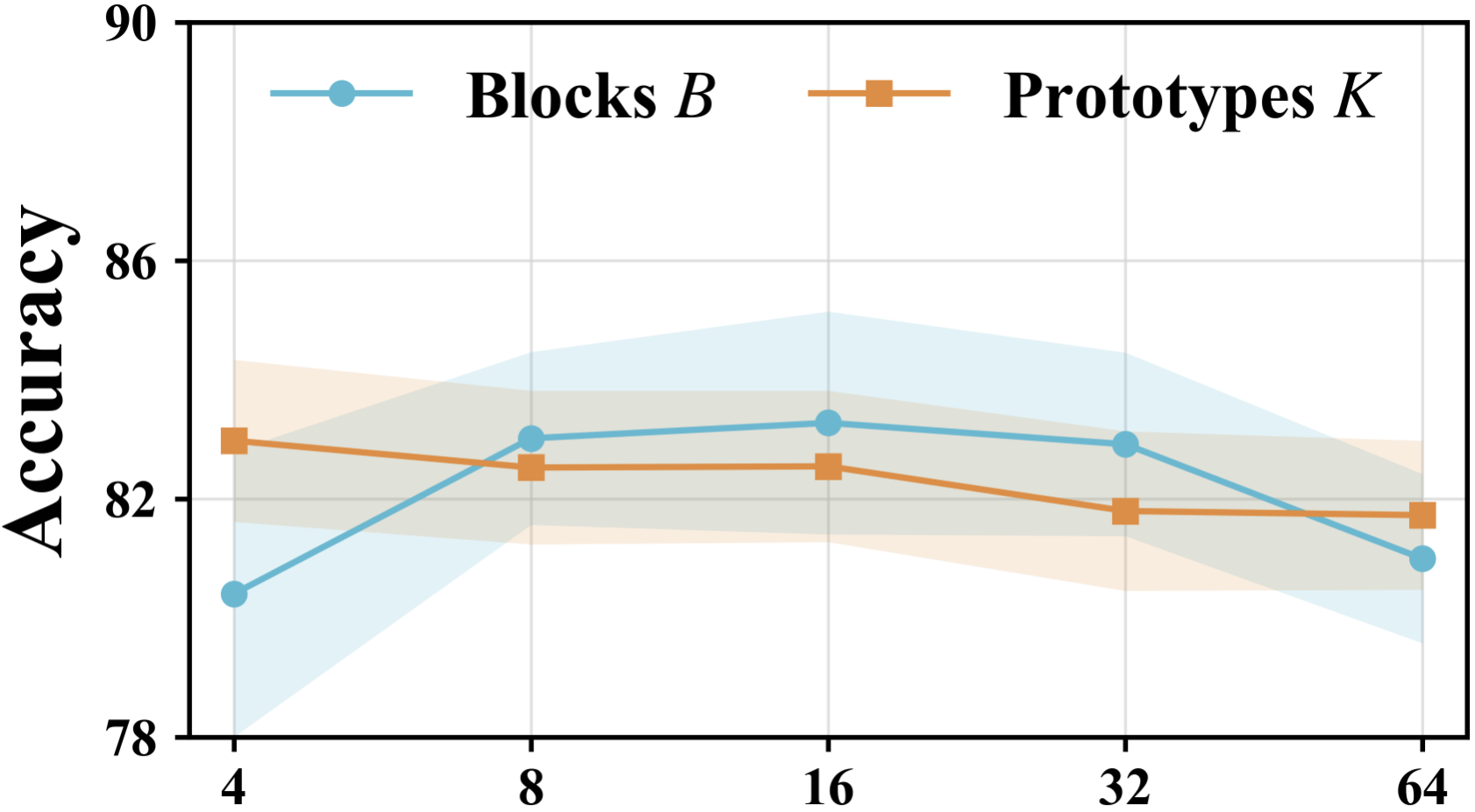}}
        \caption{Sensitivity Analysis}
        \label{fig:hyper_bk}
    \end{subfigure}%
    \hfill%
    \begin{subfigure}[t]{0.2585\textwidth}
        \centering
        \raisebox{-0.04cm}{\includegraphics[width=\linewidth,height=0.502\linewidth]{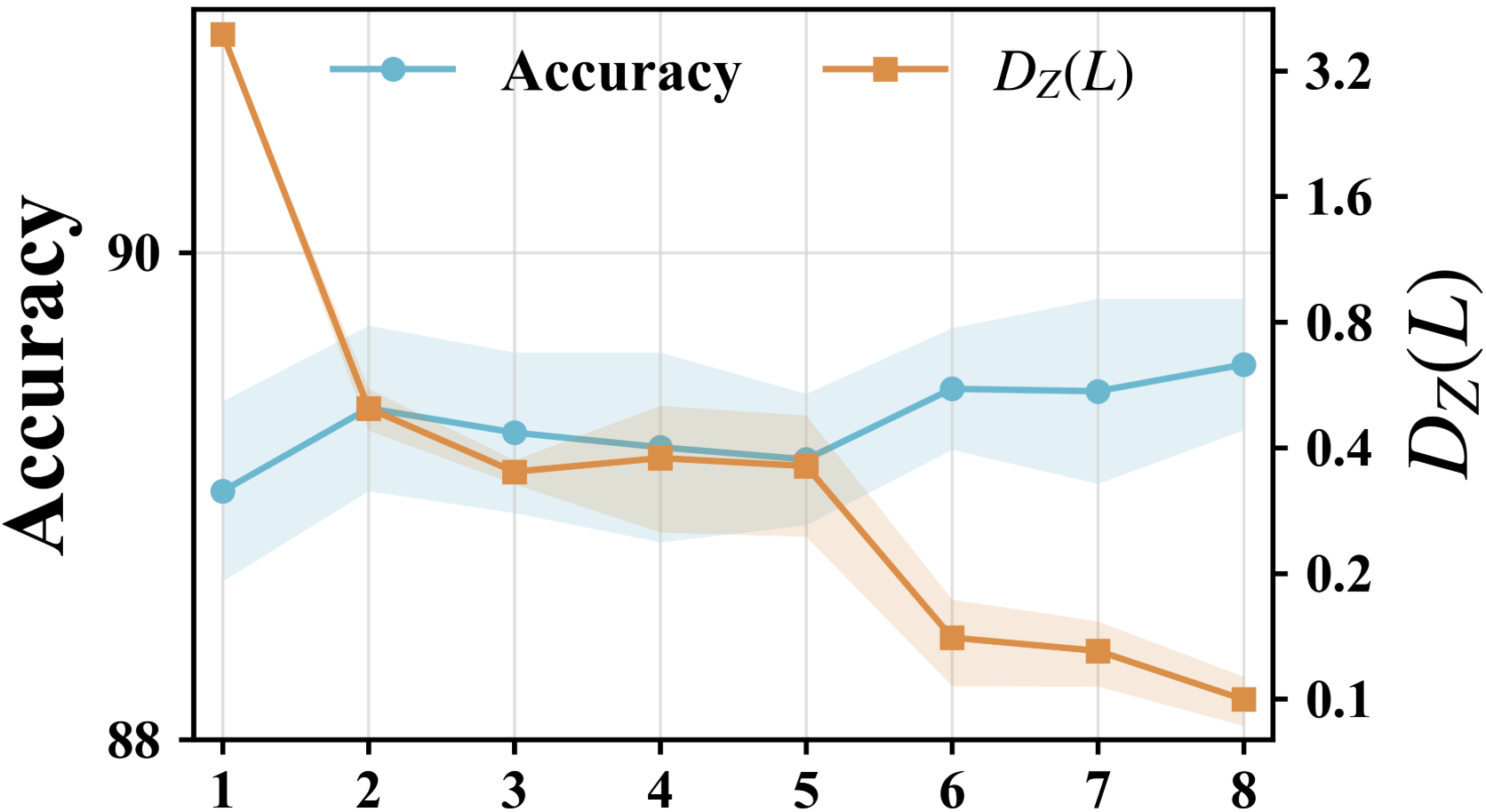}}
        \caption{Evolution Depth}
        \label{fig:evolution_depth}
    \end{subfigure}%
    \hfill%
    \begin{subfigure}[t]{0.2585\textwidth}
        \centering
        \raisebox{-0.039cm}{\includegraphics[width=0.99\linewidth,height=0.51\linewidth]{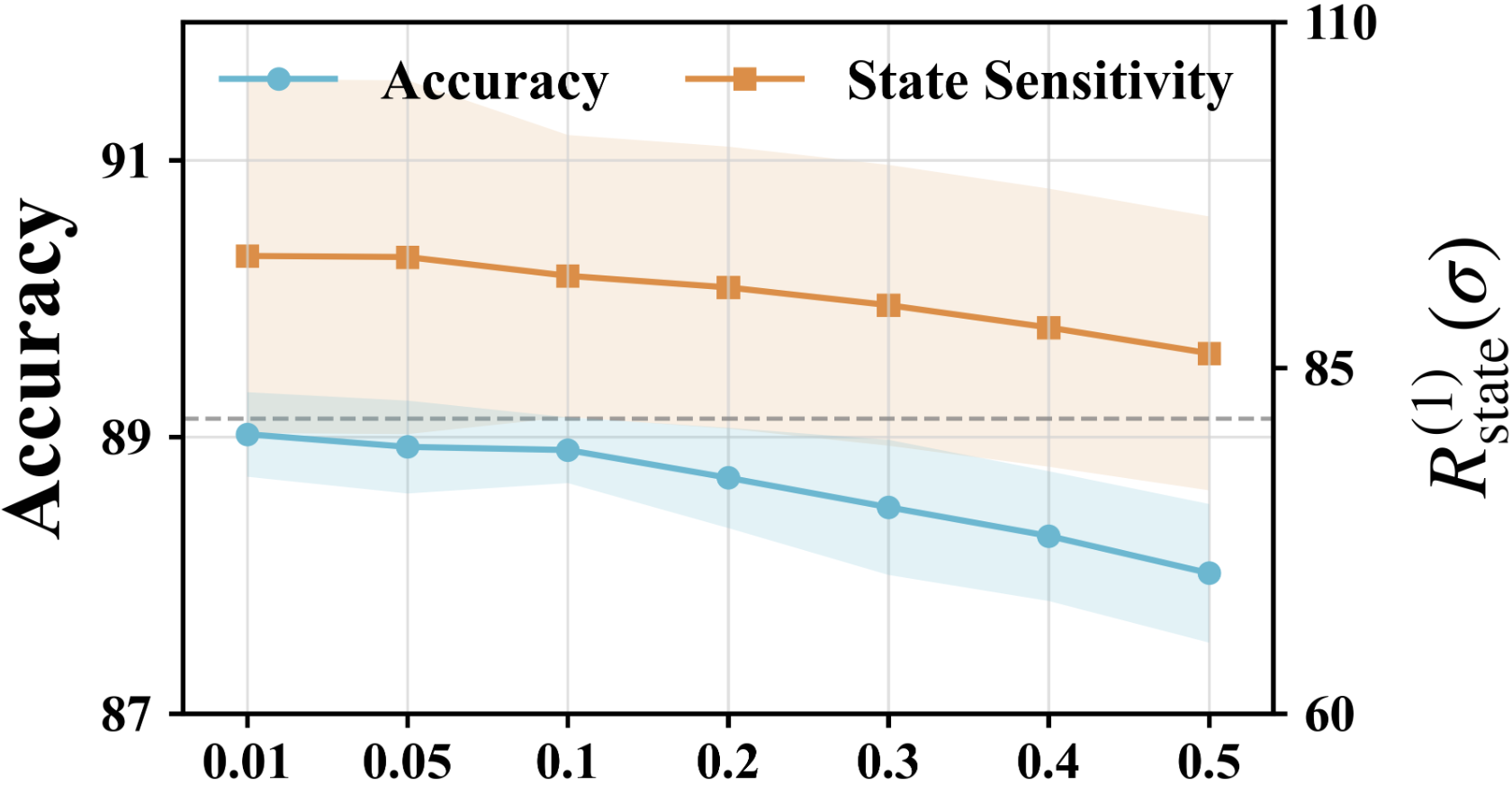}}
        \caption{Perturbation Stability}
        \label{fig:feature_stability}
    \end{subfigure}

    \vspace{-0.2cm}
    \caption{(a) Ablations on Mutagenicity and NCI1. (b) Sensitivity to block count $B$ and atlas size $K$ on NCI1. (c), (d) Evolution depth and perturbation stability on PubMed, respectively.}
    \label{fig:empirical_analysis}
    \vspace{-0.5cm}
\end{figure*}

Tables~\ref{tab:link_and_node} and~\ref{tab:graph} compare \method{} with the baselines on node classification (NC), link prediction (LP), and graph classification (GC). Three observations follow. (1) Advanced architectures such as AMPs and WaveGC improve clearly over GCN and GIN, which underlines the value of multi-scale propagation. (2) Manifold-based and adaptive GNNs are competitive through richer geometric priors and curvature-aware designs, but they treat geometry as a fixed manifold or a locally adaptive parameter rather than an evolving state, which limits how far the propagation space can follow task-specific requirements. (3) \method{} is best on every dataset and task. This matches its design. It keeps propagation geometry as a persistent node-wise state that governs neighborhood weighting and frame-aligned propagation, and it updates this state at every step from the previous geometry, a target built from second-order statistics of aligned residuals, and task-supervised corrections, so directional variation and within-block dependencies enter the geometry that governs subsequent propagation.

\vspace{-2pt}
\subsection{Ablation Study}
\label{sec:ablation}
\vspace{-2pt}

We evaluate five ablations on Mutagenicity and NCI1. Three remove a component: the
structural signatures in the prototype gate and the correction (w/o SA), the structure-aware
prototype atlas for initialization (w/o AI), and the triangular frame transport into the
target frame (w/o GT). Two alter the geometry update, either replacing the second-order
residual target by the current geometry while keeping the correction (w/o SF) or freezing
the initialized geometry across steps (w/o GE). Figure~\ref{fig:empirical_analysis}(a)
shows that the full model attains the highest accuracy on both datasets. Freezing the
geometry costs the most, confirming the benefit of adapting propagation geometry during
message passing, and removing second-order feedback costs the second most, so residual
statistics contribute beyond the correction. Disabling frame transport also lowers
accuracy, consistent with the role of coordinate alignment in aggregation and residual
computation. Removing the atlas hurts more on NCI1, and removing structural signatures
causes modest decreases. Additional results are in Appendix~\ref{sec:more_ablation}.

\vspace{-1pt}
\subsection{Sensitivity Analysis}
\label{sec:sensitivity}\label{sec:case_study}

\textbf{Hyperparameters Analysis.} 
We vary the number of geometric blocks $B$, atlas prototypes $K$, and evolution depth $L$,
which set the block partition, atlas size, and recurrent depth. On NCI1,
Figure~\ref{fig:empirical_analysis}(b) shows that accuracy improves as $B$ increases from
$4$ to $8$, stays stable for $B\in\{8,16,32\}$, and drops at $B=64$, while it changes little
for $K\in\{4,8,16\}$ and decreases slightly at $K=32$ and $64$; moderate values therefore
suffice. On PubMed, Figure~\ref{fig:empirical_analysis}(c) shows stable accuracy for $L$ from $1$ to $8$, while the terminal geometric change $D_Z(L)=\frac{1}{nB}\sum_{i,b}\|z_{i,b}^{L}-z_{i,b}^{L-1}\|_2$ decreases with depth, with a plateau between $L=2$ and $5$. Successive geometric updates thus shrink with depth without affecting accuracy, consistent with the stability of Proposition~\ref{prop:stability}. More results are provided in Appendix~\ref{sec:more_sensitivity}.

\textbf{Perturbation Stability.}
We perturb the initial features $\{\xi_i^0\}_{i=1}^{n}$ on PubMed along a random Gaussian direction with relative norm $\sigma$, keeping the graph, initial geometry, and trained parameters fixed, and report accuracy together with the one-step sensitivity $R_{\mathrm{state}}^{(1)}(\sigma)=d(S^{1}_\sigma,S^{1})/d(S^{0}_\sigma,S^{0})$, where $S^{l}=(\Xi^{l},Z^{l})$ stacks the feature states and geometric coordinates of all nodes and $d(S,S')=\|\Xi-\Xi'\|_F+\sum_{b}\|Z_b-Z'_b\|_F$. Figure~\ref{fig:empirical_analysis}(d) shows that $R_{\mathrm{state}}^{(1)}(\sigma)$ is nearly constant at small $\sigma$, indicating a proportional response to small perturbations, and decreases steadily by $7.6\%$ from $\sigma=0.01$ to $0.5$, while accuracy drops by only $1.1\%$ over the same range. Larger perturbations thus reduce accuracy mildly without amplifying the one-step sensitivity of the coupled update.

\subsection{Analysis of Second-Order Feedback}
\label{sec:second_order_control}

\begin{wraptable}{r}{0.45\textwidth}
\vspace{-12pt}
\centering
\small
\caption{Performance comparison (in \%) between baselines and their second-order (SO) variants. \textbf{Bold} indicates the best performance.}
\label{tab:second_order_control}
\vspace{-0.15cm}

\setlength{\tabcolsep}{2.3pt}
\renewcommand{\arraystretch}{1.15}

\resizebox{0.45\textwidth}{!}{
\begin{tabular}{
>{\centering\arraybackslash}m{2.15cm}
>{\centering\arraybackslash}m{1.35cm}
>{\centering\arraybackslash}m{1.35cm}
>{\centering\arraybackslash}m{1.45cm}
>{\centering\arraybackslash}m{1.35cm}
}
\toprule
Model
& CiteSeer
& Photo
& PROTEINS
& BBBP \\
\midrule

GCN
& 71.5$_{\pm 1.7}$
& 92.1$_{\pm 0.5}$
& 75.3$_{\pm 1.9}$
& 87.4$_{\pm 2.0}$ \\

GCN w/ SO
& 72.3$_{\pm 1.9}$
& 92.4$_{\pm 1.1}$
& 77.0$_{\pm 2.4}$
& 86.7$_{\pm 2.5}$ \\

\midrule

GIN
& 70.6$_{\pm 1.2}$
& 92.7$_{\pm 0.5}$
& 76.7$_{\pm 1.7}$
& 89.5$_{\pm 2.1}$ \\

GIN w/ SO
& 73.3$_{\pm 1.0}$
& 93.6$_{\pm 1.4}$
& 78.5$_{\pm 1.5}$
& 91.3$_{\pm 2.2}$ \\

\midrule

ARGNN
& 75.6$_{\pm 1.2}$
& 94.9$_{\pm 0.5}$
& 78.0$_{\pm 1.8}$
& 92.3$_{\pm 1.7}$ \\

ARGNN w/ SO
& 74.4$_{\pm 2.3}$
& 95.3$_{\pm 0.6}$
& 78.3$_{\pm 2.2}$
& 91.9$_{\pm 2.1}$ \\

\midrule

\method{}
& \textbf{77.7$_{\pm 1.2}$}
& \textbf{96.0$_{\pm 0.4}$}
& \textbf{79.4$_{\pm 1.6}$}
& \textbf{93.4$_{\pm 1.8}$} \\

\bottomrule
\end{tabular}
}
\vspace{-0.4cm}
\end{wraptable}
\textbf{Features versus Feedback.}
To test whether the gains come from second-order information alone, we give GCN, GIN, and ARGNN the same weighted second-order residual statistics as additional features, with parameter counts matched to \method{} within $0.5\%$ (Table~\ref{tab:second_order_control}). The augmentation helps GIN in all four settings and GCN in three, but hurts GCN on BBBP and ARGNN on CiteSeer and BBBP. \method{} exceeds the strongest augmented variant in every column, by $0.7$ to $3.3$ points. At comparable parameter budgets, the difference is how the statistics are used. The baselines consume them as features, whereas \method{} uses them as feedback that changes the geometry governing subsequent weighting and transport.

\begin{wraptable}{r}{0.45\textwidth}
\vspace{-12pt}
\centering
\small
\caption{Performance comparison (in \%) among different residual feedback statistics. \textbf{Bold} indicates the best performance.}
\label{tab:feedback_statistics}
\vspace{-0.15cm}

\setlength{\tabcolsep}{2.3pt}
\renewcommand{\arraystretch}{1.15}

\resizebox{0.45\textwidth}{!}{
\begin{tabular}{
>{\centering\arraybackslash}m{2.0cm}
>{\centering\arraybackslash}m{1.25cm}
>{\centering\arraybackslash}m{1.25cm}
>{\centering\arraybackslash}m{1.45cm}
>{\centering\arraybackslash}m{1.25cm}
}
\toprule
Feedback & Photo & CS & Mutag. & NCI1 \\
\midrule
Mean-only & 94.9$_{\pm 0.6}$ & 95.1$_{\pm 0.3}$ & 83.7$_{\pm 1.7}$ & 81.9$_{\pm 1.6}$ \\
Diagonal  & 95.1$_{\pm 0.3}$ & 95.2$_{\pm 0.4}$ & 83.2$_{\pm 2.0}$ & 80.4$_{\pm 1.0}$ \\
\midrule
One-shot  & 94.6$_{\pm 0.5}$ & 94.8$_{\pm 0.3}$ & 83.1$_{\pm 1.8}$ & 79.8$_{\pm 2.7}$ \\
\midrule
Full      & \textbf{96.0$_{\pm 0.4}$} & \textbf{95.8$_{\pm 0.3}$} & \textbf{84.9$_{\pm 1.7}$} & \textbf{83.0$_{\pm 1.5}$} \\
\bottomrule
\end{tabular}
}
\vspace{-0.5cm}
\end{wraptable}
\textbf{Feedback Structure and Schedule.}
Table~\ref{tab:feedback_statistics} varies the feedback with the rest of the model unchanged. Replacing the full statistic $C_{i,b}^l$ by the mean-only target $C_{i,b}^{l,\mathrm{mean}}$ or the diagonal target $C_{i,b}^{l,\mathrm{diag}}$ of Proposition~\ref{prop:separation} lowers accuracy on all four datasets, consistent with Proposition~\ref{prop:separation}; the diagonal target loses most on NCI1 and Mutag. The two restricted variants rank differently across datasets, so the mean and the marginal second moments capture different aspects of local variation. 
Updating the geometry only once, after the first step, also lowers accuracy on all four datasets, by $1.4$ and $1.0$ points on Photo and CS and by $1.8$ and $3.2$ points on Mutag and NCI1.
\section{Conclusion}
\label{sec:conclusion}


To capture message-level variation obscured by aggregation, we proposed \method{}, which
jointly evolves node features and propagation geometry through recurrent message-passing
feedback. The geometry governs neighborhood weighting and triangular frame transport for
consistent message comparison, second-order residual statistics capture directional
variation and within-block dependencies to define geometric targets, and bounded
log-triangular updates combine these targets with task-supervised corrections while
preserving positive definiteness. Experiments on node, link, and graph prediction show
consistent gains over competitive baselines, supporting learning propagation geometry from
the interactions it induces and motivating future work on adaptive-depth inference and
evolving graphs.

\bibliography{reference}
\bibliographystyle{iclr2027_conference}

\appendix
\onecolumn
\appendix

\section{Notation Summary}

As shown in the Table~\ref{tab:notation}, we summarize the key notations of this paper.

\begin{table}[h]
    \centering
    \caption{Summary of key notations.}
    \label{tab:notation}
    \small
    \setlength{\tabcolsep}{6pt}
    \renewcommand{\arraystretch}{1.10}
    \begin{tabularx}{0.98\linewidth}{@{}lX@{}}
        \toprule
        \textbf{Symbol} & \textbf{Description} \\
        \midrule

        $G=(V,E,X),A$
        & Attributed graph with node set $V$, edge set $E$, feature matrix $X$, and adjacency matrix $A$. \\

        $n,F_0,C$
        & Numbers of nodes, input features, and classes, respectively. \\

        $u_i,U,q$
        & Fixed structural signature of node $i$, signature matrix, and signature dimension. \\

        $\mathcal{N}(i),\widetilde{\mathcal{N}}(i)$
        & Neighborhood of node $i$ and its extension with a self-loop. \\

        $d,B,m$
        & Hidden dimension, number of geometric blocks, and block size, with $d=Bm$. \\

        $K,L$
        & Numbers of geometric prototypes and recurrent evolution steps. \\

        \midrule

        $\xi_i^l,Z_i^l$
        & Local feature state and node-wise geometric coordinates at step $l$. \\

        $z_{i,b}^l=[a_{i,b}^l,\ell_{i,b}^l]^\top$
        & Log-triangular coordinates of block $b$: strictly lower-triangular entries and log-diagonal scales. \\

        $L_{i,b}^l,g_{i,b}^l$
        & Block triangular frame and corresponding symmetric positive-definite geometry. \\

        $L_i^l,g_i^l$
        & Node-wise frame and local geometry assembled as block-diagonal matrices. \\

        $\mathcal{Z},\Pi_{\mathcal{Z}}$
        & Admissible coordinate domain with bounds $a_{\max},\ell_{\min},\ell_{\max}$, and coordinatewise clipping onto it. \\

        $\sigma_+,\sigma_-,\kappa_L$
        & Spectral bounds of admissible frames and Lipschitz constant of the frame reconstruction. \\

        $z_{k,b}^{\mathrm{proto}},\alpha_{ik}$
        & Coordinates of prototype $k$ in block $b$ and its structure-conditioned weight for node $i$. \\

        \midrule

        $h_i^l$
        & Shared-environment representation $h_i^l=L_i^l\xi_i^l$. \\

        $e_{ij}^l,\omega_{ij}^l,\tau$
        & Squared environment-space distance, normalized neighborhood weight, and temperature. \\

        $\phi,W_\phi$
        & Shared linear message transformation and its learnable weight matrix. \\

        $T_{j\rightarrow i,b}^l$
        & Blockwise source-to-target frame map $(L_{i,b}^l)^{-1}L_{j,b}^l$. \\

        $m_{j\rightarrow i,b}^l,m_{j\rightarrow i}^l$
        & Transported block message and the full message concatenated across blocks. \\

        $\bar{\xi}_i^l,r_i^l$
        & Aligned neighborhood aggregate and elementwise gate for feature updates. \\

        \midrule

        $\delta_{ij,b}^l,\bar{\delta}_{i,b}^l$
        & Residual between an aligned source message and the transformed target state, and its weighted mean. \\

        $C_{i,b}^l,\epsilon_s$
        & Regularized weighted second-moment matrix of residuals and its spectral regularization. \\

        $R_{i,b}^l,\widehat z_{i,b}^l,\eta$
        & Stabilized Cholesky target frame, its log-triangular coordinates, and the numerical jitter. \\

        \midrule

        $q_{i,b}^l,\Gamma_\theta$
        & Controller input and shared controller predicting task-supervised geometric corrections. \\

        $U_{i,b}^l,V_{i,b}^l,d_{i,b}^l$
        & Controller factors for lower-triangular corrections and the log-diagonal correction vector. \\

        $\Delta z_{i,b}^l$
        & Learned correction to the block log-triangular coordinates. \\

        $\lambda,\gamma$
        & Mixing weight for the residual target and strength of the learned correction. \\

        \bottomrule
    \end{tabularx}
\end{table}

\setcounter{lemma}{0}
\setcounter{theorem}{0}
\section{Proof of Lemma~\ref{lem:wellposed}}
\label{proof_1}

\begin{lemma}[Well-Posed Geometric Parameterization]
Let $m\ge 1$ and $z,z'\in\mathcal Z$. The frame $L(z)$ is lower triangular with
$\det L(z)=\exp(\sum_{t}\ell_t)>0$, so it is invertible and $g(z)\in\mathrm{SPD}(m)$. There are
constants $\sigma_+\ge\sigma_->0$ and $\kappa_L>0$, depending only on $m$, $a_{\max}$,
$\ell_{\min}$, and $\ell_{\max}$, such that
\[
\|L(z)\|_2\le\sigma_+,\qquad \|L(z)^{-1}\|_2\le\sigma_-^{-1},\qquad
\sigma_-^{2}\,I_m\ \preceq\ g(z)\ \preceq\ \sigma_+^{2}\,I_m ,
\]
and the reconstruction is Lipschitz: $\|L(z)-L(z')\|_F\le\kappa_L\|z-z'\|_2$ and
$\|L(z)^{-1}-L(z')^{-1}\|_2\le\sigma_-^{-2}\kappa_L\|z-z'\|_2$. All statements carry over to
the block-diagonal frames $L_i=\bigoplus_{b}L(z_{i,b})$ and $g_i=L_i^{\top}L_i$, with
$\|z-z'\|_2$ replaced by $\|Z_i-Z_i'\|_F:=(\sum_{b}\|z_{i,b}-z'_{i,b}\|_2^{2})^{1/2}$.
\end{lemma}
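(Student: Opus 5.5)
The plan is to write $L(z)=S(a)+D(\ell)$, with $S(a)=\operatorname{mat}_{\mathrm{sl}}(a)$ strictly lower triangular and $D(\ell)=\operatorname{Diag}(\exp(\ell))$. Since $L(z)$ is lower triangular, its determinant is the product of its diagonal entries, $\prod_t e^{\ell_t}=\exp(\sum_t\ell_t)>0$. Invertibility follows, and $v^\top g(z)v=\|L(z)v\|_2^2>0$ for $v\neq0$ gives $g(z)\in\mathrm{SPD}(m)$.

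\emph{Upper bound.} Use $\|L\|_2\le\|L\|_F$. Every off-diagonal entry is at most $a_{\max}$ in absolute value, and every diagonal entry is at most $e^{\ell_{\max}}$. Hence
\[
\sigma_+:=\Big(\tfrac{m(m-1)}{2}a_{\max}^2+m\,e^{2\ell_{\max}}\Big)^{1/2}.
\]

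\emph{Inverse bound.} Factor $L=D(I+N)$ with $N=D^{-1}S$. Here $N$ is strictly lower triangular, hence nilpotent with $N^m=0$, and its entries are bounded by $\beta:=a_{\max}e^{-\ell_{\min}}$. Then
\[
L^{-1}=\Big(\sum_{k=0}^{m-1}(-N)^k\Big)D^{-1},
\]
so
\[
\|L^{-1}\|_2\le e^{-\ell_{\min}}\sum_{k=0}^{m-1}\|N\|_2^k\le e^{-\ell_{\min}}\sum_{k=0}^{m-1}(\beta\sqrt{m(m-1)/2})^k=:\sigma_-^{-1}.
\]
An alternative is to bound $|\det L|$ from below and use the adjugate, but the Neumann series is cleaner.

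Now $\sigma_-\le\sigma_+$, because $1=\|LL^{-1}\|_2\le\sigma_+\sigma_-^{-1}$. The spectral sandwich for $g$ follows from
\[
\|Lv\|_2^2\le\sigma_+^2\|v\|_2^2,\qquad \|v\|_2=\|L^{-1}Lv\|_2\le\sigma_-^{-1}\|Lv\|_2 .
\]

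\emph{Lipschitz bounds.} The map $z\mapsto L(z)$ is the identity on the $a$-coordinates and acts as $\exp$ on $\ell$. By the mean value theorem, $|e^{\ell_t}-e^{\ell'_t}|\le e^{\ell_{\max}}|\ell_t-\ell'_t|$ on the box, since the box is convex and both points lie in it. Therefore
\[
\|L(z)-L(z')\|_F^2\le\|a-a'\|_2^2+e^{2\ell_{\max}}\|\ell-\ell'\|_2^2,
\]
which gives $\kappa_L=\max(1,e^{\ell_{\max}})$. For the inverse, use the resolvent identity $L^{-1}-L'^{-1}=L^{-1}(L'-L)L'^{-1}$ together with $\|\cdot\|_2\le\|\cdot\|_F$ and the inverse bound applied to both $z$ and $z'$. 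This yields the constant $\sigma_-^{-2}\kappa_L$.

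\emph{Block-diagonal case.} The spectral norm of a direct sum is the maximum of the blockwise norms, and inverses are taken blockwise, so the norm bounds and the Loewner sandwich transfer with the same $\sigma_\pm$. For the Lipschitz claim, the Frobenius norm squared is additive over blocks:
\[
\|L_i-L_i'\|_F^2=\sum_b\|L(z_{i,b})-L(z'_{i,b})\|_F^2\le\kappa_L^2\|Z_i-Z'_i\|_F^2 .
\]
The inverse difference is block-diagonal as well, so its spectral norm is the maximum over blocks. Each block is at most $\sigma_-^{-2}\kappa_L\|z_{i,b}-z'_{i,b}\|_2\le\sigma_-^{-2}\kappa_L\|Z_i-Z_i'\|_F$.

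\emph{Main obstacle.} The only real work is the inverse bound, where the constant must depend solely on $(m,a_{\max},\ell_{\min},\ell_{\max})$. The nilpotent Neumann expansion handles this. The resulting constant is exponential in $m$, but that is acceptable because the statement only asserts existence.
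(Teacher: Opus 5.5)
Your proposal is correct and follows essentially the same route as the paper's proof: the factorization $L=D(I+D^{-1}N)$ with the terminating Neumann series for $L^{-1}$, the mean value theorem for $\exp$ on $[\ell_{\min},\ell_{\max}]$, the resolvent identity for the inverse Lipschitz bound, and blockwise maximum/additivity for the direct sum. The only differences are in the constants, and none affects validity: the paper takes $\sigma_+=e^{\ell_{\max}}+a_{\max}\sqrt{m(m-1)/2}$ and $\kappa_L=1+e^{\ell_{\max}}$; your disjoint-support Frobenius argument gives the slightly sharper $\kappa_L=\max(1,e^{\ell_{\max}})$; and the paper also takes the minimum of the Neumann bound with a forward-substitution bound for $\sigma_-^{-1}$.
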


\begin{proof}
Throughout, $\|\cdot\|_2$ denotes the Euclidean norm of a vector and the spectral norm of a
matrix, $\|\cdot\|_F$ the Frobenius norm, and $\|\cdot\|_1$, $\|\cdot\|_\infty$ the induced
max-column-sum and max-row-sum norms. Recall from Eq.~\eqref{eq:block_geometry} that a block
coordinate $z=[a^{\top},\ell^{\top}]^{\top}\in\mathcal Z$, with $a\in\mathbb{R}^{m(m-1)/2}$ and
$\ell\in\mathbb{R}^{m}$, is reconstructed as
\begin{equation}\label{eq:frame}
L(z)=N+D,\qquad N:=\operatorname{mat}_{\mathrm{sl}}(a),\qquad D:=\mathrm{Diag}(e^{\ell}),
\end{equation}
so that $L(z)_{rr}=e^{\ell_r}$ and $L(z)_{rs}=a_{rs}$ for $r>s$, where $N$ is strictly lower
triangular with $|N_{rs}|\le a_{\max}$ and $\ell_t\in[\ell_{\min},\ell_{\max}]$ for all
$r>s$ and $t$. With
\begin{equation}\label{eq:aux_constants}
\nu:=a_{\max}\sqrt{\tfrac{m(m-1)}{2}},\qquad
\bar a:=a_{\max}e^{-\ell_{\min}},\qquad
\kappa_0:=\min\Big\{\sqrt m\,(1+\bar a)^{m-1},\ \sum_{k=0}^{m-1}\big(\nu e^{-\ell_{\min}}\big)^{k}\Big\},
\end{equation}
we set
\begin{equation}\label{eq:constants}
\sigma_+:=e^{\ell_{\max}}+\nu,\qquad
\sigma_-:=e^{\ell_{\min}}/\kappa_0,\qquad
\kappa_L:=1+e^{\ell_{\max}}.
\end{equation}
These constants depend only on $m$, $a_{\max}$, $\ell_{\min}$, and $\ell_{\max}$. Since
$\kappa_0\ge1$ and $e^{\ell_{\max}}\ge e^{\ell_{\min}}$, we have $\sigma_+\ge\sigma_->0$.

\smallskip\noindent
\emph{Invertibility.} $L(z)$ is lower triangular with diagonal entries
$e^{\ell_t}\in[e^{\ell_{\min}},e^{\ell_{\max}}]$, so
$\det L(z)=\prod_te^{\ell_t}=\exp(\sum_t\ell_t)>0$ and $L(z)$ is invertible. The matrix
$g(z)=L(z)^{\top}L(z)$ is symmetric and, for $v\ne0$, $v^{\top}g(z)v=\|L(z)v\|_2^{2}>0$;
hence $g(z)\in\mathrm{SPD}(m)$.

\smallskip\noindent
\emph{Upper bound.} Since $\|D\|_2=\max_te^{\ell_t}\le e^{\ell_{\max}}$ and $N$ has
$m(m-1)/2$ entries of magnitude at most $a_{\max}$,
\begin{equation}
\|L(z)\|_2\le\|D\|_2+\|N\|_2\le e^{\ell_{\max}}+\|N\|_F
\le e^{\ell_{\max}}+a_{\max}\sqrt{\tfrac{m(m-1)}{2}}=\sigma_+ .
\end{equation}

\smallskip\noindent
\emph{Lower bound.} Factor the diagonal out of Eq.~\eqref{eq:frame}: $L(z)=D(I_m+\tilde N)$
with $\tilde N:=D^{-1}N$, so that $L(z)^{-1}=(I_m+\tilde N)^{-1}D^{-1}$ and
$\|D^{-1}\|_2=\max_te^{-\ell_t}\le e^{-\ell_{\min}}$. Left multiplication by $D^{-1}$ scales
row $r$ by $e^{-\ell_r}$, so $\tilde N$ is strictly lower triangular with
$|\tilde N_{rs}|=e^{-\ell_r}|a_{rs}|\le\bar a$ and
$\|\tilde N\|_F\le e^{-\ell_{\min}}\|N\|_F\le\nu e^{-\ell_{\min}}$. It suffices to show
$\|(I_m+\tilde N)^{-1}\|_2\le\kappa_0$. We bound this norm in two ways and take the smaller.

(i) \emph{Forward substitution.} For $y\in\mathbb{R}^{m}$ let $w=(I_m+\tilde N)^{-1}y$, i.e.
$w_r=y_r-\sum_{s<r}\tilde N_{rs}w_s$ for $r=1,\dots,m$. We claim
$|w_r|\le\|y\|_\infty(1+\bar a)^{r-1}$. This holds for $r=1$ since $w_1=y_1$. If it
holds for all $s<r$, then
\[
|w_r|\le\|y\|_\infty+\bar a\sum_{s<r}|w_s|
\le\|y\|_\infty\Big(1+\bar a\sum_{s=1}^{r-1}(1+\bar a)^{s-1}\Big)
=\|y\|_\infty(1+\bar a)^{r-1},
\]
where the last equality uses the geometric sum
$\bar a\sum_{s=1}^{r-1}(1+\bar a)^{s-1}=(1+\bar a)^{r-1}-1$.
Hence $\|(I_m+\tilde N)^{-1}\|_\infty\le(1+\bar a)^{m-1}$. For any
$A\in\mathbb{R}^{m\times m}$, $\|A\|_1\le m\|A\|_\infty$ and
$\|A\|_2\le\sqrt{\|A\|_1\|A\|_\infty}$, so
$\|(I_m+\tilde N)^{-1}\|_2\le\sqrt{m}\,(1+\bar a)^{m-1}$.

(ii) \emph{Neumann series.} Since $\tilde N$ is strictly lower triangular, $\tilde N^{m}=0$
and $(I_m+\tilde N)^{-1}=\sum_{k=0}^{m-1}(-\tilde N)^{k}$. With
$\|\tilde N\|_2\le\|\tilde N\|_F\le\nu e^{-\ell_{\min}}$,
\[
\|(I_m+\tilde N)^{-1}\|_2\le\sum_{k=0}^{m-1}\|\tilde N\|_2^{k}
\le\sum_{k=0}^{m-1}\big(\nu e^{-\ell_{\min}}\big)^{k}.
\]
Combining (i) and (ii) gives $\|(I_m+\tilde N)^{-1}\|_2\le\kappa_0$, hence
$\|L(z)^{-1}\|_2\le\|(I_m+\tilde N)^{-1}\|_2\|D^{-1}\|_2\le\kappa_0e^{-\ell_{\min}}=\sigma_-^{-1}$.

\smallskip\noindent
\emph{Metric bounds.} The eigenvalues of $g(z)=L(z)^{\top}L(z)$ are the squared singular
values of $L(z)$. Since $\sigma_{\max}(L(z))=\|L(z)\|_2\le\sigma_+$ and
$\sigma_{\min}(L(z))=\|L(z)^{-1}\|_2^{-1}\ge\sigma_-$, we obtain
$\sigma_-^{2}I_m\preceq g(z)\preceq\sigma_+^{2}I_m$.

\smallskip\noindent
\emph{Lipschitz reconstruction.} Write $z'=[a'^{\top},\ell'^{\top}]^{\top}$,
$D'=\mathrm{Diag}(e^{\ell'})$ and $N'=\operatorname{mat}_{\mathrm{sl}}(a')$. By
Eq.~\eqref{eq:frame},
\[
L(z)-L(z')=(N-N')+(D-D').
\]
Since $\operatorname{mat}_{\mathrm{sl}}$ is an isometry onto the strictly lower entries,
$\|N-N'\|_F=\|a-a'\|_2$. By the mean value theorem on $[\ell_{\min},\ell_{\max}]$,
$|e^{\ell_t}-e^{\ell'_t}|\le e^{\ell_{\max}}|\ell_t-\ell'_t|$ for every $t$, so
$\|D-D'\|_F=\big(\sum_t(e^{\ell_t}-e^{\ell'_t})^{2}\big)^{1/2}\le e^{\ell_{\max}}\|\ell-\ell'\|_2$.
Using the triangle inequality and $\|a-a'\|_2,\ \|\ell-\ell'\|_2\le\|z-z'\|_2$,
\[
\|L(z)-L(z')\|_F
\le\|a-a'\|_2+e^{\ell_{\max}}\|\ell-\ell'\|_2
\le\big(1+e^{\ell_{\max}}\big)\|z-z'\|_2
=\kappa_L\|z-z'\|_2 .
\]
For the inverses, $L(z)^{-1}-L(z')^{-1}=L(z)^{-1}\big(L(z')-L(z)\big)L(z')^{-1}$, so by the
spectral bounds and $\|\cdot\|_2\le\|\cdot\|_F$,
\[
\|L(z)^{-1}-L(z')^{-1}\|_2
\le\|L(z)^{-1}\|_2\,\|L(z')-L(z)\|_F\,\|L(z')^{-1}\|_2
\le\sigma_-^{-2}\kappa_L\|z-z'\|_2 .
\]

\smallskip\noindent
\emph{Block-diagonal frames.} $L_i$ is block diagonal with lower-triangular blocks
$L(z_{i,b})$, hence lower triangular with $\det L_i=\prod_b\det L(z_{i,b})>0$, and
$g_i=\bigoplus_bg(z_{i,b})$ is SPD. The singular values of a block-diagonal matrix are the
union of those of its blocks, so the spectral bounds hold for $L_i$ and $g_i$ with the same
$\sigma_\pm$. For the Lipschitz bounds,
\[
\|L_i-L_i'\|_F^{2}=\sum_b\|L(z_{i,b})-L(z'_{i,b})\|_F^{2}
\le\kappa_L^{2}\sum_b\|z_{i,b}-z'_{i,b}\|_2^{2}=\kappa_L^{2}\|Z_i-Z_i'\|_F^{2},
\]
and, since the spectral norm of a block-diagonal matrix is the maximum over its blocks,
\[
\|L_i^{-1}-L_i'^{-1}\|_2=\max_b\|L(z_{i,b})^{-1}-L(z'_{i,b})^{-1}\|_2
\le\sigma_-^{-2}\kappa_L\max_b\|z_{i,b}-z'_{i,b}\|_2
\le\sigma_-^{-2}\kappa_L\|Z_i-Z_i'\|_F .
\]
\end{proof}

\section{Proof of Proposition~\ref{prop:gauge}}
\label{app:proof_gauge}

\begin{proposition}[Frame-Aligned Transport]
Fix a recurrent step $l$ with admissible geometric states, and let
$T^{l}_{j\to i,b}=(L^{l}_{i,b})^{-1}L^{l}_{j,b}$ and $M^{l}_j=L^{l}_jW_\phi(L^{l}_j)^{-1}$.
Then $m^{l}_{j\to i,b}=T^{l}_{j\to i,b}(\phi(\xi^{l}_j))_b$ with
$\|T^{l}_{j\to i,b}\|_2\le\sigma_+/\sigma_-$, and the frame maps satisfy
$T^{l}_{i\to i,b}=I_m$ and $T^{l}_{j\to i,b}T^{l}_{k\to j,b}=T^{l}_{k\to i,b}$, so their
product along any walk depends only on its endpoints. In environment coordinates, the
aggregate and the residuals $\delta^{l}_{ij}:=m^{l}_{j\to i}-\phi(\xi^{l}_i)$ read
\[
L^{l}_i\bar\xi^{l}_i=\sum\nolimits_{j\in\tilde{\mathcal N}(i)}\omega^{l}_{ij}\,M^{l}_jh^{l}_j,
\qquad
L^{l}_i\delta^{l}_{ij}=M^{l}_jh^{l}_j-M^{l}_ih^{l}_i,
\]
where each $M^{l}_j$ is similar to $W_\phi$ with
$\|M^{l}_j\|_2\le(\sigma_+/\sigma_-)\|W_\phi\|_2$, and $M^{l}_j=cI_d$ when $W_\phi=cI_d$.
\end{proposition}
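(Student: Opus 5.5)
The plan is to unfold the definitions directly, using Lemma~\ref{lem:wellposed} only for invertibility and the spectral bounds.

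\emph{Transport maps.} By Lemma~\ref{lem:wellposed}, each $L^{l}_{i,b}$ is invertible at every admissible state. Hence the triangular solve in Eq.~\eqref{eq:cholesky_frame_transport} has the unique solution $m^{l}_{j\to i,b}=(L^{l}_{i,b})^{-1}L^{l}_{j,b}(\phi(\xi^{l}_j))_b=T^{l}_{j\to i,b}(\phi(\xi^{l}_j))_b$. Submultiplicativity together with $\|L^{l}_{j,b}\|_2\le\sigma_+$ and $\|(L^{l}_{i,b})^{-1}\|_2\le\sigma_-^{-1}$ gives $\|T^{l}_{j\to i,b}\|_2\le\sigma_+/\sigma_-$. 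The identity $T^{l}_{i\to i,b}=I_m$ is immediate. The composition rule follows by cancelling the inner factor:
\[
(L^{l}_{i,b})^{-1}L^{l}_{j,b}(L^{l}_{j,b})^{-1}L^{l}_{k,b}=T^{l}_{k\to i,b}.
\]
Induction on the length of a walk then telescopes the product to $(L^{l}_{\mathrm{end},b})^{-1}L^{l}_{\mathrm{start},b}$, which depends only on the endpoints.

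\emph{Environment-coordinate identities.} I would first lift the blockwise relation $L^{l}_{i,b}m^{l}_{j\to i,b}=L^{l}_{j,b}(\phi(\xi^{l}_j))_b$ to full vectors. This works because $L^{l}_i$ and $L^{l}_j$ are block diagonal with the same partition, so concatenation commutes with applying them, and therefore
\[
L^{l}_im^{l}_{j\to i}=L^{l}_jW_\phi\xi^{l}_j .
\]
Next I would insert $I=(L^{l}_j)^{-1}L^{l}_j$ to obtain
\[
L^{l}_jW_\phi\xi^{l}_j=L^{l}_jW_\phi(L^{l}_j)^{-1}L^{l}_j\xi^{l}_j=M^{l}_jh^{l}_j .
\]
Multiplying Eq.~\eqref{eq:aligned_aggregation} by $L^{l}_i$, which is linear and does not depend on $j$, then gives the aggregate formula. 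For the residual, $L^{l}_i\delta^{l}_{ij}=L^{l}_im^{l}_{j\to i}-L^{l}_iW_\phi\xi^{l}_i$, and the same insertion applied at node $i$ turns the second term into $M^{l}_ih^{l}_i$.

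\emph{Properties of $M^{l}_j$.} Similarity to $W_\phi$ holds by construction, since $M^{l}_j=PW_\phi P^{-1}$ with $P=L^{l}_j$ invertible. The norm bound $\|M^{l}_j\|_2\le\|L^{l}_j\|_2\|W_\phi\|_2\|(L^{l}_j)^{-1}\|_2\le(\sigma_+/\sigma_-)\|W_\phi\|_2$ uses the block-diagonal extension of Lemma~\ref{lem:wellposed}. If $W_\phi=cI_d$, the conjugation is trivial and $M^{l}_j=cI_d$.

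No step is a genuine obstacle. The only point needing care is the block-to-full lift: $W_\phi$ is a full $d\times d$ matrix, not block diagonal, so I must apply the frames to the vector $\phi(\xi^{l}_j)$ only after $W_\phi$ has acted, and never commute $L^{l}_j$ past $W_\phi$. That is exactly why the conjugated matrix $M^{l}_j$, rather than $W_\phi$ itself, appears in the identities.
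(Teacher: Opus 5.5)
Your proposal is correct and follows essentially the same route as the paper's proof. In both, Lemma~\ref{lem:wellposed} gives invertibility and the spectral bounds, the composition rule comes from cancelling the inner frame, the blockwise transport identity is lifted using the shared block-diagonal partition, and inserting $(L^{l}_j)^{-1}L^{l}_j$ produces $M^{l}_jh^{l}_j$. The only cosmetic difference is ordering: the paper multiplies each block of the aggregate by $L^{l}_{i,b}$ and then stacks, whereas you lift to full vectors first and then multiply by $L^{l}_i$, which is equivalent.
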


\begin{proof}
Throughout, $\|\cdot\|_2$ denotes the Euclidean norm of a vector and the spectral norm of a
matrix. The geometric states are admissible, so Lemma~\ref{lem:wellposed} applies to every
block frame $L^{l}_{i,b}$ and to every block-diagonal frame $L^{l}_i$.

\smallskip\noindent
\emph{Weights.} For finite $\xi^{l}$ and admissible $Z^{l}$, $h^{l}_i=L^{l}_i\xi^{l}_i$ and
$e^{l}_{ij}=\|h^{l}_i-h^{l}_j\|_2^{2}$ are finite, so every term
$\exp(-e^{l}_{ij}/\tau)$ in Eq.~\eqref{eq:weights} is strictly positive and the normalizer
is finite. Hence $\omega^{l}_{ij}>0$ and $\sum_{j\in\tilde{\mathcal N}(i)}\omega^{l}_{ij}=1$.

\smallskip\noindent
\emph{Transport.} By Lemma~\ref{lem:wellposed}, $L^{l}_{i,b}$ is invertible, so the
triangular solve in Eq.~\eqref{eq:cholesky_frame_transport} returns the unique
$m^{l}_{j\to i,b}$ with $L^{l}_{i,b}m^{l}_{j\to i,b}=L^{l}_{j,b}(\phi(\xi^{l}_j))_b$, i.e.
$m^{l}_{j\to i,b}=(L^{l}_{i,b})^{-1}L^{l}_{j,b}(\phi(\xi^{l}_j))_b=T^{l}_{j\to i,b}(\phi(\xi^{l}_j))_b$.
Submultiplicativity and the spectral bounds of Lemma~\ref{lem:wellposed} give
$\|T^{l}_{j\to i,b}\|_2\le\|(L^{l}_{i,b})^{-1}\|_2\,\|L^{l}_{j,b}\|_2\le\sigma_-^{-1}\sigma_+$.

\smallskip\noindent
\emph{Composition.} Directly from the definition,
$T^{l}_{i\to i,b}=(L^{l}_{i,b})^{-1}L^{l}_{i,b}=I_m$ and
\[
T^{l}_{j\to i,b}\,T^{l}_{k\to j,b}
=(L^{l}_{i,b})^{-1}L^{l}_{j,b}(L^{l}_{j,b})^{-1}L^{l}_{k,b}
=(L^{l}_{i,b})^{-1}L^{l}_{k,b}=T^{l}_{k\to i,b}.
\]
For a walk $i_0\to i_1\to\cdots\to i_k$, induction on $k$ with this rule gives
$T^{l}_{i_{k-1}\to i_k,b}\cdots T^{l}_{i_0\to i_1,b}=T^{l}_{i_0\to i_k,b}=(L^{l}_{i_k,b})^{-1}L^{l}_{i_0,b}$,
which involves only the frames of the two endpoints; for a closed walk it equals $I_m$.

\smallskip\noindent
\emph{Environment-coordinate form.} Taking the $b$-th block of
Eq.~\eqref{eq:aligned_aggregation}, multiplying by $L^{l}_{i,b}$ and using linearity,
\[
L^{l}_{i,b}(\bar\xi^{l}_i)_b
=\sum_{j\in\tilde{\mathcal N}(i)}\omega^{l}_{ij}L^{l}_{i,b}m^{l}_{j\to i,b}
=\sum_{j\in\tilde{\mathcal N}(i)}\omega^{l}_{ij}L^{l}_{j,b}\big(\phi(\xi^{l}_j)\big)_b .
\]
All nodes share the block partition and $L^{l}_i$ is block diagonal, so stacking this
identity over $b=1,\dots,B$ yields
$L^{l}_i\bar\xi^{l}_i=\sum_j\omega^{l}_{ij}L^{l}_jW_\phi\xi^{l}_j$. Since
$\xi^{l}_j=(L^{l}_j)^{-1}h^{l}_j$, each summand equals
$\omega^{l}_{ij}L^{l}_jW_\phi(L^{l}_j)^{-1}h^{l}_j=\omega^{l}_{ij}M^{l}_jh^{l}_j$, which is
the stated form. By construction $M^{l}_j=L^{l}_jW_\phi(L^{l}_j)^{-1}$ is similar to
$W_\phi$, and
$\|M^{l}_j\|_2\le\|L^{l}_j\|_2\,\|W_\phi\|_2\,\|(L^{l}_j)^{-1}\|_2\le(\sigma_+/\sigma_-)\|W_\phi\|_2$
by Lemma~\ref{lem:wellposed}.

\smallskip\noindent
\emph{Residuals in environment coordinates.} For $j\in\tilde{\mathcal N}(i)$, stacking the
blocks of Eq.~\eqref{eq:cholesky_frame_transport} as above gives
$L^{l}_im^{l}_{j\to i}=L^{l}_jW_\phi\xi^{l}_j$, while
$L^{l}_i\phi(\xi^{l}_i)=L^{l}_iW_\phi(L^{l}_i)^{-1}h^{l}_i=M^{l}_ih^{l}_i$. Hence
\[
L^{l}_i\,\delta^{l}_{ij}
=L^{l}_im^{l}_{j\to i}-L^{l}_i\phi(\xi^{l}_i)
=L^{l}_jW_\phi(L^{l}_j)^{-1}h^{l}_j-M^{l}_ih^{l}_i
=M^{l}_jh^{l}_j-M^{l}_ih^{l}_i .
\]

\smallskip\noindent
\emph{Scalar transform.} If $W_\phi=cI_d$, then
$M^{l}_j=cL^{l}_j(L^{l}_j)^{-1}=cI_d$.
\end{proof}

\section{Proof of Proposition~\ref{prop:separation}}
\label{app:proof_separation}

\begin{proposition}[Second-Order Refinement]
Fix a step $l$. Write $\mathcal R^{l}_{i,b}=\{(\omega^{l}_{ij},\delta^{l}_{ij,b})\}_{j\in\tilde{\mathcal N}(i)}$
for the residual configuration of node $i$ in block $b$, with weighted mean
$\bar\delta^{l}_{i,b}$, and call an update \emph{first-order} if it depends on
$\mathcal R^{l}_{i,b}$ only through the weights and $\bar\delta^{l}_{i,b}$. Let $i,i'$ be nodes
with $\xi^{l}_i=\xi^{l}_{i'}$, $Z^{l}_i=Z^{l}_{i'}$, $u_i=u_{i'}$, and
$\bar\delta^{l}_{i,b}=\bar\delta^{l}_{i',b}$ for all $b$.
\begin{enumerate}[label=(\roman*),leftmargin=1.6em,itemsep=1pt,topsep=2pt]
\item Every first-order update, including the mean-only target
$C^{l,\mathrm{mean}}_{i,b}=\bar\delta^{l}_{i,b}(\bar\delta^{l}_{i,b})^{\top}+\epsilon_sI_m$,
the controller of Eq.~\eqref{eq:correction}, and the feature update of
Eq.~\eqref{eq:feature_gate}, returns identical outputs for $i$ and $i'$; in particular
$\xi^{l+1}_i=\xi^{l+1}_{i'}$ and $\Delta z^{l}_{i,b}=\Delta z^{l}_{i',b}$.
\item The second-order target is injective in the statistic:
$C^{l}_{i,b}\ne C^{l}_{i',b}$ implies $\hat z^{l}_{i,b}\ne\hat z^{l}_{i',b}$. If
$\lambda\in(0,1]$ and Eq.~\eqref{eq:task_guided_geometry_update} is not clipped in block
$b$, then $Z^{l+1}_i\ne Z^{l+1}_{i'}$, and $h^{l+1}_i\ne h^{l+1}_{i'}$ unless
$\xi^{l+1}_i\in\ker(L^{l+1}_i-L^{l+1}_{i'})$, a proper subspace.
\item If $i$ has at least two neighbors and $\mathcal R^{l}_{i',b}$ ranges over all
configurations with the same weights and first moment, those with
$C^{l}_{i',b}=C^{l}_{i,b}$ form a Lebesgue-null set.
\item The target depends on $\mathcal R^{l}_{i,b}$ only through $C^{l}_{i,b}$ and is
therefore strictly weaker than an injective aggregator: for $m=1$ there are distinct
three-point residual sets with equal weights, first, and second moments. Conversely, for
$m\ge2$ the sets $\{0,\pm v\}$ and $\{0,\pm w\}$ with $v=(1,1,0,\dots,0)^{\top}$,
$w=(1,-1,0,\dots,0)^{\top}$ and equal weights on $\pm v$, $\pm w$ share the mean-only and
the diagonal target
$C^{l,\mathrm{diag}}_{i,b}=\mathrm{Diag}\big(\sum_j\omega^{l}_{ij}\,\delta^{l}_{ij,b}\odot\delta^{l}_{ij,b}\big)+\epsilon_sI_m$,
yet $C^{l}_{i,b}\ne C^{l}_{i',b}$.
\end{enumerate}
\end{proposition}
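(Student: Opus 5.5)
The four parts are essentially independent. I will treat them in order, using only the definitions in Eqs.~\eqref{eq:second_moment}--\eqref{eq:task_guided_geometry_update} together with Lemma~\ref{lem:wellposed} and Proposition~\ref{prop:gauge}.

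\emph{Part (i).} This is a bookkeeping argument. The controller input $q^{l}_{i,b}=[(\xi^{l}_i)_b\|(\bar\xi^{l}_i)_b\|u_i]$ depends on the configuration only through $\bar\xi^{l}_i$. Since $\delta^{l}_{ij}=m^{l}_{j\to i}-\phi(\xi^{l}_i)$ and the weights sum to one (Proposition~\ref{prop:gauge}), we have $\bar\xi^{l}_i=\phi(\xi^{l}_i)+\bar\delta^{l}_i$. Hence equal $\xi^{l}$ and equal mean residuals force $\bar\xi^{l}_i=\bar\xi^{l}_{i'}$. Together with $u_i=u_{i'}$, this gives $q^{l}_{i,b}=q^{l}_{i',b}$, so $\Delta z^{l}_{i,b}=\Delta z^{l}_{i',b}$. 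It also gives equal gates $r^{l}$, and therefore $\xi^{l+1}_i=\xi^{l+1}_{i'}$ by Eq.~\eqref{eq:feature_gate}. The mean-only target is by definition a function of $\bar\delta$ alone, and the general claim is simply the definition of ``first-order'' applied to inputs that coincide.

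\emph{Part (ii).} The map $C\mapsto C+\eta I_m$ is injective. Cholesky factorization with positive diagonal is a bijection from $\mathrm{SPD}(m)$ onto lower-triangular matrices with positive diagonal, and $R\mapsto(\operatorname{Svec}_{\mathrm{sl}}R,\log\operatorname{diag}R)$ is injective on that set. Composing these gives the injectivity of $\hat z$. In the unclipped case, $z^{l+1}_{i,b}-z^{l+1}_{i',b}=\lambda(\hat z^{l}_{i,b}-\hat z^{l}_{i',b})$, because the $z^{l}$ terms agree by hypothesis and the $\Delta z$ terms agree by part (i). This difference is nonzero since $\lambda>0$. The reconstruction $z\mapsto L(z)$ is injective, because the entries are read off directly and $\exp$ is injective. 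So $L^{l+1}_i\ne L^{l+1}_{i'}$, and $h^{l+1}_i-h^{l+1}_{i'}=(L^{l+1}_i-L^{l+1}_{i'})\xi^{l+1}_i$ by part (i). This vanishes exactly on the kernel of a nonzero matrix, which is a proper subspace.

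\emph{Part (iii).} This is the main obstacle. Fix the weights $\omega_j>0$ and the mean $\bar\delta$. The admissible configurations then form an affine subspace $\mathcal A$ of $(\mathbb R^m)^{|\tilde{\mathcal N}(i)|}$, of dimension $m(|\tilde{\mathcal N}(i)|-1)\ge m$ when there are at least two neighbors. On $\mathcal A$, the entry $C_{11}$ (say) is a real polynomial. I will show it is not constant on $\mathcal A$. To do so, perturb two residuals oppositely, $\delta_j\mapsto\delta_j+t e_1/\omega_j$ and $\delta_k\mapsto\delta_k-t e_1/\omega_k$. This keeps the mean fixed, and the coefficient of $t^2$ in $C_{11}$ is $1/\omega_j+1/\omega_k>0$. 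The level set $\{C_{11}=(C^{l}_{i,b})_{11}\}$ is therefore the zero set of a nonzero polynomial on $\mathcal A$, hence Lebesgue-null. The set where the full matrices agree is contained in it. The care needed here is in identifying $\mathcal A$ with Euclidean space and invoking the standard fact that zero sets of nonzero polynomials are null. I also need $\tilde{\mathcal N}(i)$ to contain at least two elements so that opposite perturbations exist; this holds even with one neighbor thanks to the self-loop, so the hypothesis is more than enough.

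\emph{Part (iv).} The factorization through $C$ is immediate from Eqs.~\eqref{eq:second_moment}--\eqref{eq:coordinate_target}. For $m=1$, I will match three moments with three points and equal weights $1/3$, using the zero-sum sets $\{-2,1,1\}$ and a second set with the same mean and variance. Any point $(a,b,-a-b)$ with $a^2+ab+b^2=3$ that is not a permutation of $(-2,1,1)$ works, for example $a=\sqrt3$, $b=0$. Both sets then have mean $0$ and second moment $2$. For $m\ge2$, both sets $\{0,\pm v\}$ and $\{0,\pm w\}$ have mean $0$, so they share $C^{\mathrm{mean}}$. Since $v\odot v=w\odot w$, they also share the diagonal target. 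The full second moments, however, are $2\omega\,vv^\top$ and $2\omega\,ww^\top$, whose $(1,2)$ entries are $+2\omega$ and $-2\omega$, so the full statistics differ.
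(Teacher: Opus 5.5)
Parts (i), (ii), and the $m\ge2$ half of (iv) follow the paper's proof essentially step for step. Your $m=1$ example, $\{-2,1,1\}$ versus $\{\sqrt3,0,-\sqrt3\}$ with weights $1/3$ (mean $0$, second moment $2$, third moments $-2$ and $0$), is a valid and more explicit replacement for the paper's family $\{q,p,r\}$ versus $\{-1,0,1\}$.

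The error is in (iii), where you treat the self residual as a free coordinate. It is not free. By Proposition~\ref{prop:gauge}, $T^{l}_{i\to i,b}=I_m$, so $m^{l}_{i\to i,b}=(\phi(\xi^{l}_i))_b$ and $\delta^{l}_{ii,b}=0$ in every configuration. Your affine space $\mathcal A\subset(\mathbb R^m)^{|\tilde{\mathcal N}(i)|}$ is therefore too large. The realizable configurations form the slice $\mathcal A\cap\{\delta_{ii,b}=0\}$, which has codimension $m$ in $\mathcal A$ and is thus itself Lebesgue-null there. Showing that the equal-$C$ set is null in $\mathcal A$ therefore says nothing about its measure inside the slice.

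For the same reason, your closing remark is false. With a single genuine neighbor $j$, the mean constraint $\omega^{l}_{ij}\delta^{l}_{ij,b}=\bar\delta^{l}_{i,b}$ forces $\delta^{l}_{ij,b}=\bar\delta^{l}_{i,b}/\omega^{l}_{ij}$. Hence $C^{l}_{i,b}=\bar\delta^{l}_{i,b}(\bar\delta^{l}_{i,b})^{\top}/\omega^{l}_{ij}+\epsilon_sI_m$ is determined by the weights and first moment, and the equal-$C$ set is the whole (one-point) configuration space. The self-loop contributes no perturbable residual, so the hypothesis of at least two neighbors is exactly what is needed, not more than enough.

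The repair gives precisely the paper's argument. Parametrize only the $k=|\mathcal N(i)|\ge2$ non-self residuals, so the constraint set has dimension $m(k-1)$. Then apply your opposite perturbation to two distinct genuine neighbors $j_1\ne j_2$, both different from $i$: $\delta_{j_1}\mapsto\delta_{j_1}+te_1/\omega_{j_1}$ and $\delta_{j_2}\mapsto\delta_{j_2}-te_1/\omega_{j_2}$. This line stays in the correct constraint set, and your $t^2$-coefficient $1/\omega_{j_1}+1/\omega_{j_2}>0$ is unchanged. The zero set of the resulting nonzero polynomial is then null in the right space. (The paper uses $\delta_2\mapsto\delta_2-(\omega_1/\omega_2)te_a$, which is the same line reparametrized.)
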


\begin{proof}
\emph{Preliminaries.} Since the weights sum to one (Proposition~\ref{prop:gauge}),
\begin{equation}
\bar\delta^{l}_{i,b}
=\sum_{j\in\tilde{\mathcal N}(i)}\omega^{l}_{ij}\,m^{l}_{j\to i,b}-\big(\phi(\xi^{l}_i)\big)_b
=(\bar\xi^{l}_i)_b-\big(\phi(\xi^{l}_i)\big)_b .
\end{equation}
Hence, under $\xi^{l}_i=\xi^{l}_{i'}$, the conditions $\bar\delta^{l}_{i,b}=\bar\delta^{l}_{i',b}$
for all $b$ and $\bar\xi^{l}_i=\bar\xi^{l}_{i'}$ are equivalent. Moreover,
$Z^{l}_i=Z^{l}_{i'}$ implies $L^{l}_i=L^{l}_{i'}$, and the self residual vanishes,
$\delta^{l}_{ii,b}=0$, because $T^{l}_{i\to i,b}=I_m$. By Eq.~\eqref{eq:second_moment},
$C^{l}_{i,b}\succeq\epsilon_sI_m\succ0$, so the target of Eq.~\eqref{eq:coordinate_target}
exists for every fixed jitter $\eta\ge0$. Throughout, the statistics are regarded as
functions of the residual configuration $\mathcal R^{l}_{i,b}$.

\medskip\noindent
\emph{(i) First-order invariance.}
By definition, a first-order update depends on $\mathcal R^{l}_{i,b}$ only through the
weights and $\bar\delta^{l}_{i,b}$, which coincide for $i$ and $i'$; hence its outputs
coincide. This covers the three updates listed in the statement:
\begin{itemize}[leftmargin=1.6em,itemsep=1pt,topsep=2pt]
\item the mean-only target is a function of $\bar\delta^{l}_{i,b}$ alone;
\item the controller input $q^{l}_{i,b}=[(\xi^{l}_i)_b\|(\bar\xi^{l}_i)_b\|u_i]$ coincides
for the two nodes, so $\Gamma_\theta(q^{l}_{i,b})=\Gamma_\theta(q^{l}_{i',b})$ and, by
Eq.~\eqref{eq:correction}, $\Delta z^{l}_{i,b}=\Delta z^{l}_{i',b}$;
\item the feature update of Eq.~\eqref{eq:feature_gate} is a function of
$(\xi^{l}_i,\bar\xi^{l}_i)$ only, so $\xi^{l+1}_i=\xi^{l+1}_{i'}$.
\end{itemize}

\medskip\noindent
\emph{(ii) Injectivity and propagation.}
For a fixed jitter, $C\mapsto R=\mathrm{SChol}(C)$ is the Cholesky factor of $C+\eta I_m$,
a bijection between $\mathrm{SPD}(m)$ and lower-triangular matrices with positive
diagonal~\citep{lin2019riemannian}, hence injective. The map
\begin{equation}
R\ \mapsto\ \hat z=\big[\operatorname{Svec}_{\mathrm{sl}}(R)^{\top},\ \log(\operatorname{diag}R)^{\top}\big]^{\top}
\end{equation}
of Eq.~\eqref{eq:coordinate_target} is injective because $R$ is recovered from $\hat z$ as
$\operatorname{mat}_{\mathrm{sl}}(\hat a)+\mathrm{Diag}(e^{\hat\ell})$. Therefore
$C^{l}_{i,b}\ne C^{l}_{i',b}$ implies $\hat z^{l}_{i,b}\ne\hat z^{l}_{i',b}$.

When Eq.~\eqref{eq:task_guided_geometry_update} is not clipped in block $b$, the
projection acts as the identity. Since $z^{l}_{i,b}=z^{l}_{i',b}$ and
$\Delta z^{l}_{i,b}=\Delta z^{l}_{i',b}$ by (i),
\begin{equation}
z^{l+1}_{i,b}-z^{l+1}_{i',b}=\lambda\big(\hat z^{l}_{i,b}-\hat z^{l}_{i',b}\big)\ne0
\qquad\text{for }\lambda>0,
\end{equation}
so $Z^{l+1}_i\ne Z^{l+1}_{i'}$. The map $z\mapsto L(z)$ of Eq.~\eqref{eq:frame} is
injective, since $\ell_r=\log L_{rr}$ and $a_{rs}=L_{rs}$ recover $z$ from $L(z)$; hence
$L^{l+1}_{i,b}\ne L^{l+1}_{i',b}$ and, as block-diagonal matrices,
$L^{l+1}_i\ne L^{l+1}_{i'}$. With $\xi^{l+1}_i=\xi^{l+1}_{i'}$,
\begin{equation}
h^{l+1}_i-h^{l+1}_{i'}=\big(L^{l+1}_i-L^{l+1}_{i'}\big)\,\xi^{l+1}_i ,
\end{equation}
which vanishes only if $\xi^{l+1}_i$ lies in the kernel of the nonzero matrix
$L^{l+1}_i-L^{l+1}_{i'}$, a proper subspace of $\mathbb{R}^{d}$.

\medskip\noindent
\emph{(iii) Genericity.}
Let $j=1,\dots,k$ with $k\ge2$ index the neighbors of $i'$ other than itself, fix their
weights $(\omega_j)_{j=1}^{k}$, all positive by Proposition~\ref{prop:gauge}, and
parametrize the residuals of $i'$ in block $b$ by
$\delta=(\delta_1,\dots,\delta_k)\in\mathbb{R}^{mk}$; the self residual is zero and does not
enter. Equal first moment is the affine constraint
\begin{equation}
\sum_{j=1}^{k}\omega_j\delta_j=\bar\delta^{l}_{i,b},
\end{equation}
which defines an affine subspace $\mathcal A\subset\mathbb{R}^{mk}$ of dimension $m(k-1)$.
On $\mathcal A$ the map $C(\delta)=\sum_j\omega_j\delta_j\delta_j^{\top}+\epsilon_sI_m$ is
polynomial, and so is $p(\delta):=C(\delta)_{aa}-(C^{l}_{i,b})_{aa}$ for any fixed
coordinate $a$.

The polynomial $p$ is not identically zero on $\mathcal A$. For $\delta\in\mathcal A$ and
$t\in\mathbb{R}$, the perturbation $\delta_1\mapsto\delta_1+te_a$,
$\delta_2\mapsto\delta_2-(\omega_1/\omega_2)te_a$ keeps $\sum_j\omega_j\delta_j$ fixed, hence
stays in $\mathcal A$, and changes $C(\delta)_{aa}$ by
\begin{equation}
\omega_1\big[(\delta_{1,a}+t)^{2}-\delta_{1,a}^{2}\big]
+\omega_2\Big[\big(\delta_{2,a}-\tfrac{\omega_1}{\omega_2}t\big)^{2}-\delta_{2,a}^{2}\Big]
=2\omega_1t\,(\delta_{1,a}-\delta_{2,a})+\Big(\omega_1+\frac{\omega_1^{2}}{\omega_2}\Big)t^{2},
\end{equation}
a polynomial in $t$ with positive leading coefficient, hence nonconstant along the line.
The zero set of a nonzero polynomial on an affine space has Lebesgue measure zero, so
\begin{equation}
\{\delta\in\mathcal A:C(\delta)=C^{l}_{i,b}\}\subseteq\{\delta\in\mathcal A:p(\delta)=0\}
\end{equation}
is a Lebesgue-null subset of $\mathcal A$.

\medskip\noindent
\emph{(iv) Limits of second-order separation.}
The target $\hat z^{l}_{i,b}$ is a function of $C^{l}_{i,b}$ alone, so any two residual
configurations with equal weighted second moments yield equal targets.

\emph{The case $m=1$.} Take equal weights $1/3$, $q\in(0,1)$, and
$p,r=\tfrac{-q\pm\sqrt{4-3q^{2}}}{2}$, which are real since $4-3q^{2}>0$. Then $p$ and $r$
are the roots of $t^{2}+qt+(q^{2}-1)=0$, so $p+r=-q$, $pr=q^{2}-1$, and
$p^{2}+r^{2}=(p+r)^{2}-2pr=2-q^{2}$. The set $\{q,p,r\}$ therefore has first moment
$(q+p+r)/3=0$ and second moment $(q^{2}+p^{2}+r^{2})/3=2/3$, as does $\{-1,0,1\}$, while its
third moment is
\begin{equation}
\tfrac13\big(q^{3}+p^{3}+r^{3}\big)
=\tfrac13\big(q^{3}+(p+r)^{3}-3pr(p+r)\big)
=\tfrac13\big(q^{3}-q^{3}+3q(q^{2}-1)\big)
=q(q^{2}-1)\ne0 .
\end{equation}
The two sets are distinct multisets, so an injective aggregator separates them, whereas the
second-order target does not.

\emph{The case $m\ge2$.} Consider the configurations $\{0,\pm v\}$ and $\{0,\pm w\}$ with
weights $(\omega_0,\omega,\omega)$, $\omega>0$. In both, the first moment is
$\omega_0\cdot0+\omega v-\omega v=0$ (respectively with $w$), so the mean-only targets are
both $\epsilon_sI_m$. The full statistics are
\begin{equation}
C^{l}_{i,b}=2\omega\,vv^{\top}+\epsilon_sI_m,
\qquad
C^{l}_{i',b}=2\omega\,ww^{\top}+\epsilon_sI_m,
\end{equation}
whose $(1,2)$ entries are $+2\omega$ and $-2\omega$, so they differ. The diagonal
statistics use $v\odot v=w\odot w=(1,1,0,\dots,0)^{\top}$ and therefore coincide,
$\mathrm{Diag}\big(2\omega(1,1,0,\dots,0)^{\top}\big)+\epsilon_sI_m$. Such configurations
arise, for instance, from two neighbors whose transformed features differ from
$\phi(\xi^{l}_i)$ by $\pm v$ (respectively $\pm w$) and share the same frame and environment
distance to the target.
\end{proof}

\section{Proof of Proposition~\ref{prop:stability}}
\label{app:proof_stability}

\begin{proposition}[Stability and Convergence]
Fix $\tau>0$, $\epsilon_s>0$, $\eta\ge0$ and a feature bound $\Xi$. Let
$\mathcal D=\{(\xi,Z):\|\xi_i\|_2\le\Xi,\ z_{i,b}\in\mathcal Z\}$, let
$F:(\xi^{l},Z^{l})\mapsto(\xi^{l+1},Z^{l+1})$ denote one recurrent step, and let
$d_\infty(S,S')=\max_i\big(\|\xi_i-\xi'_i\|_2+\|Z_i-Z'_i\|_F\big)$.
\begin{enumerate}[label=(\roman*),leftmargin=1.6em,itemsep=1pt,topsep=2pt]
\item Eq.~\eqref{eq:task_guided_geometry_update} is a projected proximal step:
$\Pi_{\mathcal Z}$ is the Euclidean projection onto $\mathcal Z$, and $z^{l+1}_{i,b}$ is the
unique minimizer over $\mathcal Z$ of
\[
\tfrac{1-\lambda}{2}\|z-z^{l}_{i,b}\|_2^{2}+\tfrac{\lambda}{2}\|z-\hat z^{l}_{i,b}\|_2^{2}
-\gamma\langle\Delta z^{l}_{i,b},\,z-z^{l}_{i,b}\rangle .
\]
It differs from the uncorrected update
$z^{l+1,0}_{i,b}=\Pi_{\mathcal Z}\big((1-\lambda)z^{l}_{i,b}+\lambda\hat z^{l}_{i,b}\big)$
by at most $\gamma\|\Delta z^{l}_{i,b}\|_2$.
\item One step is Lipschitz on $\mathcal D$:
$d_\infty\big(F(S),F(S')\big)\le K\,d_\infty(S,S')$ with
$K=c_\xi+(1-\lambda)+\lambda c_{\hat Z}+\gamma c_\Delta$, where the constants depend only on
$m$, $B$, $\sigma_\pm$, $\kappa_L$, $\epsilon_s$, $\eta$, $\tau$, $\Xi$, $\lambda$, $\gamma$,
and the norms of $W_\phi$, $W_r$, and $\Gamma_\theta$, but not on node degrees.
\item For fixed $\xi$, let
$\Phi_\xi(Z)=\Pi_{\mathcal Z}\big((1-\lambda)Z+\lambda\hat Z(Z;\xi)+\gamma\Delta Z(Z;\xi)\big)$.
If $Z^{\star}$ is an unclipped fixed point of $\Phi_\xi$ whose Jacobian
$J=\partial\Phi_\xi(Z^{\star})$ has spectral radius $\rho(J)<1$, then $Z^{\star}$ is locally
attracting with rate $\rho(J)$: for every $\varepsilon>0$ there are $c>0$ and a neighborhood
$\mathcal U$ of $Z^{\star}$ such that
$\|Z^{l}-Z^{\star}\|_F\le c\,(\rho(J)+\varepsilon)^{l}\,\|Z^{0}-Z^{\star}\|_F$ for all
$Z^{0}\in\mathcal U$.
\item If moreover $\hat Z(\cdot;\xi)$ and $\Delta Z(\cdot;\xi)$ are $L_{\hat z}$- and
$L_\Delta$-Lipschitz on $\mathcal Z^{nB}$ and
$\rho:=(1-\lambda)+\lambda L_{\hat z}+\gamma L_\Delta<1$, the fixed point is unique and
$\|Z^{l+1}-Z^{l}\|_F\le\rho^{l}\|Z^{1}-Z^{0}\|_F$ from every initialization.
\end{enumerate}
\end{proposition}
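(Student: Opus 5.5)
The four parts are largely independent, and I would take them in order, with (ii) carrying most of the work.

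For part (i), I would first note that $\mathcal Z$ is a product of compact intervals. Coordinatewise clipping onto such a box is therefore exactly the Euclidean projection, because the squared distance separates across coordinates and each one-dimensional problem is solved by clipping. Next I would complete the square in the objective. Since the weights $1-\lambda$ and $\lambda$ sum to one, the quadratic part equals $\tfrac12\|z-c\|_2^2+\text{const}$ with
\[
c=(1-\lambda)z^l_{i,b}+\lambda\hat z^l_{i,b}+\gamma\Delta z^l_{i,b}.
\]
This objective is $1$-strongly convex on the closed convex set $\mathcal Z$, so it has a unique minimizer, namely $\Pi_{\mathcal Z}(c)$. The gap to the uncorrected update then follows from nonexpansiveness of the projection onto a convex set, which gives $\|\Pi(c)-\Pi(c-\gamma\Delta z)\|_2\le\gamma\|\Delta z\|_2$.

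For part (ii), I would chain node-local Lipschitz bounds on the compact set $\mathcal D$, with all constants drawn from Lemma~\ref{lem:wellposed}.
\begin{enumerate}
\item $h_i=L_i\xi_i$ is Lipschitz in $(\xi_i,Z_i)$ with constant $\sigma_++\kappa_L\Xi$, and $\|h_i\|_2\le\sigma_+\Xi$.
\item Each energy $e_{ij}$ is bounded by $4\sigma_+^2\Xi^2$ and Lipschitz in $(h_i,h_j)$.
\item The softmax weights are Lipschitz in the energies in the $\ell_1$ sense, with constant of order $1/\tau$. The point is that $\sum_j|\omega_{ij}-\omega'_{ij}|\le(2/\tau)\max_j|e_{ij}-e'_{ij}|$, which holds for a softmax uniformly in the number of entries. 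This is what makes the constant independent of degree.
\item The transported messages $m_{j\to i,b}=L_{i,b}^{-1}L_{j,b}(W_\phi\xi_j)_b$ are bounded by $(\sigma_+/\sigma_-)\|W_\phi\|_2\Xi$ and Lipschitz, using the inverse-Lipschitz bound of the lemma.
\item The aggregate $\bar\xi_i=\sum_j\omega_{ij}m_{j\to i}$ is then Lipschitz with constant $\max\|m\|\cdot\sum_j|\Delta\omega|+\max_j\|\Delta m\|$, and the $\max_i$ metric absorbs the neighbor terms.
\item The sigmoid gate is $\tfrac14\|W_r\|$-Lipschitz and yields $c_\xi$.
\item $C_{i,b}$ is treated identically, as a convex combination of bounded outer products, with residuals bounded by $2(\sigma_+/\sigma_-)\|W_\phi\|_2\Xi$.
\item The map $C\mapsto\hat z$ is Lipschitz on $\{C\succeq\epsilon_s I,\ \|C\|\le\text{bound}\}$, because the Cholesky factor is smooth on SPD matrices and $\log$ of diagonal entries bounded below by $\sqrt{\epsilon_s+\eta}$ is smooth. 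On this compact set the derivatives are bounded, which gives $c_{\hat Z}$.
\item The controller $\Gamma_\theta$ is Lipschitz via $\mathrm{SiLU}$ and its weight norms. The bilinear term $UV^\top$ is Lipschitz on bounded inputs, which gives $c_\Delta$.
\end{enumerate}
Combining these through the nonexpansive projection and the triangle inequality gives $K$ in the stated form.

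I expect the main obstacle to be the honest degree-free bound for the softmax-weighted sums together with the Lipschitz bound on the Cholesky map. For the latter I would first try the explicit derivative formula $dR=R\,\Phi(R^{-1}dC\,R^{-\top})$, where $\Phi$ takes the lower triangle with the diagonal halved. This gives $\|dR\|\le\|R\|\|R^{-1}\|^2\|dC\|$, with $\|R^{-1}\|\le(\epsilon_s+\eta)^{-1/2}$. The boundedness of the controller output also needs $\mathcal D$, since the inputs $\xi$, $\bar\xi$, and $u$ are bounded there.

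For part (iii), at an unclipped fixed point $\Phi_\xi$ coincides with the unprojected smooth map in a neighborhood, since the clip is inactive near interior points or, more generally, where no coordinate is at the boundary. The plan is as follows. Choose a norm adapted to $J$ in which $\|J\|\le\rho(J)+\varepsilon/2$; such a norm exists by the standard Householder construction. Use differentiability to obtain $\|\Phi(Z)-Z^\star-J(Z-Z^\star)\|\le(\varepsilon/2)\|Z-Z^\star\|$ on a small ball, which gives contraction and invariance of that ball. Finally, return to the Frobenius norm by norm equivalence, which produces the constant $c$. For part (iv), the inner map is $\rho$-Lipschitz by the triangle inequality, and so is its composition with the nonexpansive projection. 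Since $\mathcal Z^{nB}$ is a complete metric space, Banach's fixed point theorem gives uniqueness, and induction on $\|Z^{l+1}-Z^l\|\le\rho\|Z^l-Z^{l-1}\|$ gives the stated rate.
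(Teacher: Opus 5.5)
Your proposal is correct and follows the paper's proof essentially step for step. The paper likewise completes the square and uses nonexpansiveness of the projection for (i); it chains node-local bounds for (ii), with degree independence coming from the softmax $\ell_\infty\to\ell_1$ estimate with constant $2/\tau$ and the Cholesky bound from $dR=R\,\Phi(R^{-1}dC\,R^{-\top})$; and it uses an adapted-norm contraction for (iii) and Banach's fixed-point theorem for (iv), with your slightly looser constants (e.g.\ $\sigma_++\kappa_L\Xi$ in place of $\max(\kappa_L\Xi,\sigma_+)$) being immaterial.
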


\begin{proof}
Throughout, $\|\cdot\|$ denotes the Euclidean norm of a vector and the spectral norm of a
matrix, $\|\cdot\|_F$ the Frobenius norm, and $\|\cdot\|_1$, $\|\cdot\|_\infty$ the vector
$\ell_1$ and $\ell_\infty$ norms. The constants $\sigma_\pm$ and $\kappa_L$ are those of
Lemma~\ref{lem:wellposed}.

\medskip\noindent
\emph{(i) Projected proximal step.}
$\mathcal Z$ is a product of closed intervals, and the objective $\tfrac12\|z-y\|^{2}$
separates over coordinates, so its minimizer over $\mathcal Z$ is obtained by clipping each
coordinate of $y$ to its interval; hence $\Pi_{\mathcal Z}$ is the Euclidean projection. Let
$u=(1-\lambda)z^{l}_{i,b}+\lambda\hat z^{l}_{i,b}$ and $c=\Delta z^{l}_{i,b}$. Expanding
the squares and using $(1-\lambda)+\lambda=1$,
\begin{equation}
\tfrac{1-\lambda}{2}\|z-z^{l}_{i,b}\|^{2}+\tfrac{\lambda}{2}\|z-\hat z^{l}_{i,b}\|^{2}
-\gamma\langle c,z-z^{l}_{i,b}\rangle
=\tfrac12\|z-(u+\gamma c)\|^{2}+\mathrm{const},
\end{equation}
so the unique minimizer over $\mathcal Z$ is $\Pi_{\mathcal Z}(u+\gamma c)=z^{l+1}_{i,b}$ by
Eq.~\eqref{eq:task_guided_geometry_update}. The projection onto a closed convex set is
non-expansive, whence
\begin{equation}
\|z^{l+1}_{i,b}-z^{l+1,0}_{i,b}\|
=\|\Pi_{\mathcal Z}(u+\gamma c)-\Pi_{\mathcal Z}(u)\|\le\gamma\|c\| .
\end{equation}

\medskip\noindent
\emph{(ii) One-step Lipschitz bound.}
Let $S=(\xi,Z)$ and $S'=(\xi',Z')$ lie in $\mathcal D$, write $d:=d_\infty(S,S')$, and let
primes denote quantities computed from $S'$. Set $w_\phi:=\|W_\phi\|$, $w_r:=\|W_r\|$,
$\bar U:=\max_i\|u_i\|$, $\ell_\Gamma:=\mathrm{Lip}(\Gamma_\theta)\le1.1\|W_{\Gamma,2}\|\|W_{\Gamma,1}\|$
(since $\sup|\mathrm{SiLU}'|<1.1$), and
\begin{gather*}
H:=\sigma_+\Xi,\qquad c_h:=\max(\kappa_L\Xi,\sigma_+),\qquad c_\omega:=16Hc_h/\tau,\qquad
\bar M:=\sigma_+\sigma_-^{-1}w_\phi\Xi,\\
c_m:=w_\phi\big(\sigma_-^{-2}\sigma_+\kappa_L\Xi+\sigma_-^{-1}\kappa_L\Xi+\sigma_-^{-1}\sigma_+\big),\qquad
c_{\bar\xi}:=\bar Mc_\omega+c_m,\\
c_\xi:=2+c_{\bar\xi}+\tfrac{w_r}{4}(1+c_{\bar\xi})(\bar M+\Xi),\qquad
\bar D:=\bar M+w_\phi\Xi,\qquad \bar C:=\bar D^{2}+\epsilon_s,\\
c_C:=\bar D^{2}c_\omega+2\bar D(c_m+w_\phi),\qquad
L_{\mathrm{chol}}:=\frac{(\bar C+\eta)^{1/2}}{\epsilon_s+\eta},\qquad
c_z:=\max\big\{1,(\epsilon_s+\eta)^{-1/2}\big\},\\
c_{\hat Z}:=\sqrt{B}\,c_zL_{\mathrm{chol}}c_C,\qquad
\bar G:=\sup_{\|q\|\le\Xi+\bar M+\bar U}\|\Gamma_\theta(q)\|,\qquad
c_\Delta:=\sqrt{3B}\,\max(\bar G,1)\,\ell_\Gamma(1+c_{\bar\xi}).
\end{gather*}
Each bound below is derived for an arbitrary node $i$; since $d$ majorizes the perturbation
of $i$ and of every neighbor, the same $d$ appears throughout.

\smallskip
\emph{Frames.} By Lemma~\ref{lem:wellposed}, $\|L_i\|\le\sigma_+$, $\|L_i^{-1}\|\le\sigma_-^{-1}$,
$\|L_i-L'_i\|_F\le\kappa_L\|Z_i-Z'_i\|_F\le\kappa_Ld$ and
$\|L_i^{-1}-L_i'^{-1}\|\le\sigma_-^{-2}\kappa_Ld$.

\smallskip
\emph{Environment representations and weights.} $\|h_i\|=\|L_i\xi_i\|\le\sigma_+\Xi=H$ and
\begin{equation}
\|h_i-h'_i\|\le\|L_i-L'_i\|\,\|\xi_i\|+\|L'_i\|\,\|\xi_i-\xi'_i\|
\le\kappa_L\Xi\|Z_i-Z'_i\|_F+\sigma_+\|\xi_i-\xi'_i\|\le c_hd .
\end{equation}
Since $\|h_i-h_j\|\le2H$ and $|\,\|x\|^{2}-\|y\|^{2}|\le(\|x\|+\|y\|)\|x-y\|$,
$|e_{ij}-e'_{ij}|\le4H\big(\|h_i-h'_i\|+\|h_j-h'_j\|\big)\le8Hc_hd$. The softmax map is
$2$-Lipschitz from $\ell_\infty$ to $\ell_1$: its Jacobian at $\omega$ is
$\mathrm{Diag}(\omega)-\omega\omega^{\top}$, and for any $v$,
\begin{equation}
\big\|(\mathrm{Diag}(\omega)-\omega\omega^{\top})v\big\|_1
=\sum_k\omega_k\,|v_k-\langle\omega,v\rangle|\le\max_kv_k-\min_kv_k\le2\|v\|_\infty,
\end{equation}
because $\langle\omega,v\rangle$ is a convex combination of the $v_k$; the bound extends to
finite differences by the integral form of the mean value theorem. Applying it to
Eq.~\eqref{eq:weights},
$\|\omega_{i\cdot}-\omega'_{i\cdot}\|_1\le\tfrac{2}{\tau}\max_j|e_{ij}-e'_{ij}|\le c_\omega d$.

\smallskip
\emph{Messages and aggregate.} By Proposition~\ref{prop:gauge}, $m_{j\to i}=L_i^{-1}L_jW_\phi\xi_j$,
so $\|m_{j\to i}\|\le\bar M$. Writing
\begin{equation}
m_{j\to i}-m'_{j\to i}
=(L_i^{-1}-L_i'^{-1})L_jW_\phi\xi_j+L_i'^{-1}(L_j-L'_j)W_\phi\xi_j+L_i'^{-1}L'_jW_\phi(\xi_j-\xi'_j)
\end{equation}
and bounding the three terms by $\sigma_-^{-2}\kappa_Ld\cdot\sigma_+w_\phi\Xi$,
$\sigma_-^{-1}\cdot\kappa_Ld\cdot w_\phi\Xi$ and $\sigma_-^{-1}\sigma_+w_\phi d$ gives
$\|m_{j\to i}-m'_{j\to i}\|\le c_md$. Since the weights are nonnegative and sum to one,
\begin{equation}
    \begin{aligned}
        \|\bar\xi_i-\bar\xi'_i\|
&\le\sum_j|\omega_{ij}-\omega'_{ij}|\,\|m_{j\to i}\|+\sum_j\omega'_{ij}\|m_{j\to i}-m'_{j\to i}\|\\
&\le\bar M\|\omega_{i\cdot}-\omega'_{i\cdot}\|_1+\max_j\|m_{j\to i}-m'_{j\to i}\|
\le c_{\bar\xi}d,
    \end{aligned}
\end{equation}
and $\|\bar\xi_i\|\le\bar M$.

\smallskip
\emph{Feature update.} The sigmoid is $\tfrac14$-Lipschitz with values in $[0,1]$, so the
gate of Eq.~\eqref{eq:feature_gate} satisfies
$\|r_i-r'_i\|\le\tfrac{w_r}{4}\big(\|\xi_i-\xi'_i\|+\|\bar\xi_i-\bar\xi'_i\|\big)\le\tfrac{w_r}{4}(1+c_{\bar\xi})d$.
Writing $\xi^{l+1}_i=\xi_i+r_i\odot(\bar\xi_i-\xi_i)$,
\begin{equation}
\xi^{l+1}_i-\xi'^{\,l+1}_i
=(\xi_i-\xi'_i)+(r_i-r'_i)\odot(\bar\xi_i-\xi_i)+r'_i\odot\big[(\bar\xi_i-\bar\xi'_i)-(\xi_i-\xi'_i)\big],
\end{equation}
and using $\|a\odot b\|\le\|a\|\,\|b\|$, $\|r'_i\odot b\|\le\|b\|$ and
$\|\bar\xi_i-\xi_i\|\le\bar M+\Xi$,
\begin{equation}
\|\xi^{l+1}_i-\xi'^{\,l+1}_i\|
\le d+\tfrac{w_r}{4}(1+c_{\bar\xi})(\bar M+\Xi)\,d+(c_{\bar\xi}+1)\,d=c_\xi d .
\end{equation}

\smallskip
\emph{Residual statistic.} $\delta_{ij}=m_{j\to i}-W_\phi\xi_i$ satisfies
$\|\delta_{ij}\|\le\bar D$ and $\|\delta_{ij}-\delta'_{ij}\|\le(c_m+w_\phi)d$, and the same
bounds hold for each block $\delta_{ij,b}$. Since
$\|\delta\delta^{\top}-\delta'\delta'^{\top}\|_F\le2\bar D\|\delta-\delta'\|$ and
$\|\delta\delta^{\top}\|_F=\|\delta\|^{2}\le\bar D^{2}$, the statistic of
Eq.~\eqref{eq:second_moment} satisfies
\begin{equation}
\|C_{i,b}-C'_{i,b}\|_F
\le\bar D^{2}\|\omega_{i\cdot}-\omega'_{i\cdot}\|_1+2\bar D\max_j\|\delta_{ij,b}-\delta'_{ij,b}\|
\le c_Cd,
\end{equation}
and $\epsilon_sI_m\preceq C_{i,b}\preceq\bar CI_m$ because $\delta\delta^{\top}\preceq\|\delta\|^{2}I_m$.

\smallskip
\emph{Cholesky target.} Let $\mathcal K=\{C:\epsilon_sI_m\preceq C\preceq\bar CI_m\}$, a
compact convex subset of $\mathrm{SPD}(m)$, and let $R(C)$ denote the lower Cholesky factor
of $C+\eta I_m$. We bound the two maps $C\mapsto R$ and $R\mapsto\hat z$ in turn.

The map $C\mapsto R(C)$ is real-analytic on $\mathrm{SPD}(m)$~\citep{lin2019riemannian}.
Differentiating $RR^{\top}=C+\eta I_m$ gives $R^{-1}\,dC\,R^{-\top}=X+X^{\top}$ with
$X=R^{-1}dR$ lower triangular, hence
\begin{equation}
dR=R\,\Phi\big(R^{-1}\,dC\,R^{-\top}\big),
\end{equation}
where $\Phi$ keeps the strictly lower part of its argument and halves the diagonal, so that
$\|\Phi(Y)\|_F\le\|Y\|_F$. On $\mathcal K$, $\|R\|^{2}=\lambda_{\max}(C+\eta I_m)\le\bar C+\eta$
and $\|R^{-1}\|^{2}=\lambda_{\min}(C+\eta I_m)^{-1}\le(\epsilon_s+\eta)^{-1}$, so
\begin{equation}
\|dR\|_F\le\|R\|\,\|R^{-1}\|^{2}\,\|dC\|_F\le L_{\mathrm{chol}}\|dC\|_F .
\end{equation}
Since $\mathcal K$ is convex, the mean value theorem gives
$\|R(C)-R(C')\|_F\le L_{\mathrm{chol}}\|C-C'\|_F$.

For the coordinates of Eq.~\eqref{eq:coordinate_target}, the strictly lower coordinates are
the entries $R_{rs}$ themselves. The diagonal entries satisfy
$R_{tt}^{2}=C_{tt}-C_{t,<t}\,C_{<t,<t}^{-1}\,C_{<t,t}$, a Schur complement of $C+\eta I_m$,
hence $R_{tt}^{2}\ge\epsilon_s+\eta$ and
\begin{equation}
|\log R_{tt}-\log R'_{tt}|\le(\epsilon_s+\eta)^{-1/2}\,|R_{tt}-R'_{tt}|.
\end{equation}
Collecting all entries in $\ell_2$,
\begin{equation}
\|\hat z_{i,b}-\hat z'_{i,b}\|\le c_z\|R-R'\|_F\le c_zL_{\mathrm{chol}}\,c_C\,d,
\end{equation}
and stacking the $B$ blocks gives
$\|\hat Z_i-\hat Z'_i\|_F\le\sqrt{B}\,c_zL_{\mathrm{chol}}c_Cd=c_{\hat Z}d$.

\smallskip
\emph{Correction.} The controller input $q_{i,b}=[(\xi_i)_b\|(\bar\xi_i)_b\|u_i]$ satisfies
$\|q_{i,b}\|\le\Xi+\bar M+\bar U$ and, the signature being fixed,
$\|q_{i,b}-q'_{i,b}\|\le\|\xi_i-\xi'_i\|+\|\bar\xi_i-\bar\xi'_i\|\le(1+c_{\bar\xi})d$.
$\Gamma_\theta$ is $\ell_\Gamma$-Lipschitz, and its output on this ball is bounded by
$\bar G$; the reshaped factors therefore satisfy $\|U\|_F,\|V\|_F,\|d_\ell\|\le\bar G$. Using
$\|UV^{\top}-U'V'^{\top}\|_F\le\bar G(\|U-U'\|_F+\|V-V'\|_F)$, the correction of
Eq.~\eqref{eq:correction} satisfies
\begin{equation}
    \begin{aligned}
        \|\Delta z_{i,b}-\Delta z'_{i,b}\|
&\le\|UV^{\top}-U'V'^{\top}\|_F+\|d_\ell-d'_\ell\|\\
&\le\max(\bar G,1)\big(\|U-U'\|_F+\|V-V'\|_F+\|d_\ell-d'_\ell\|\big)\\
&\le\sqrt3\max(\bar G,1)\,\|\Gamma_\theta(q_{i,b})-\Gamma_\theta(q'_{i,b})\|
\le\sqrt3\max(\bar G,1)\,\ell_\Gamma(1+c_{\bar\xi})\,d ,
    \end{aligned}
\end{equation}
and stacking the $B$ blocks gives $\|\Delta Z_i-\Delta Z'_i\|_F\le c_\Delta d$.

\smallskip
\emph{Composition.} $\Pi_{\mathcal Z}$ acts coordinatewise and is non-expansive in the
Frobenius norm, so by Eq.~\eqref{eq:task_guided_geometry_update}
\begin{equation}
\|Z^{l+1}_i-Z'^{\,l+1}_i\|_F
\le(1-\lambda)\|Z_i-Z'_i\|_F+\lambda\|\hat Z_i-\hat Z'_i\|_F+\gamma\|\Delta Z_i-\Delta Z'_i\|_F
\le\big((1-\lambda)+\lambda c_{\hat Z}+\gamma c_\Delta\big)d .
\end{equation}
Adding the feature bound and taking the maximum over $i$,
$d_\infty(F(S),F(S'))\le\big(c_\xi+(1-\lambda)+\lambda c_{\hat Z}+\gamma c_\Delta\big)d=Kd$.
None of the bounds involves the number of neighbors, so $K$ is independent of node degrees.

\medskip\noindent
\emph{(iii) Local convergence.}
Write $\Phi_\xi=\Pi_{\mathcal Z}\circ G$ with
\begin{equation}
G(Z)=(1-\lambda)Z+\lambda\hat Z(Z;\xi)+\gamma\Delta Z(Z;\xi).
\end{equation}
The argument has three steps.

\smallskip
\emph{Reduction to $G$.} Since $Z^{\star}$ is unclipped, $G(Z^{\star})=Z^{\star}$ lies in
the interior of $\mathcal Z^{nB}$. As $G$ is continuous, $G(Z)$ stays in the interior for
all $Z$ in a neighborhood $\mathcal V$ of $Z^{\star}$, so $\Phi_\xi=G$ on $\mathcal V$.

\smallskip
\emph{Differentiability.} On $\mathcal V$ the map $G$ is continuously differentiable: it is
a composition of the frame reconstruction of Eq.~\eqref{eq:frame}, the softmax, the
triangular solves, which are rational in the entries of invertible frames, the residual
second moment, which is polynomial, the Cholesky factorization and elementwise logarithm on
$\mathcal K$, and the SiLU network. Let $J=\partial G(Z^{\star})$.

\smallskip
\emph{Contraction near $Z^{\star}$.} Fix $\varepsilon>0$ with $\rho(J)+\varepsilon<1$.
There is a vector norm $\|\cdot\|_\varepsilon$ whose induced matrix norm satisfies
$\|J\|_\varepsilon\le\rho(J)+\varepsilon/2$~\citep[Lemma~5.6.10]{horn2013matrix}. By
differentiability, there is a neighborhood $\mathcal U\subseteq\mathcal V$ of $Z^{\star}$ on
which
\begin{equation}
\|G(Z)-Z^{\star}\|_\varepsilon\le\big(\rho(J)+\varepsilon\big)\|Z-Z^{\star}\|_\varepsilon .
\end{equation}
Hence $G$ maps $\mathcal U$ into itself, and iterating gives
\begin{equation}
\|Z^{l}-Z^{\star}\|_\varepsilon\le\big(\rho(J)+\varepsilon\big)^{l}\|Z^{0}-Z^{\star}\|_\varepsilon
\qquad\text{for all }Z^{0}\in\mathcal U .
\end{equation}
Equivalence of $\|\cdot\|_\varepsilon$ and $\|\cdot\|_F$ on the finite-dimensional space
$\mathbb{R}^{nB\cdot m(m+1)/2}$ yields the constant $c$~\citep{ortega1970iterative}.

\medskip\noindent
\emph{(iv) Global convergence.}
With $\xi$ fixed, $\hat Z(\cdot;\xi)$ and $\Delta Z(\cdot;\xi)$ are Lipschitz on
$\mathcal Z^{nB}$ by part (ii); let $L_{\hat z}$ and $L_\Delta$ be their constants. Then
\begin{equation}
\|G(Z)-G(Z')\|_F\le\big((1-\lambda)+\lambda L_{\hat z}+\gamma L_\Delta\big)\|Z-Z'\|_F=\rho\|Z-Z'\|_F,
\end{equation}
and non-expansiveness of $\Pi_{\mathcal Z}$ gives $\mathrm{Lip}(\Phi_\xi)\le\rho$. The set
$\mathcal Z^{nB}$ is a nonempty closed subset of a Euclidean space, hence complete, and
$\Phi_\xi$ maps it into itself. If $\rho<1$, the Banach fixed-point theorem yields a unique
$Z^{\star}$ with $\Phi_\xi(Z^{\star})=Z^{\star}$, and
\begin{equation}
\|Z^{l+1}-Z^{l}\|_F=\|\Phi_\xi(Z^{l})-\Phi_\xi(Z^{l-1})\|_F\le\rho\|Z^{l}-Z^{l-1}\|_F\le\rho^{l}\|Z^{1}-Z^{0}\|_F.
\end{equation}
\end{proof}

\section{Numerical Operators for Log-Triangular Targets}
\label{proof_4}

This section defines the operators used to construct the geometric update target in
Eq.~\eqref{eq:coordinate_target}.

\subsection{Strictly Lower-Triangular Vectorization}
For $A\in\mathbb{R}^{m\times m}$, the operator $\operatorname{Svec}_{\mathrm{sl}}$ extracts
the strictly lower-triangular entries in row-major order, so that
$[\operatorname{Svec}_{\mathrm{sl}}(A)]_{\kappa(r,s)}=A_{rs}$ with
$\kappa(r,s)=(r-1)(r-2)/2+s$ for $1\leq s<r\leq m$:
\begin{equation}
    \operatorname{Svec}_{\mathrm{sl}}(A)
    =
    \begin{bmatrix}
        A_{21},\,A_{31},\,A_{32},\,\ldots,\,A_{m1},\,\ldots,\,A_{m,m-1}
    \end{bmatrix}^{\top}
    \in\mathbb{R}^{m(m-1)/2}.
    \label{eq:appendix_svec_definition}
\end{equation}
The reconstruction operator $\operatorname{mat}_{\mathrm{sl}}$ of Eq.~\eqref{eq:block_geometry}
inverts this ordering: for $a\in\mathbb{R}^{m(m-1)/2}$,
\begin{equation}
    \big[\operatorname{mat}_{\mathrm{sl}}(a)\big]_{rs}
    =
    \begin{cases}
        a_{\kappa(r,s)}, & r>s,\\
        0, & r\leq s .
    \end{cases}
    \label{eq:appendix_mat_definition}
\end{equation}
The two operators satisfy
$\operatorname{Svec}_{\mathrm{sl}}\big(\operatorname{mat}_{\mathrm{sl}}(a)\big)=a$ and
$\operatorname{mat}_{\mathrm{sl}}\big(\operatorname{Svec}_{\mathrm{sl}}(A)\big)=\operatorname{tril}(A,-1)$,
where $\operatorname{tril}(A,-1)$ retains only the strictly lower-triangular entries of $A$.
In particular, $\operatorname{mat}_{\mathrm{sl}}$ is an isometry from
$\mathbb{R}^{m(m-1)/2}$ onto the strictly lower-triangular matrices, which is used in the
proof of Lemma~\ref{lem:wellposed}.

\subsection{Stabilized Cholesky Factorization}
For a symmetric matrix $C\in\mathbb{R}^{m\times m}$, let $\epsilon_0>0$ be the initial
diagonal jitter, $\beta>1$ its growth factor, and $T_{\max}\in\{0,1,\ldots\}$ the maximum
retry index. Define the jitter schedule
\begin{equation}
    \epsilon_t=\beta^t\epsilon_0,
    \qquad
    C_t=C+\epsilon_t I_m,
    \qquad
    t=0,\ldots,T_{\max},
    \label{eq:appendix_jitter_schedule}
\end{equation}
and, whenever the set is nonempty, the first admissible index
\begin{equation}
    t^*
    =
    \min\left\{
        t\in\{0,\ldots,T_{\max}\}:
        C_t\in\operatorname{SPD}(m)
    \right\}.
    \label{eq:appendix_jitter_index}
\end{equation}
The stabilized factorization returns the Cholesky factor of the first admissible candidate,
\begin{equation}
    \operatorname{SChol}(C;\epsilon_0,\beta,T_{\max})
    =
    \operatorname{chol}(C_{t^*}),
    \qquad
    RR^\top=C+\epsilon_{t^*}I_m,
    \label{eq:appendix_schol_definition}
\end{equation}
where $\operatorname{chol}$ returns the unique lower-triangular factor $R$ with positive
diagonal. The main text abbreviates this operator as $\operatorname{SChol}(C)$ and denotes
the selected jitter $\epsilon_{t^*}$ by $\eta$. The schedule contains $T_{\max}+1$ candidate
factorizations, including the initial attempt.

For the residual second-moment matrix, Eq.~\eqref{eq:second_moment} gives
$C_{i,b}^l\succeq\epsilon_s I_m$ with $\epsilon_s>0$, hence
$C_{i,b}^l+\epsilon_t I_m\succeq(\epsilon_s+\epsilon_t)I_m\succ0$ for every $t$. The
factorization is therefore well-defined, with $t^*=0$ in exact arithmetic; in floating-point
arithmetic, retries are triggered by numerical factorization failure. The jitter
$\epsilon_t$ is a numerical safeguard and is separate from the regularization $\epsilon_s$
that enters the statistic itself.

\subsection{Construction of the Coordinate Target}
Given $R_{i,b}^l=\operatorname{SChol}(C_{i,b}^l)$, the target coordinates are
\begin{equation}
    \widehat z_{i,b}^l
    =
    \begin{bmatrix}
        \widehat a_{i,b}^l\\
        \widehat\ell_{i,b}^l
    \end{bmatrix}
    =
    \begin{bmatrix}
        \operatorname{Svec}_{\mathrm{sl}}(R_{i,b}^l)\\
        \log\!\left(\operatorname{diag}(R_{i,b}^l)\right)
    \end{bmatrix}
    \in\mathbb{R}^{m(m+1)/2},
    \label{eq:appendix_target_coordinates}
\end{equation}
where $\operatorname{diag}$ extracts the diagonal as a column vector and the logarithm acts
elementwise; the positive diagonal of $R_{i,b}^l$ makes $\widehat\ell_{i,b}^l$
well-defined. With the coordinate ordering of Eq.~\eqref{eq:block_geometry},
\begin{equation}
    \operatorname{mat}_{\mathrm{sl}}(\widehat a_{i,b}^l)
    +
    \operatorname{Diag}\!\left(\exp(\widehat\ell_{i,b}^l)\right)
    =
    R_{i,b}^l ,
\end{equation}
so $\widehat z_{i,b}^l$ encodes $R_{i,b}^l$ in frame coordinates. Since Cholesky
factorization is a bijection between $\operatorname{SPD}(m)$ and lower-triangular matrices
with positive diagonals, the encoding is lossless:
$R_{i,b}^l(R_{i,b}^l)^\top=C_{i,b}^l+\epsilon_{t^*}I_m$.

The target frame has a direct geometric meaning. For $h=R_{i,b}^l x$,
\begin{equation}
    \|x\|_2^2
    =
    \big\|(R_{i,b}^l)^{-1}h\big\|_2^2
    =
    h^\top\big(C_{i,b}^l+\epsilon_{t^*}I_m\big)^{-1}h,
\end{equation}
so $R_{i,b}^l$ maps the local unit ball onto the residual second-moment ellipsoid in the
shared environment,
\begin{equation}
    \left\{R_{i,b}^l x:\|x\|_2\leq1\right\}
    =
    \left\{h:\,h^\top\big(C_{i,b}^l+\epsilon_{t^*}I_m\big)^{-1}h\leq1\right\}.
    \label{eq:appendix_target_ellipsoid}
\end{equation}
The singular values of $R_{i,b}^l$ are the square roots of the eigenvalues of
$C_{i,b}^l+\epsilon_{t^*}I_m$. Local directions mapped onto high-energy residual axes are
therefore stretched more strongly in the environment, where neighborhood weights are
computed.

\section{Datasets}\label{sec:dataset}

\begin{table*}[t]
    \centering
    \caption{Statistics of the experimental datasets.}
    \vspace{0.2cm}
    \resizebox{0.65\textwidth}{!}{
    \begin{tabular}{lcccc}
        \toprule
        Datasets      & Graphs & Avg. Nodes & Avg. Edges & Classes \\
        \midrule
        CiteSeer & - & 3,327 & 9,104 & 6 \\
        PubMed & - & 19,717 & 88,648 & 3 \\
        CS & - & 18,333 & 163,788 & 15 \\
        Physics & - & 34,493 & 495,924 & 5 \\
        Photo & - & 7,650 & 238,162 & 8\\
        Computers & - & 13,752 & 491,722 & 10\\
        \midrule
        PROTEINS  & 1,113   & 39.10      & 72.80      & 2       \\
        NCI1  & 4,110   & 29.87     & 32.30     & 2       \\
        Mutagenicity  & 4,337   & 30.32      & 30.77      & 2       \\
        FRANKENSTEIN  & 4,337   & 16.90       & 17.88      & 2       \\
        BBBP  & 2,050   & 23.90       & 51.60      & 2       \\
        ogbg-molhiv & 41,127 & 25.50 & 27.50 & 2 \\
        \bottomrule
    \end{tabular}
    }
    \label{tab:dataset}
\end{table*}

\subsection{Dataset Description}

We conduct node classification, link prediction, and graph classification on a variety of datasets. The statistics of the datasets are summarized in Table \ref{tab:dataset}. The detailed descriptions of these dataset are provided as follows:

(1) For node classification and link prediction:

\begin{itemize}
    \item \textbf{CiteSeer}: The CiteSeer dataset~\cite{sen2008collective} is a widely used benchmark citation network comprising 3,327 nodes and 9,104 edges, designed for multi-class node classification with 6 categories. In this graph, nodes represent scientific publications, while edges correspond to citation relationships between documents, capturing the underlying structure of the citation network. Each node is assigned a class label that indicates the research topic of the corresponding paper. 
    \item \textbf{PubMed}: The PubMed dataset~\cite{sen2008collective} is a large-scale citation network benchmark comprising 19,717 nodes and 88,648 edges, designed for multi-class node classification with 3 categories. In this graph, nodes represent scientific publications related to diabetes research, while edges denote citation relationships between documents, capturing the structural dependencies within the citation network. Each node is assigned a class label corresponding to the type of diabetes discussed in the paper.
    \item \textbf{CS}: The CS dataset~\cite{shchur2018pitfalls} is a co-authorship network derived from the Microsoft Academic Graph, comprising 18,333 nodes and 163,788 edges for multi-class node classification with 15 categories. In this graph, nodes represent authors in the field of computer science, while edges indicate co-authorship relationships between them, capturing the collaboration structure of the research community. Each node is assigned a class label corresponding to the primary research field of the author.
    \item \textbf{Physics}: The Physics dataset~\cite{shchur2018pitfalls} is a co-authorship network also derived from the Microsoft Academic Graph, consisting of 34,493 nodes and 495,924 edges for multi-class node classification with 5 categories. In this graph, nodes denote authors in the field of physics, and edges represent co-authorship relationships, reflecting the collaboration patterns within the physics community. Each node is labeled according to the primary research area of the author.
    \item \textbf{Photo}: The Photo dataset~\cite{shchur2018pitfalls,mcauley2015image} is an Amazon co-purchase network comprising 7,650 nodes and 238,162 edges for multi-class node classification with 8 categories. In this graph, nodes represent products from the Amazon Photo category, while edges indicate that two products are frequently purchased together, capturing the co-purchasing patterns among items. Each node is assigned a class label corresponding to the product category.
    \item \textbf{Computers}: The Computers dataset~\cite{shchur2018pitfalls,mcauley2015image} is another Amazon co-purchase network consisting of 13,752 nodes and 491,722 edges for multi-class node classification with 10 categories. In this graph, nodes denote products in the Amazon Computers category, and edges represent co-purchase relationships between items, reflecting user purchasing behavior. Each node is labeled according to its product category.
\end{itemize}

(2) For graph classification:

\begin{itemize}
    \item \textbf{PROTEINS}: The PROTEINS dataset~\cite{dobson2003distinguishing} is a benchmark graph classification dataset consisting of 1,113 protein graphs. In this dataset, each graph represents a protein, where nodes correspond to secondary structure elements (e.g., helices and sheets), and edges indicate spatial or sequential adjacency between these elements, capturing the structural organization of proteins. Each graph is assigned a class label indicating whether the protein belongs to a specific functional class.
    \item \textbf{NCI1}: The NCI1 dataset~\cite{wale2008comparison} is a widely used benchmark for graph classification, consisting of 4,110 molecular graphs. In this dataset, each graph represents a chemical compound, where nodes correspond to atoms and edges denote chemical bonds, capturing the molecular structure. Each graph is labeled according to its activity against non-small cell lung cancer, indicating whether the compound is active or inactive.
    \item \textbf{Mutagenicity}: The Mutagenicity dataset~\cite{kazius2005derivation} is a benchmark graph classification dataset comprising 4,337 molecular graphs. In this dataset, each graph represents a chemical compound, where nodes correspond to atoms and edges denote chemical bonds, capturing the molecular structure. Each graph is labeled according to its mutagenic effect on a biological system, indicating whether the compound is mutagenic or non-mutagenic.
    \item \textbf{FRANKENSTEIN}: The FRANKENSTEIN dataset~\cite{orsini2015graph} is a molecular graph classification benchmark derived from Mutagenicity, consisting of 4,337 graphs. Nodes correspond to atoms whose symbols are replaced by 780-dimensional MNIST digit images as node attributes, and edges denote chemical bonds. Each graph is labeled according to whether the compound is mutagenic.
    \item \textbf{BBBP}: The BBBP dataset~\cite{wu2018moleculenet} is a molecular graph classification benchmark consisting of 2,050 compounds. In this dataset, each graph represents a molecule, where nodes correspond to atoms and edges denote chemical bonds, capturing the molecular structure. Each graph is labeled according to its ability to penetrate the blood–brain barrier, indicating whether the compound is permeable or non-permeable.
    
    \item \textbf{ogbg-molhiv}: The ogbg-molhiv dataset~\cite{hu2020open} is a large-scale molecular graph benchmark from the Open Graph Benchmark (OGB), consisting of 41,127 molecules. In this dataset, each graph represents a molecule, where nodes correspond to atoms and edges denote chemical bonds, capturing the molecular structure. Each graph is labeled according to its ability to inhibit HIV replication, indicating whether the compound is HIV active or inactive.
\end{itemize}

\subsection{Data Processing} For node classification and link prediction, we evaluate on standard benchmarks including citation networks (CiteSeer and PubMed), Amazon co-purchasing graphs (Computers and Photo), and Coauthor networks (CS and Physics). All datasets are preprocessed following PyTorch Geometric~\footnote{https://pyg.org/}
, where graphs are converted to undirected forms and node features are row-normalized. To enhance structural awareness, we further construct structural signatures from normalized degree information and Random Walk Structural Encodings (RWSE), which enter the prototype gate and the geometric controller. Specially, we adopt the standard random edge split protocol to construct positive and negative evaluation edges for link prediction. For graph classification, we consider bioinformatics datasets from TUDataset (e.g., PROTEINS and NCI1) and molecular graphs (e.g., BBBP and ogbg-molhiv).

\section{Baselines}\label{sec:baselines}

\subsection{Baseline Description}

In this part, we introduce the details of the compared baselines as follows:

(1) \textbf{General Graph Neural Networks (GNNs).} We compare \method{} with seven general GNNs:

\begin{itemize}
    \item \textbf{GCN}~\cite{kipf2017semi}: GCN is a graph neural network that propagates and transforms node features through normalized neighborhood aggregation, enabling the learning of expressive node representations via layer-wise message passing and smoothing.
    \item \textbf{GIN}~\cite{xu2019how}: GIN is a graph neural network that employs injective aggregation functions to maximally preserve structural information, enabling the learning of highly expressive node representations with discriminative power comparable to the Weisfeiler–Lehman test.
    \item \textbf{ML$^2$-GCL}~\cite{liang2025ml}: ML$^2$-GCL is a graph contrastive learning framework that leverages manifold learning principles to construct lightweight augmentations and objectives, enabling effective representation learning by preserving intrinsic geometric structures in graph data.
    \item \textbf{AMPs}~\cite{errica2025adaptive}: AMPs is a graph neural network framework that adaptively controls message passing to balance information flow, enabling effective mitigation of oversmoothing, oversquashing, and underreaching in deep graph models.
    \item \textbf{WaveGC}~\cite{liu2025general}: WaveGC is a graph neural network that employs spectral wavelet convolutions via Chebyshev order decomposition, enabling multi-scale feature extraction and effective representation learning over graph structures.
    \item \textbf{SPARROW}~\cite{lin2025simplified}: SPARROW is a graph contrastive learning model that eliminates explicit data augmentation by leveraging intrinsic structural signals, enabling effective representation learning through a simplified and efficient contrastive objective.
    \item \textbf{G$^2$Former}~\cite{zhangrestricted}: G$^2$Former is a graph neural network that integrates restricted global-aware graph filters to bridge GNNs and Transformers, enabling expressive node representation learning by capturing both local and global dependencies.
\end{itemize}

(2) \textbf{Manifold-based GNNs.} We compare \method{} with three Manifold-based GNNs: 

\begin{itemize}
    \item  \textbf{HGCN}~\cite{chami2019hyperbolic}: HGCN is a graph neural network that operates in hyperbolic space, enabling the learning of hierarchical node representations by performing message passing under non-Euclidean geometry.
    \item \textbf{D-GCN}~\cite{sun2024motif}: D-GCN is a graph neural network that integrates motif-aware Riemannian representations with generative-contrastive learning, enabling expressive node embeddings by capturing higher-order structures and non-Euclidean geometry.
    \item \textbf{SPDGNN}~\cite{wang2025enhancing}: SPDGNN is a graph neural network that operates on symmetric positive definite (SPD) manifolds using Cholesky decomposition, enabling stable and expressive representation learning by preserving the geometric structure of covariance features.
\end{itemize}

(3) \textbf{Adaptive GNNs.} We compare \method{} with four adaptive GNNs:

\begin{itemize}
    \item \textbf{ACE-HGNN}~\cite{fu2021ace}:ACE-HGNN is a geometric graph neural network that adaptively explores manifold curvature, enabling flexible representation learning by capturing heterogeneous geometric structures across graph data.
    \item \textbf{BEC-GNN}~\cite{hevapathige2025depth}: BEC-GNN is a graph neural network that leverages learnable Bakry–Émery curvature to adapt message passing depth, enabling flexible representation learning by dynamically controlling information propagation across graph structures.
    \item  \textbf{GNRF}~\cite{chen2025graph}: GNRF is a graph neural network that leverages Ricci flow to evolve node features from a curvature perspective, enabling adaptive information propagation by dynamically reshaping the underlying graph geometry.
    \item \textbf{ARGNN}~\cite{wang2026adaptive}: ARGNN is a graph neural network that learns anisotropic node-wise Riemannian metric tensors from node features and neighborhood means, enabling geometry-adaptive message passing by tailoring the local metric to each node.
\end{itemize}

\subsection{Implementation Details}

We implement \method{} and all baselines in PyTorch\footnote{https://pytorch.org/} and conduct all experiments on NVIDIA A100 GPUs. For the baselines, we use the official implementations released by the authors when available, and otherwise implement them following the original papers; their hyperparameters follow the settings reported in the corresponding papers. \method{} is trained with Adam using a learning rate of $1\times10^{-3}$ and a weight decay of $5\times10^{-4}$, with model selection on the validation set. We use $B=8$ geometric blocks, $K=4$ geometric prototypes, and $L=2$ recurrent steps with shared parameters. Structural signatures $u_i$ consist of the normalized degree and random-walk return probabilities. The controller $\Gamma_\theta$ has a single hidden layer and outputs factor matrices $U_{i,b}^l,V_{i,b}^l\in\mathbb{R}^{m\times r}$ of rank $r=4$. The log-triangular coordinates are clipped to the admissible domain $\mathcal Z$ with $a_{\max}=2.0$, $\ell_{\min}=-5.0$, and $\ell_{\max}=5.0$. For the geometric update, we set the residual-target mixing weight $\lambda=0.5$ and the correction strength $\gamma=0.1$; the neighborhood weighting temperature is $\tau=1.0$. We report accuracy (ACC) for node classification and ROC-AUC for link prediction; for graph classification, we report ACC on PROTEINS, Mutagenicity, NCI1, and FRANKENSTEIN, and ROC-AUC on BBBP and ogbg-molhiv. Results for node classification and link prediction are averaged over $10$ random splits, and graph classification uses $10$-fold cross-validation with the mean performance across folds.

\section{Algorithm}

The overall training and inference process of the proposed \method{} is shown in Algorithm~\ref{alg:overall}.

\section{Complexity Analysis}

In this section, we analyze the computational complexity of the proposed \method{}. Let $n$ and $|E|$ denote the numbers of nodes and edges, respectively, $K$ the number of prototypes, $L$ the number of recurrent steps, and $d$ the hidden dimension. We partition the feature space into $B$ blocks of size $m$ such that $d = Bm$. The prototype atlas initialization incurs a complexity of $\mathcal{O}(n \cdot K \cdot d \cdot m)$. During the $L$ recurrent steps, edge-level message passing performs batched triangular solves and residual outer products on block-diagonal frames with a cost of $\mathcal{O}(L \cdot |E| \cdot d \cdot m)$, while node-level geometric updates require stabilized Cholesky factorizations of $m \times m$ matrices, resulting in $\mathcal{O}(L \cdot n \cdot d \cdot m^2)$. Together with the $\mathcal{O}(L \cdot n \cdot d^2)$ cost of the shared feature transformations, the overall computational complexity simplifies to $\mathcal{O}(L \cdot d \cdot m \cdot |E| + L \cdot d \cdot (d + m^2) \cdot n)$. In particular, the block-diagonal parameterization reduces the node-wise geometric operations from $\mathcal{O}(d^3)$ to $\mathcal{O}(d \cdot m^2)$, keeping the cost of \method{} linear in the numbers of nodes and edges.

\begin{table*}[t]
\centering
\small
\caption{
Cross-task aggregate comparison across all 18 reported dataset--task settings. \textbf{Bold} indicates the best result.
}
\label{tab:cross_task_aggregate}
\vspace{0.1cm}

\setlength{\tabcolsep}{3.0pt}
\renewcommand{\arraystretch}{1.12}
\setlength{\extrarowheight}{0.2pt}

\resizebox{0.85\textwidth}{!}{
\begin{tabular}{
c|
>{\centering\arraybackslash}m{1.70cm}|
ccccccc
}
\toprule
Type
& Model
& NC Avg.
& LP Avg.
& GC Avg.
& Overall Avg.
& \shortstack{Avg. Rank}
& \shortstack{Avg. Gain}
& \shortstack{$p_{\mathrm{Holm}}$} \\
\midrule

\multirow{7}{*}{
    \centering
    \rotatebox{90}{
        \shortstack{General \\ GNNs}
    }
}
& GCN
& 87.7 & 90.6 & 76.3 & 84.9
& 13.2 & $+5.7$
& $2.91\times10^{-15}$ \\

& GIN
& 87.1 & 90.0 & 78.4 & 85.2
& 13.3 & $+5.4$
& $1.82\times10^{-15}$ \\

& ML$^2$-GCL
& 87.9 & 96.1 & 80.6 & 88.2
& 9.4 & $+2.4$
& $1.83\times10^{-7}$ \\

& AMPs
& 89.8 & 96.1 & 81.9 & 89.3
& 5.0 & $+1.3$
& $2.07\times10^{-2}$ \\

& WaveGC
& 89.7 & 96.6 & 81.7 & 89.4
& 4.4 & $+1.2$
& $4.17\times10^{-2}$ \\

& SPARROW
& 87.8 & 96.5 & 81.5 & 88.6
& 7.6 & $+1.9$
& $6.45\times10^{-5}$ \\

& G$^2$Former
& 89.2 & 96.5 & 81.6 & 89.1
& 5.3 & $+1.5$
& $1.55\times10^{-2}$ \\

\midrule

\multirow{3}{*}{
    \centering
    \rotatebox{90}{
        \shortstack{Manifold \\ GNNs}
    }
}
& HGCN
& 86.5 & 92.2 & 78.4 & 85.7
& 13.3 & $+4.9$
& $1.82\times10^{-15}$ \\

& D-GCN
& 88.3 & 94.9 & 79.9 & 87.7
& 10.3 & $+2.8$
& $6.01\times10^{-9}$ \\

& SPDGNN
& 88.4 & 94.1 & 80.7 & 87.7
& 10.0 & $+2.8$
& $1.76\times10^{-8}$ \\

\midrule

\multirow{4}{*}{
    \centering
    \rotatebox{90}{
        \shortstack{Adaptive \\ GNNs}
    }
}
& ACE-HGNN
& 88.5 & 96.4 & 80.4 & 88.4
& 8.5 & $+2.1$
& $3.90\times10^{-6}$ \\

& BEC-GNN
& 89.1 & 96.6 & 80.0 & 88.6
& 7.4 & $+2.0$
& $1.09\times10^{-4}$ \\

& GNRF
& 88.9 & 96.1 & 81.4 & 88.8
& 6.9 & $+1.7$
& $3.90\times10^{-4}$ \\

& ARGNN
& 90.1 & 96.6 & 81.5 & 89.4
& 4.4 & $+1.2$
& $4.17\times10^{-2}$ \\

\midrule

&
\method{}
& \textbf{91.3}
& \textbf{97.6}
& \textbf{82.8}
& \textbf{90.5}
& \textbf{1.0}
& --
& -- \\

\bottomrule
\end{tabular}
}

\vspace{-0.1cm}
\end{table*}

\begin{figure*}[t]
    \centering

    \begin{subfigure}[t]{0.245\textwidth}
        \centering
        \includegraphics[width=\linewidth]
        {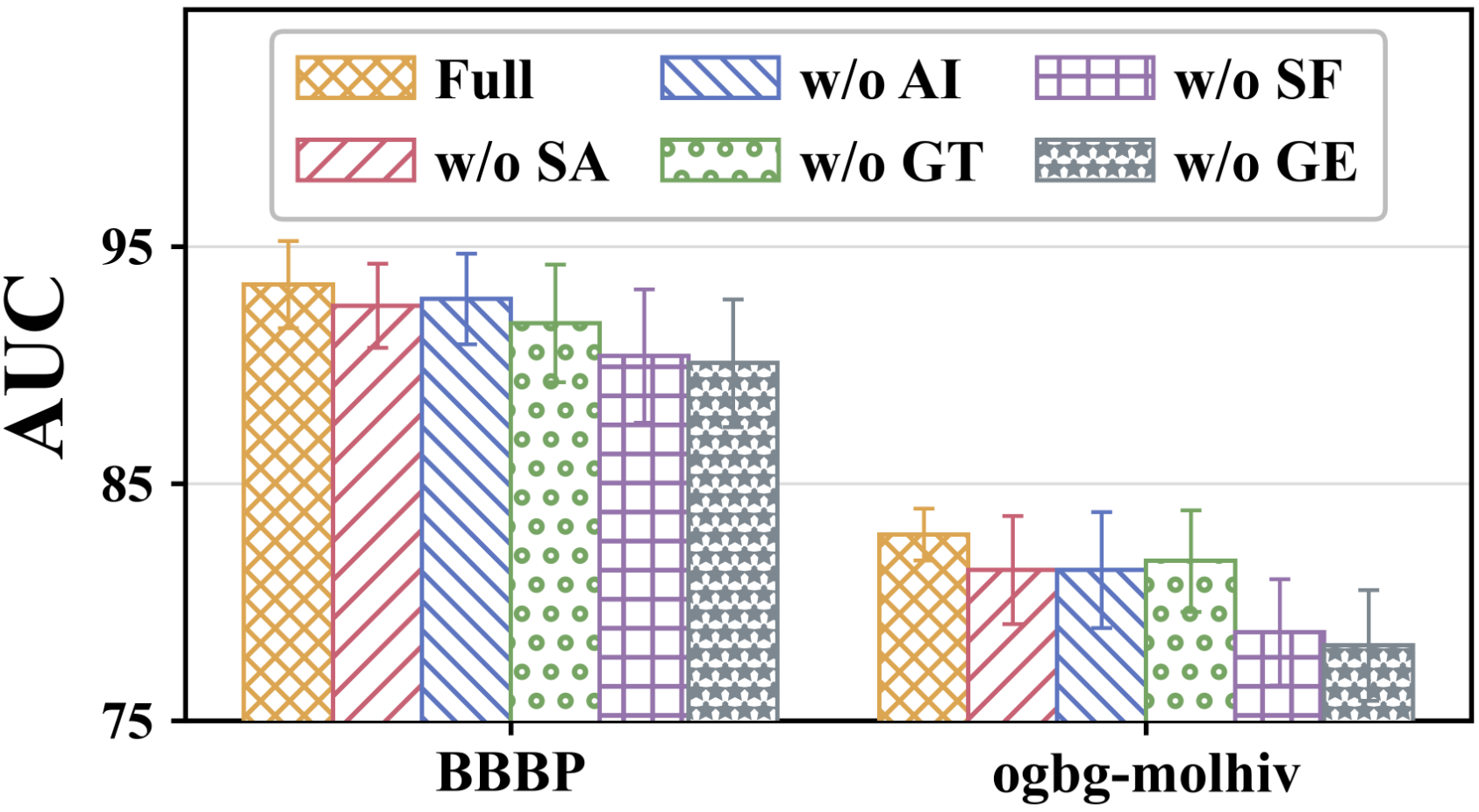}
        \caption{BBBP and ogbg-molhiv}
        \label{fig:extended_ablation_node}
    \end{subfigure}
    \hfill
    \begin{subfigure}[t]{0.245\textwidth}
        \centering
        \raisebox{-0.018cm}{\includegraphics[width=\linewidth,height=0.551\linewidth]
        {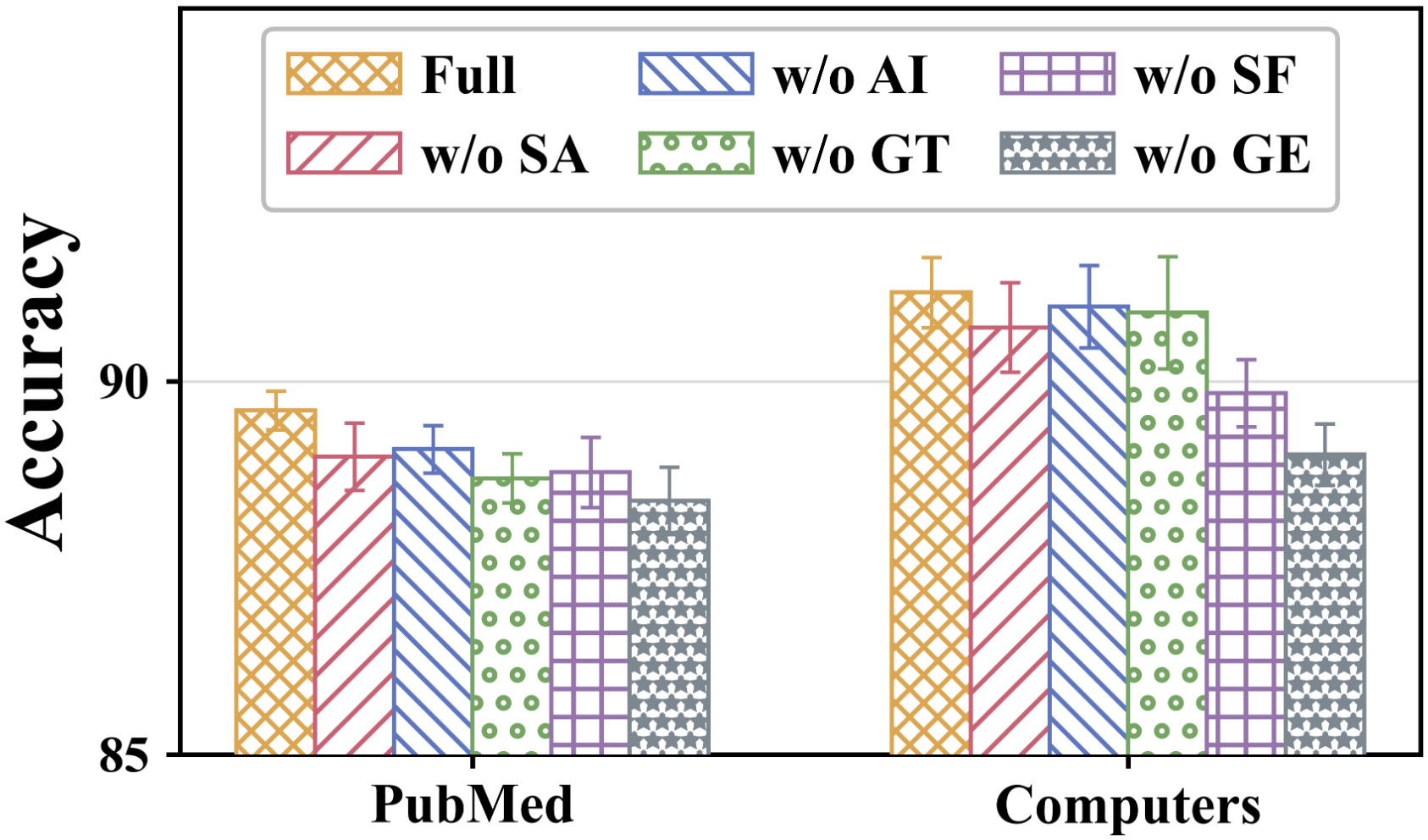}}
        \caption{PubMed and Computers}
        \label{fig:extended_ablation_graph}
    \end{subfigure}
    \hfill
    \begin{subfigure}[t]{0.245\textwidth}
        \centering
        \includegraphics[width=\linewidth,height=0.56\linewidth]
        {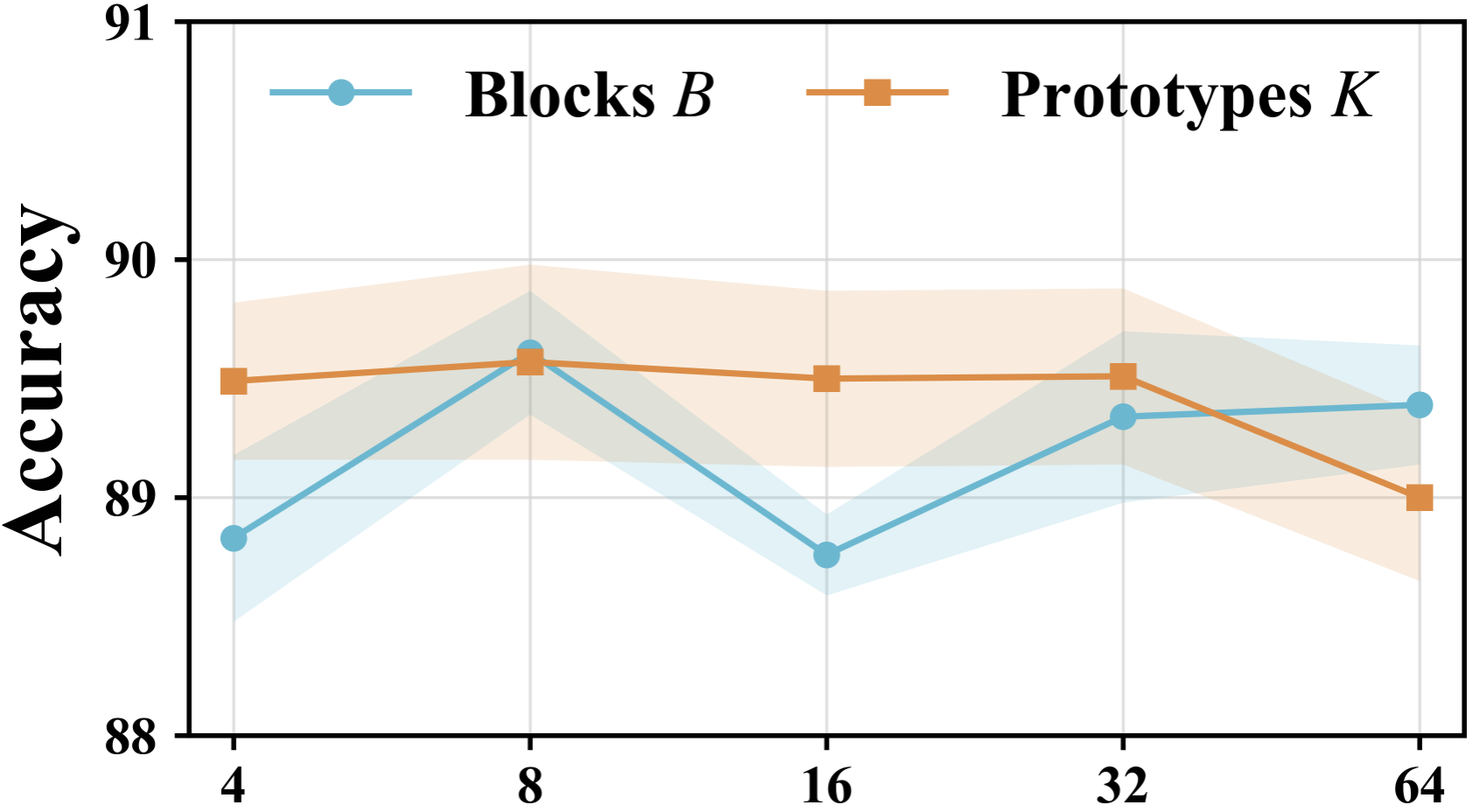}
        \caption{PubMed}
        \label{fig:extended_sensitivity_pubmed}
    \end{subfigure}
    \hfill
    \begin{subfigure}[t]{0.245\textwidth}
        \centering
        \includegraphics[width=\linewidth,height=0.565\linewidth]
        {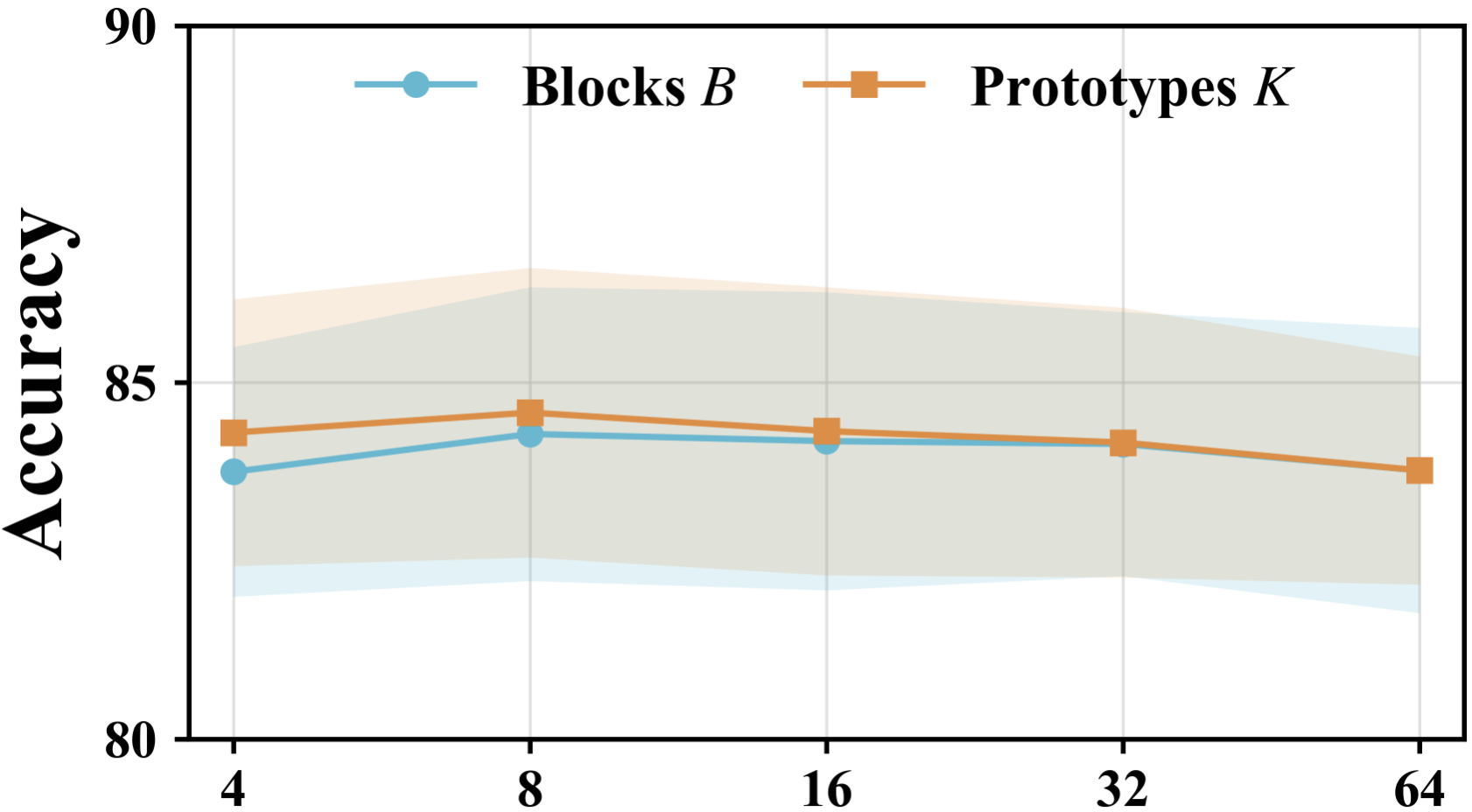}
        \caption{Mutagenicity}
        \label{fig:extended_sensitivity_mutag}
    \end{subfigure}

    \vspace{-0.15cm}
    \caption{Ablation studies on BBBP and ogbg-molhiv in (a), and PubMed and Computers in (b); sensitivity to geometric blocks $B$ and atlas prototypes $K$ on PubMed in (c) and Mutagenicity in (d).}
    \label{fig:extended_ablation_sensitivity}
    \vspace{-0.3cm}
\end{figure*}

\section{More experimental results}

\subsection{Cross-Task Aggregate Analysis}

We further summarize the cross-task aggregate results in Table~\ref{tab:cross_task_aggregate}. Each dataset--task pair is treated as one evaluation setting. NC Avg., LP Avg., and GC Avg. report the average performance within the corresponding task category, while Overall Avg. aggregates all settings. Avg. Rank is computed by ranking the methods within each setting and then averaging their ranks, whereas Avg. Gain measures the mean percentage-point improvement of \method{} over each baseline. The final column reports the Holm-adjusted post-hoc $p$-value derived from the average-rank differences following the global Friedman test, which yields $\chi^2=169.2$ with $14$ degrees of freedom and rejects the hypothesis of equal performance across methods.

\method{} achieves the strongest average within every task category and overall, together with an average rank of $1.0$, since it ranks first in all $18$ settings. Its gains over the baselines are positive throughout and grow from the strongest general and adaptive GNNs to the manifold-based and classical GNNs, and all $14$ pairwise comparisons are significant at the $0.05$ level after Holm correction. The closest baselines, WaveGC and ARGNN, share the smallest rank difference, and their separation from \method{} remains significant. Among the baseline families, adaptive GNNs and the strongest general GNNs form the leading group, whereas manifold-based GNNs trail on graph classification in particular. Overall, the gains are distributed across tasks and datasets rather than concentrated in a few favorable settings.

\begin{table}[t]
\centering
\caption{Time consumption of different methods in the training stage for each epoch (in seconds).}
\label{tab:time}
\resizebox{0.7\linewidth}{!}{
\begin{tabular}{ccccccc}
\toprule
Methods & PubMed & CS & Computers & NCI1 & Mutagenicity & ogbg-molhiv \\
\midrule
GCN
& 0.0086 & 0.0167 & 0.0153 & 0.2094 & 0.2312 & 2.1633 \\

AMPs
& 0.0393 & 0.0478 & 0.0761 & 0.3170 & 0.3173 & 2.8410 \\

G$^2$Former
& 0.0637 & 0.0739 & 0.0640 & 0.2011 & 0.2173 & 2.2996 \\

SPDGNN
& 0.0277 & 0.0321 & 0.0283 & 0.2073 & 0.2287 & 2.3973 \\

ARGNN
& 0.0902 & 0.1573 & 0.4328 & 0.2437 & 0.3383 & 2.5373 \\

\method{}
& 0.0683 & 0.1207 & 0.1810 & 0.4391 & 0.5357 & 6.0456 \\
\bottomrule
\end{tabular}
}
\end{table}

\begin{table}[t]
\centering
\caption{GPU memory consumption of different methods in the training stage (in GB).}
\label{tab:memory}
\resizebox{0.7\linewidth}{!}{
\begin{tabular}{ccccccc}
\toprule
Methods & PubMed & CS & Computers & NCI1 & Mutagenicity & ogbg-molhiv \\
\midrule
GCN
& 0.7 & 1.3 & 1.3 & 0.8 & 0.8 & 0.9 \\

AMPs
& 2.7 & 4.9 & 5.0 & 1.3 & 1.6 & 2.0 \\

G$^2$Former
& 3.3 & 3.2 & 3.0 & 1.5 & 1.5 & 1.4 \\

SPDGNN
& 1.3 & 2.9 & 1.9 & 0.9 & 1.0 & 0.9 \\

ARGNN
& 4.8 & 8.3 & 21.3 & 2.7 & 2.6 & 3.8 \\

\method{}
& 6.4 & 10.9 & 15.2 & 3.5 & 3.0 & 4.8 \\
\bottomrule
\end{tabular}
}
\end{table}

\subsection{More ablation study}~\label{sec:more_ablation}

We further extend the ablation study to PubMed and Computers for node classification and to BBBP and ogbg-molhiv for graph classification, covering different tasks and graph scales. As shown in Fig.~\ref{fig:extended_ablation_sensitivity}(a) and (b), the full model achieves the best performance on all four datasets. Freezing the geometric state (w/o GE) causes the largest degradation on every dataset, with more pronounced drops on the molecular benchmarks. Removing second-order feedback (w/o SF) causes the next largest drop on BBBP, ogbg-molhiv, and Computers, whereas on PubMed it is comparable to disabling frame transport (w/o GT). This ordering matches the main ablation and highlights the benefit of adapting propagation geometry during message passing and the contribution of residual statistics beyond task-supervised corrections. Disabling frame transport, removing structural signatures (w/o SA), and removing atlas initialization (w/o AI) yield smaller, dataset-dependent decreases, indicating that these components provide complementary improvements.

\subsection{More sensitivity study}~\label{sec:more_sensitivity}

To assess whether \method{} relies on narrowly tuned geometric capacity, we further vary the number of geometric blocks $B$ and atlas prototypes $K$ on PubMed and Mutagenicity. As shown in Fig.~\ref{fig:extended_ablation_sensitivity}(c) and (d), performance remains within a relatively narrow range across moderate values of both hyperparameters. On PubMed, varying $B$ produces modest non-monotonic fluctuations, while performance remains stable over a broad range of $K$ before decreasing at the largest setting. Mutagenicity exhibits an even flatter profile for both $B$ and $K$, followed by a mild decline under excessive capacity. Overall, increasing geometric granularity or atlas size does not yield systematic improvements, indicating that moderate block partitioning and a compact prototype atlas are sufficient for stable performance across different task settings.

\subsection{Efficiency and Resource Consumption Analysis}

We further evaluate the training efficiency of different methods in terms of per-epoch training time and GPU memory consumption. As shown in Tables~\ref{tab:time} and~\ref{tab:memory}, on the three node-classification datasets, \method{} trains faster than ARGNN, and the advantage grows with graph size, reaching its largest margin on Computers, while lightweight GNNs such as GCN and SPDGNN remain the cheapest. On the graph-classification datasets, \method{} requires more time than ARGNN and GCN, and the gap widens with the number of graphs, peaking on ogbg-molhiv. GPU memory consumption follows the same pattern: \method{} uses more memory than the compared baselines on most datasets, while remaining below ARGNN on Computers, and its memory on the graph-classification datasets stays close to that of ARGNN. The additional cost comes from the edge-level transport and residual outer products and the node-level factorizations, which are computed at every recurrent step. Across all datasets, the overhead relative to ARGNN remains a moderate constant factor rather than growing with graph size.

\section{Visualization}

\begin{figure}[h]
    \vspace{-0.3cm}
    \centering
    
    \begin{subfigure}{0.24\linewidth}
        \centering
        \includegraphics[width=0.9\linewidth]{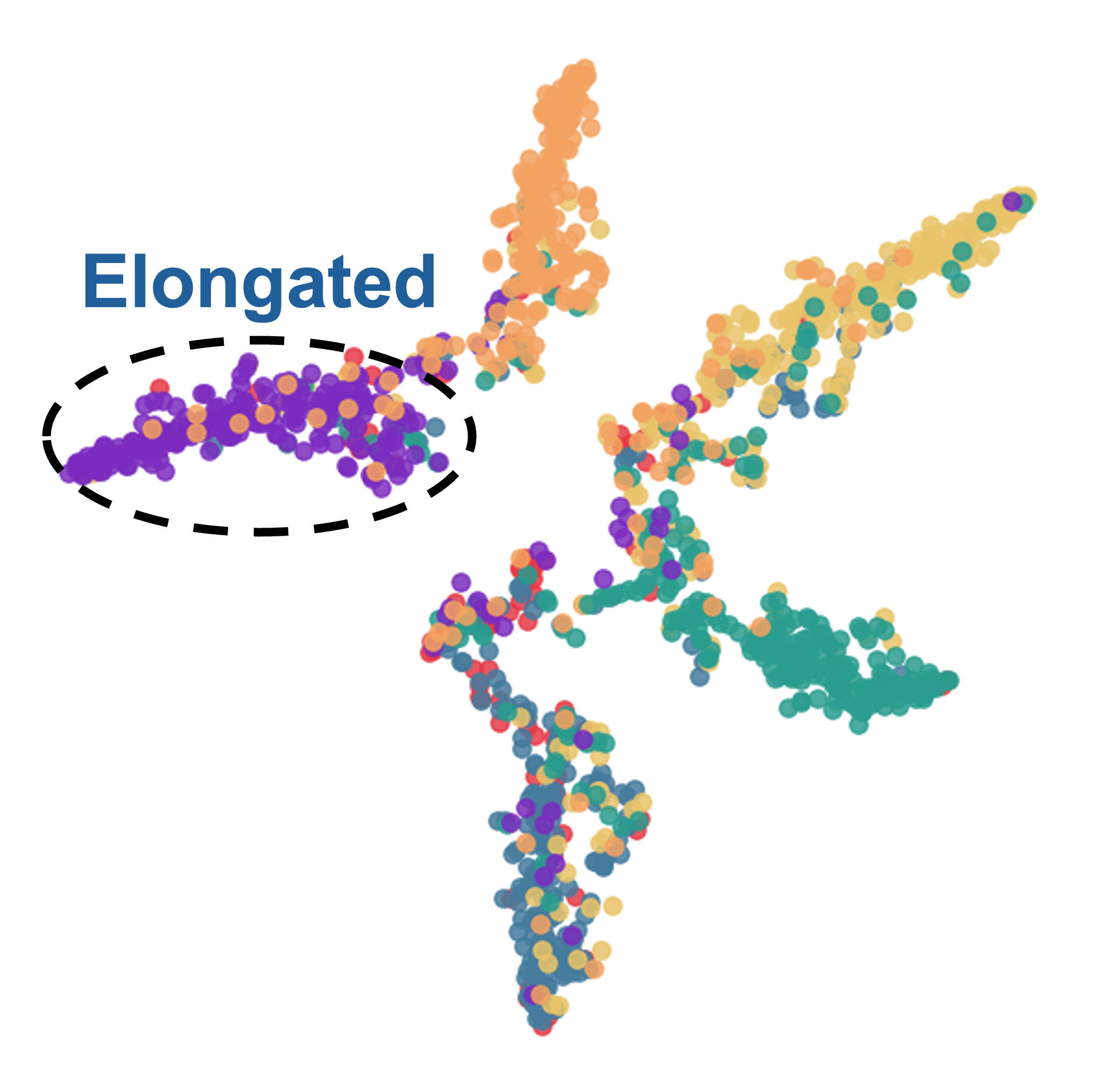}
        \caption{ARGNN}
    \end{subfigure}
    \hfill
    \begin{subfigure}{0.24\linewidth}
        \centering
        \includegraphics[width=0.9\linewidth]{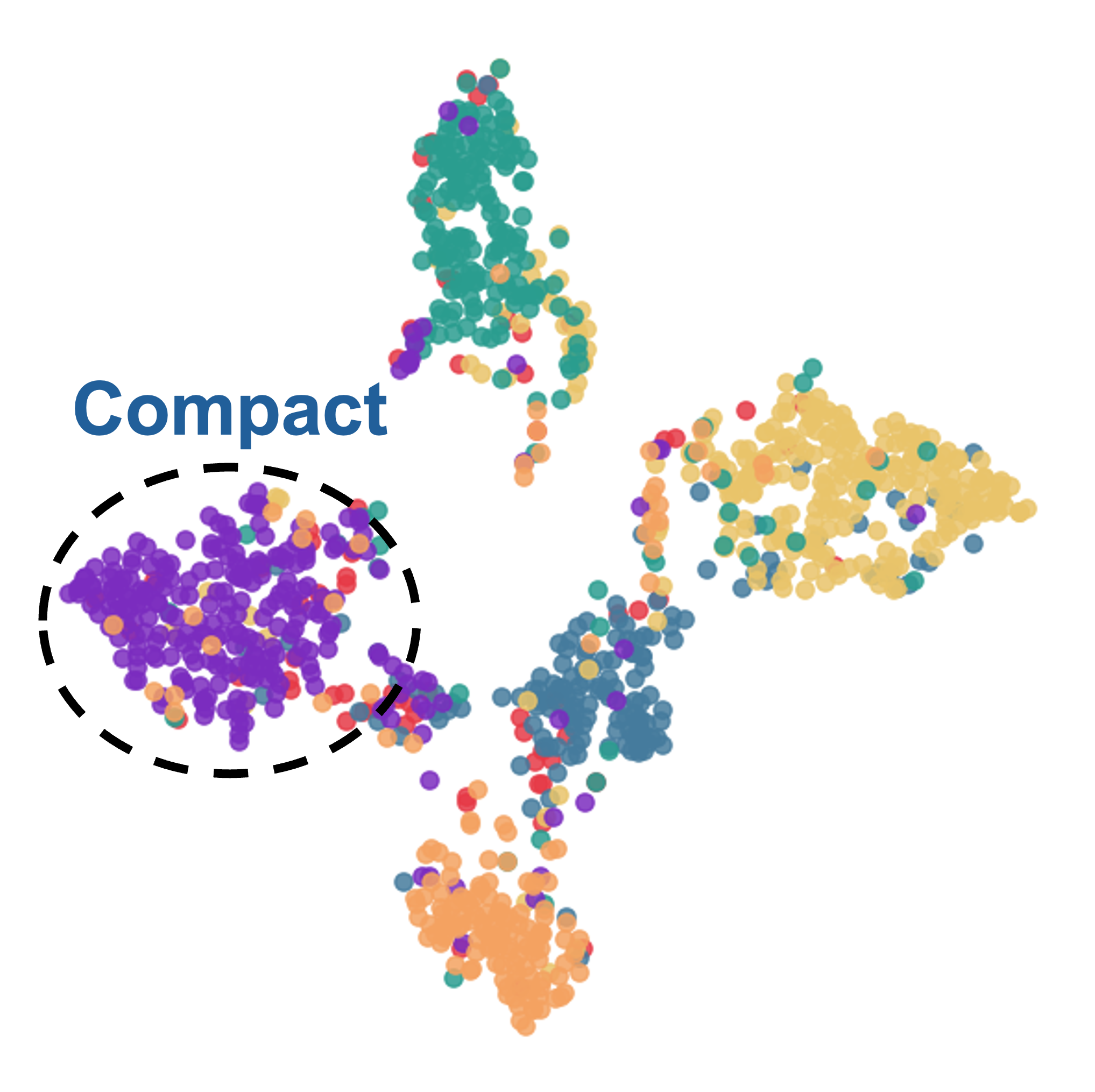}
        \caption{\method{}}
    \end{subfigure}
    \hfill
    \begin{subfigure}{0.24\linewidth}
        \centering
        \includegraphics[width=\linewidth]{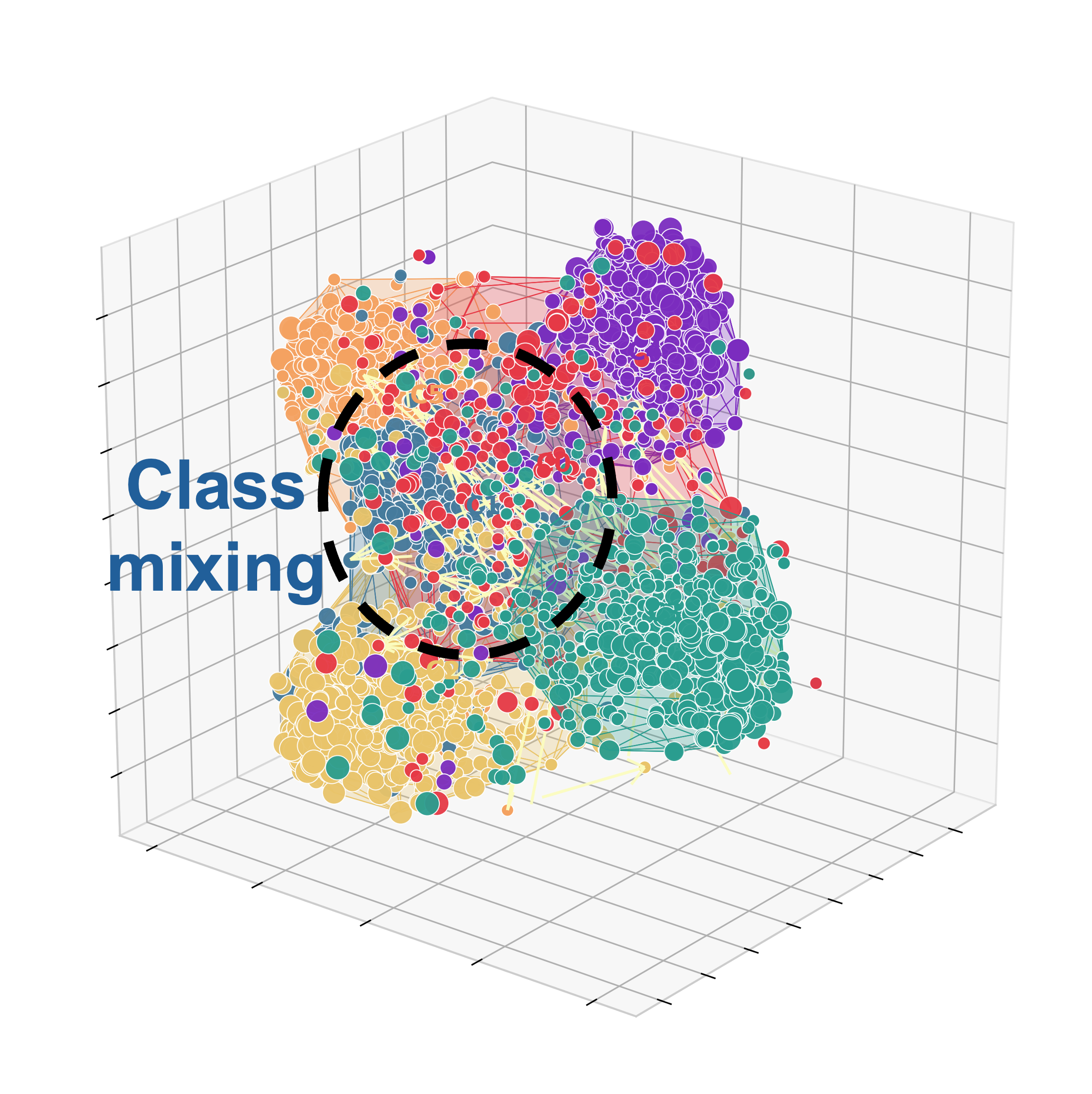}
        \caption{Correction Off}
    \end{subfigure}
    \hfill
    \begin{subfigure}{0.24\linewidth}
        \centering
        \includegraphics[width=\linewidth]{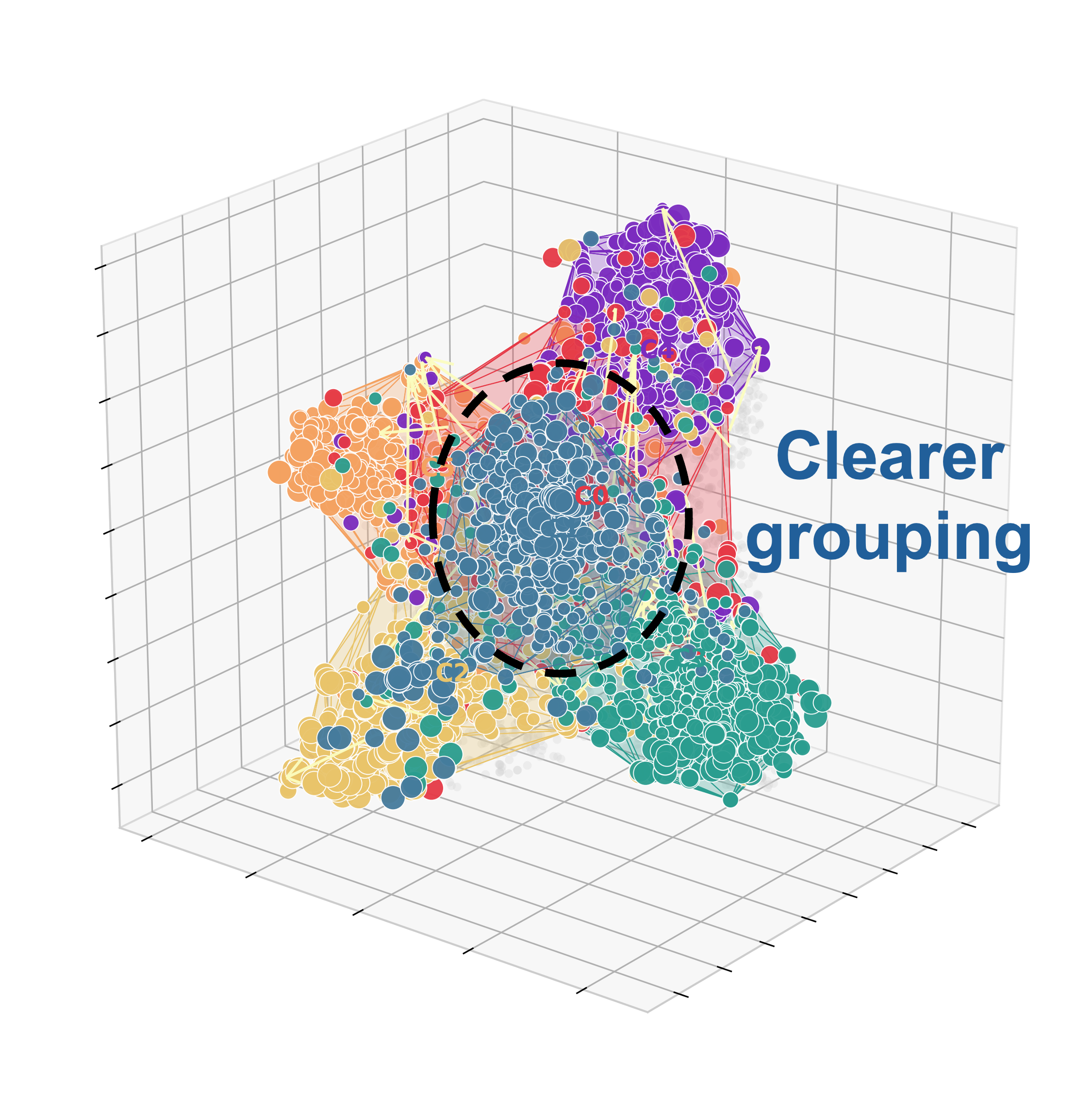}
        \caption{Correction On}
    \end{subfigure}
    
    \vspace{-0.1cm}
    \caption{(a), (b) show t-SNE visualizations of node representations learned by ARGNN and \method{}. (c), (d) compare representations with the learned geometric correction disabled and enabled.}
    \label{fig:tsne_citeseer}
\end{figure}

Figure~\ref{fig:tsne_citeseer}(a), (b) visualize node representations from ARGNN and \method{} on CiteSeer using 2D t-SNE. The highlighted class cluster is elongated under ARGNN and more compact under \method{}, as indicated by the dashed ellipses. To examine the effect of geometric correction, Fig.~\ref{fig:tsne_citeseer}(c), (d) compare representations from the same checkpoint with the correction disabled ($\gamma=0$) and enabled ($\gamma=0.1$). Both sets are visualized in a shared 3D t-SNE projection, with marker size encoding $\log\det(L_i)$. Enabling the correction yields clearer class grouping in the highlighted region, providing qualitative support for its role in shaping task-relevant representations.

\begin{algorithm}[t]
\caption{Training and inference of \method{}}
\label{alg:overall}
\small
\begin{algorithmic}[1]
\REQUIRE Training graph(s) with task-specific supervision,
test inputs, hidden dimension $d=Bm$,
number of prototypes $K$, recurrent steps $L$,
temperature $\tau$, update weights $(\lambda,\gamma)$,
spectral regularization $\epsilon_s$,
and admissible coordinate domain $\mathcal Z$.
\ENSURE Test predictions $\{p_i\}$, $\{p_G\}$, or $\{p_{uv}\}$
for node classification, graph classification,
or link prediction, respectively.

\STATE \textbf{Stage 1: Structural Encoding and Model Initialization}
\STATE Construct fixed structural signatures $U$ from
normalized degree and random-walk return probabilities
for each input graph.
\STATE Initialize the prototype atlas, feature projection,
structural gate, message transformation, feature gate,
controller $\Gamma_\theta$, and task-specific readout.
\STATE Share all propagation and controller parameters across the $L$ recurrent steps.

\STATE \textbf{Stage 2: End-to-End Training}
\WHILE{not converged}
    \STATE Select training input(s) and supervision
    for the current task.
    \FOR{each input graph $G=(V,E,X)$}
        \STATE Initialize $\xi_i^0$, $\alpha_i$, and
        $z_{i,b}^0$ for all nodes $i$ and blocks $b$
        (Eq.~\eqref{eq:prototype_initialization}).
        \label{alg:geof_state_init}
        \STATE Reconstruct $L_{i,b}^0$ and assemble $L_i^0$
        (Eqs.~\eqref{eq:block_geometry}
        and~\eqref{eq:node_geometry}).

        \FOR{$l=0,\ldots,L-1$}
            \STATE \textit{i. Triangular Frame Transport}
            \STATE Compute $h_i^l$, $e_{ij}^l$, and
            $\omega_{ij}^l$ for
            $j\in\widetilde{\mathcal N}(i)$
            (Eq.~\eqref{eq:weights}).
            \STATE Transform features with
            $\phi(\xi_i^l)=W_\phi\xi_i^l$
            and obtain aligned messages
            $m_{j\rightarrow i,b}^l$ by triangular solves
            (Eq.~\eqref{eq:cholesky_frame_transport}).
            \STATE Concatenate blockwise messages and
            compute the aligned aggregate $\bar{\xi}_i^l$
            (Eq.~\eqref{eq:aligned_aggregation}).
            \STATE Compute the feature gate $r_i^l$
            and update $\xi_i^{l+1}$
            (Eq.~\eqref{eq:feature_gate}).

            \STATE \textit{ii. Second-Order Residual Feedback}
            \STATE Compute the residuals
            $\delta_{ij,b}^l
            =m_{j\rightarrow i,b}^l
            -(\phi(\xi_i^l))_b$
            and accumulate the regularized
            second-moment matrix $C_{i,b}^l$
            with the same weights $\omega_{ij}^l$
            (Eq.~\eqref{eq:second_moment}).
            \STATE Compute
            $R_{i,b}^l=\operatorname{SChol}(C_{i,b}^l)$
            and construct the log-triangular target
            $\widehat z_{i,b}^l$
            (Eq.~\eqref{eq:coordinate_target}).

            \STATE \textit{iii. Task-Guided Geometry Evolution}
            \STATE Form
            $q_{i,b}^l
            =[(\xi_i^l)_b\|(\bar{\xi}_i^l)_b\|u_i]$,
            obtain $U_{i,b}^l$, $V_{i,b}^l$,
            and $d_{i,b}^l$ from $\Gamma_\theta$,
            and construct the geometric correction
            $\Delta z_{i,b}^l$
            (Eq.~\eqref{eq:correction}).
            \STATE Update
            $z_{i,b}^{l+1}
            =\Pi_{\mathcal Z}\!\left(
            (1-\lambda)z_{i,b}^l
            +\lambda\widehat z_{i,b}^l
            +\gamma\Delta z_{i,b}^l\right)$
            (Eq.~\eqref{eq:task_guided_geometry_update}).
            \STATE Reconstruct $L_{i,b}^{l+1}$
            and assemble $L_i^{l+1}$
            (Eqs.~\eqref{eq:block_geometry}
            and~\eqref{eq:node_geometry}).
        \ENDFOR

        \STATE Obtain the final environment representations
        $h_i^L=L_i^L\xi_i^L$.
        \STATE Compute $p_i$, $p_G$, or $p_{uv}$
        using the corresponding task-specific readout
        (Eqs.~\eqref{eq:node_objective},
        \eqref{eq:graph_objective},
        and~\eqref{eq:link_objective}).
        \label{alg:geof_prediction}
    \ENDFOR
    \STATE Evaluate $\mathcal L_{\mathrm{node}}$,
    $\mathcal L_{\mathrm{graph}}$, or
    $\mathcal L_{\mathrm{link}}$ for the selected task
    using its training supervision.
    \STATE Backpropagate through the $L$ recurrent steps
    and jointly update all trainable parameters.
\ENDWHILE

\STATE \textbf{Stage 3: Inference}
\STATE Fix the trained model parameters.
\FOR{each test input graph $G$}
    \STATE Apply the initialization, recurrent evolution,
    and readout in
    lines~\ref{alg:geof_state_init}--\ref{alg:geof_prediction}
    using the graph's structural signatures $U$.
    \STATE Retain $p_i$ for test nodes, $p_G$ for test graphs,
or $p_{uv}$ for test candidate pairs, according to the task.
\ENDFOR
\RETURN $\{p_i\}$, $\{p_G\}$, or $\{p_{uv}\}$
for the corresponding test inputs.
\end{algorithmic}
\end{algorithm}

\end{document}